\documentclass{article}

\usepackage[main, final]{neurips_2026}

\newcommand{\blfootnote}[1]{%
  \begingroup
  \renewcommand{\thefootnote}{}
  \footnotetext[0]{#1}
  \endgroup
}

\usepackage[utf8]{inputenc} 
\usepackage[T1]{fontenc} 
\usepackage{hyperref} 
\usepackage{url} 
\usepackage{booktabs} 
\usepackage{amsfonts} 
\usepackage{nicefrac} 
\usepackage{microtype} 
\usepackage{xcolor} 
\usepackage{subcaption}
\usepackage{float}

\usepackage{amsmath,amssymb,amsthm}
\usepackage{mathtools}
\usepackage{bm}
\usepackage{algorithm}
\usepackage{algorithmic}
\usepackage{natbib}
\usepackage{enumitem}
\usepackage{multirow}

\numberwithin{equation}{section}
\newtheorem{theorem}{Theorem}[section]
\newtheorem{lemma}[theorem]{Lemma}
\newtheorem{remark}[theorem]{Remark}
\newtheorem{corollary}[theorem]{Corollary}
\newtheorem{proposition}[theorem]{Proposition}
\newtheorem{assumption}{Assumption}
\newtheorem{definition}[theorem]{Definition}

\DeclareMathOperator*{\argmin}{argmin}
\DeclareMathOperator{\rank}{rank}
\DeclareMathOperator{\tr}{tr}

\newcommand{\E}{\mathbb{E}}
\newcommand{\N}{\mathbb{N}}
\newcommand{\R}{\mathbb{R}}
\newcommand{\B}{\mathbb{B}}

\title{How Finite-Rank Bottlenecks Shape the Low-Rank Adaptation Landscape}

\author{Long Nguyen-Chi$^{\star,1}$, Quynh Nguyen$^1$, Thanh Nguyen-Cung$^1$, Binh T. Nguyen$^{\star,1,2}$\\
$^{1}$College of Engineering \& Computer Science, VinUniversity\\
$^{2}$Center for AI Research, VinUniversity, Hanoi, Vietnam.
}

\begin{document}

\maketitle
\begin{abstract}
Low-rank adaptation (LoRA) has become the standard method for parameter-efficient fine-tuning of large pretrained models, yet theoretical explanations for why low-rank updates suffice remain incomplete. Most existing analyses rely on the Neural Tangent Kernel (NTK) regime, which linearizes the network around the pretrained weights. In this work, we take a different approach that requires no linearization: our key observation is that, once the backbone is frozen, 
the loss sees each adapted weight matrix only through a finite-rank linear map induced by the frozen activations. Trace-norm regularization then forces the optimizer to pick a low-rank matrix among all matrices that produce the same observed output. This yields global minimizers whose ranks are controlled by the rank of this map, which we call the \emph{bottleneck rank}, rather than the sample size. For multi-head attention, we obtain tighter rank upper bounds by exploiting two additional invariances: row-wise softmax is insensitive to additive row constants in the score matrix, and value updates are observable only after attention weighting and frozen output projection. For the factorized LoRA objective with weight decay, we prove a complementary result on the nonconvex side: after an arbitrarily small generic positive semidefinite perturbation, every first-order stationary point is rank-deficient once the LoRA rank exceeds a threshold determined by the bottleneck rank. Finally, we show that global minimizers of the perturbed objective are near-optimal for the original trace-norm problem, and we provide uniform generalization bounds governed by the spectral structure of the frozen features.
\blfootnote{$^\star$Corresponding author.}
\end{abstract}

\section{Introduction}\label{sec:intro}

Fine-tuning large pretrained models on downstream tasks is effective but expensive: updating all parameters of a billion-scale model requires substantial memory and compute.
Parameter-efficient fine-tuning (PEFT) solves this problem by adapting a large pretrained model while updating only a small subset of parameters. Some of the representative approaches include adapter modules~\citep{houlsby2019parameter}, prefix tuning~\citep{li2021prefix}, prompt tuning, and low-rank adaptation (LoRA, \citealt{hu2022lora}).
Among these, LoRA stands out as an attractive option because it keeps the pretrained backbone frozen and modifies selected linear maps by adding trainable low-rank updates.
More formally, for a weight matrix $W_0\in\R^{m\times n}$, LoRA parameterizes the adapted weight as
\[
W=W_0+T,\qquad T=UV^\top,\qquad U\in\R^{m\times r},V\in\R^{n\times r},
\]
The rank $r$ is typically chosen by the practitioner as a hyperparameter rather than something derived from the fine-tuning objective itself \citep{hu2022lora,schulman2025lora}.
A fundamental theoretical question is therefore: 
\begin{quote}
\emph{When does the regularized fine-tuning objective itself admit a low-rank global minimizer, and what controls the rank?}
\end{quote}

Most existing theory addresses nearby questions through linearization or simplified training dynamics.
In the NTK regime, \citet{jang2024lora} shows that full fine-tuning admits a low-rank solution and that sufficiently overparameterized LoRA eliminates spurious local minima.
\citet{afzal2025linearization} analyze fine-tuning through linearization under an explicit $\ell_2$ bias toward the pretrained model.
Beyond classical linearization, \citet{kimlora} study LoRA dynamics and prove a dichotomy between convergence to a low-rank global minimizer and convergence to a qualitatively different high-rank solution.
These works are complementary to ours, but they do not answer a more structural question: what property of the objective function makes low-rank optima available in the first place?

\paragraph{Contributions.}
Our answer starts with a simple observation: once the backbone is frozen, the loss does not observe an adapted matrix $T$ directly, but only through a finite-rank linear map determined by the frozen activations.
Concretely, if $H=[h_1,\ldots, h_N]$ is the matrix of frozen input features across the $N$ training samples, the empirical loss depends on $T$ only through the product $TH$.
We call $\rank(\Phi)$, where $\Phi(T)=TH$, the \emph{bottleneck rank} of the adapted block. The same principle applies blockwise to multiple adapted stages and attention heads.
Trace-norm regularization then selects a low-rank representative among all matrices that induce the same observed output.
This yields exact low-rank existence results without linearizing the network around the pretrained weights.
For multi-head attention \citep{vaswani2017attention}, the same principle becomes sharper because attention introduces additional invariances: row-wise softmax is unchanged by additive row constants in the score matrix, and value updates are observable only after attention weighting and the frozen output projection.

\begin{table}[h]
\centering
\small
\caption{Comparison with the closest related theoretical analyses of LoRA.}
\label{tab:comparison}
\setlength{\tabcolsep}{3pt}
\begin{tabular}
{@{}p{0.35\linewidth}p{0.22\linewidth}p{0.20\linewidth}p{0.20\linewidth}@{}}
\toprule
& \textbf{This work} & \cite{jang2024lora} & \cite{kimlora} \\
\midrule
Requires NTK linearization & No & Yes & No \\
Multilayer multi-head attention bound & Yes (Thm.~\ref{thm:rank-attn}) & No & No \\ 
Spurious local minima excluded & No (Prop.~\ref{prop:nogo-new}) & Yes (NTK) & Partial \\
Additional loss assumptions & Differentiability and stagewise $C^1$ regularity & NTK-regime conditions & Restricted strong convexity and smoothness \\
Generalization bounds & Yes & Yes & No \\
\bottomrule
\end{tabular}
\end{table}
Our contribution is summarized as follows.
\begin{itemize}[leftmargin=1.75em]
\item\textbf{Low-rank representer theorem for trace-norm fine-tuning.}
For a frozen network in which an adapted block $T$ is observed only through a finite-rank linear map $\Phi(T)$ of bottleneck rank $d_\Phi$, we prove that the trace-norm-regularized problem has a global minimizer whose rank is controlled by $d_\Phi$; see Theorem~\ref{thm:rank-single}.
The multi-stage extension is Theorem~\ref{thm:rank-L-stage}.

\item\textbf{Rank bounds from attention invariances.} 
For multi-head attention, query and key updates are observable only through centered score subspaces because row-wise softmax is invariant to row-wise additive constants.
Value updates are observable only after attention weighting and frozen output projection.
Combining these invariances with projector compression gives the optimizer-dependent subspace rank bounds in Theorem~\ref{thm:rank-attn}.

\item\textbf{Generic first-order rank deficiency for factorized LoRA.}
For the weight-decayed LoRA factorization, we prove that after a generic positive semidefinite perturbation, full-column-rank first-order stationary points are excluded whenever the Pataki--Barvinok rank threshold exceeds the dimension of the associated family of symmetric KKT matrices; see Theorem~\ref{thm:generic-rankdef-master}.
The multistage and attention consequence are Theorem~\ref{thm:rankdef-L-stage} and Corollary~\ref{cor:rankdef-L-attn}. 
This is a first-order exclusion theorem, not a no-spurious-local-minima theorem; Proposition~\ref{prop:nogo-new} gives an explicit limitation.

\item\textbf{Perturbation transfer and generalization.}
We show that global minimizers of the perturbed factorized objective are near-optimal for the original trace-norm-regularized problem when the perturbation is small; see Theorem~\ref{thm:transfer}.
\end{itemize}

\section{Related works}\label{sec:related-work}

\paragraph{Linearized and kernel views of fine-tuning.} The closest prior work to ours is \cite{jang2024lora}, which studies LoRA in the NTK regime. Using $K_{\mathrm{out}}$ for the number of output coordinates in the linearized training predictions, their low-rank existence theorem gives a solution satisfying $\frac{r(r+1)}{2}\le K_{\mathrm{out}}N$.
Thus the rank scale in that analysis is $r=\lfloor\frac{\sqrt{8K_{\mathrm{out}}N+1}-1}{2}\rfloor$, or $r=O(\sqrt{K_{\mathrm{out}}N})$ in the scalar-output convention.
Our results improve this comparison in two ways.
First, our bottleneck reduction is exact for frozen upstream computation and does not linearize the network around the pretrained weights.
Second, the optimizer's rank scale is $r\le\min\{K_{\mathrm{out}},N\}$ in Attention, and this is better than \cite{jang2024lora}'s bound. \cite{afzal2025linearization} also analyzed fine-tuning through linearization under an explicit $\ell_2$ bias toward the pretrained model and quantified when nonlinear fine-tuning stays close to the linearized problem.
In contrast, our reduction is exact once upstream computation is frozen; no approximation-by-linearization argument is needed.

\paragraph{Optimization theory beyond linearization.}
Recent works also study LoRA outside the lazy regime.
~\cite{kimlora} analyzes the LoRA loss landscape and proves that training converges either to a low-rank global minimizer or to a qualitatively different high-rank solution.
\cite{xu2025_aistats_dynamics} study gradient flow for LoRA in matrix factorization and show how initialization and singular-space alignment shape the final error.
\cite{liu2025_iclr_landscape} compares the landscapes of LoRA, GaLore, and full-rank adaptation through Polyak--\L{}ojasiewicz-type regions and spurious local minima.
\cite{zhang2024_icml_riemlora} analyzes a Riemannian preconditioner for LoRA in the infinite-width setting.
These works ask how LoRA training behaves.
Our question comes earlier: before one analyzes dynamics, when does the frozen regularized objective itself admit low-rank optima, and when must first-order stationary points of the factored problem be rank-deficient?

\paragraph{Attention and Transformer expressivity.}
Since transformers are built around attention \citep{vaswani2017attention}, we instantiate our framework for $Q/K/V$ projections in attention modules.
Orthogonally, a rich literature studies the expressive power and limitations of self-attention and transformers \citep{yun2020transformers,likhosherstov2021expressive,hahn2020theoretical,zhoualgorithms}.
Complementary work studies what functions low-rank adaptation can represent; \cite{zengexpressive} characterized the expressive power of low-rank adaptation in several settings.
For Transformer architectures, \cite{hu2025computational} studied computational limitations of LoRA fine-tuning.

\section{Problem setup}\label{sec:setup}

We study fine-tuning of a pretrained network in which all weights are frozen except for a specified collection of linear maps.
The full notation tables are collected in Appendix~\ref{sec:notation-full}; here we introduce only what is needed to state the main results.

\paragraph{Single adapted block.}
For a frozen weight matrix $W_0 \in\R^{m\times n}$, we write the adapted weight as $W=W_0+T$ with update $T \in\R^{m\times n}$.
Given training data $S_{\mathrm{data}}=\{(x_i, y_i)\}_{i=1}^N$, let $h_i \in\R^n$ denote the frozen input activation at this layer for sample $x_i$, and define the \emph{frozen feature matrix}
\[
H\coloneqq [h_1,\ldots, h_N] \in\R^{n \times N}.
\]
When the input $h_i$ is independent of the update $T$ and all other parameters are fixed, the network output on sample $x_i$ depends on $T$ only through $Th_i$ (Lemma~\ref{lem:frozen-suffix-multi}).

Stacking across samples, the empirical loss depends on $T$ only through
\(
\Phi(T)\coloneqq TH\in\R^{m\times N}.
\)
The map $\Phi$ is linear, and its range has dimension exactly $m\rank(H)\le m\min\{n,N\}$.

\begin{definition}[Bottleneck rank]\label{def:bottleneck-rank}
For a linear map $\Phi:\R^{m\times n}\to\mathcal{E}$ through which the loss observes an adapted block $T$, the bottleneck rank is
\[
d_\Phi\coloneqq\rank(\Phi)=\dim\mathcal{R}(\Phi).
\]
\end{definition}
When $\Phi(T)=TH$ as above, we have $d_\Phi=m\rank(H)$.
The bottleneck rank quantifies how much information the frozen network extracts from the update $T$: any two updates $T, T'$ with $TH=T'H$ produce identical training predictions.

\paragraph{Trace-norm regularization and the LoRA factorization.}
We consider the nuclear-norm-regularized (trace-norm-regularized) objective
\begin{equation}\label{eqn:obj-nuc}
\min_{T\in\R^{m\times n}}\mathcal{L}^{\mathrm{vanilla}}(T)+\lambda\|T\|_\star,\qquad\lambda>0,
\end{equation}
where $\mathcal{L}^{\mathrm{vanilla}}(T)$ is the empirical loss with update $T$ and $\|T\|_\star$ denotes the nuclear norm.
The factorized LoRA objective with weight decay at rank $r$ is
\[
\mathcal{L}^{\mathrm{lora}}_\lambda(U,V)\coloneqq\mathcal{L}^{\mathrm{vanilla}}(UV^\top)+\frac{\lambda}{2}\bigl(\|U\|_F^2+\|V\|_F^2\bigr),\quad U\in\R^{m\times r},V\in\R^{n \times r}.
\]
By the standard equivalence between nuclear-norm regularization and balanced weight decay~\citep{recht2010guaranteed}, the factorized problem has the same global minimum as~\eqref{eqn:obj-nuc} restricted to rank $\le r$:
\[
\min_{U,V} \mathcal{L}^{\mathrm{lora}}_\lambda(U,V)
=
\min_{\substack{T \in\R^{m\times n}\\ \rank(T)\le r}}
\Bigl\{\mathcal{L}^{\mathrm{vanilla}}(T)+\lambda\|T\|_\star\Bigr\}.
\]
Thus, whenever the unrestricted trace-norm problem~\eqref{eqn:obj-nuc} admits a global minimizer of rank $\le r$, the rank-$r$ LoRA factorization incurs no loss of optimality.

\paragraph{Attention mechanism}
For multi-head attention with head set $\mathcal{A}$, head $a \in\mathcal{A}$ produces adapted queries, keys, and values on sample $i$:
\[
\widetilde{Q}_i^{(a)}=(W_{Q,0}^{(a)}+T_Q^{(a)}) H_{Q,i},\quad
\widetilde{K}_i^{(a)}=(W_{K,0}^{(a)}+T_K^{(a)}) H_{K,i},\quad
\widetilde{V}_i^{(a)}=(W_{V,0}^{(a)}+T_V^{(a)}) H_{V,i},
\]
where $H_{S,i}$ is the frozen token matrix entering projection $S \in\{Q, K,V\}$ for sample $i$.
Queries and keys share output dimension $d_k$; values have output dimension $d_v$.
The attention matrix and block output are
\[
A_i^{(a)}=\mathrm{softmax}_{\mathrm{row}}\left(\frac{(\widetilde{Q}_i^{(a)})^\top \widetilde{K}_i^{(a)}}{\sqrt{d_k}}+C_i^M\right),
\qquad
O_i=\sum_{a \in\mathcal{A}} W_O^{(a)} \widetilde{V}_i^{(a)} (A_i^{(a)})^\top,
\]
where $C_i^M$ is an attention mask and $W_O^{(a)}$ is the frozen output-projection block for head $a$.
We write $H_S\coloneqq [H_{S,1},\ldots, H_{S,N}]$ for the sample-stacked feature matrices.
For sample $i$, let $\varrho_{Q,i}$ and $\varrho_{K,i}$ denote the numbers of query and key token positions, respectively, so that $A_i^{(a)}\in\R^{\varrho_{Q,i}\times \varrho_{K,i}}$.
In the stage-indexed notation used below, at stage $t$, these quantities are written as $\varrho_{t,Q,i}$ and $\varrho_{t,K,i}$.

\paragraph{Multiple adapted layers.}
When $L$ consecutive attention modules are adapted (indexed so that larger $t$ denotes a more upstream stage), the frozen features at stage $t$ may depend on the updates at stages $\tau>t$.
This creates a stagewise bottleneck structure, formalized as follows.

\begin{definition}[$L$-stage block-linear dependence]\label{def:L-stage-block-lin-meas}
Let $L \in\N$ and let $\mathcal{S}_t$ be finite index sets for $t \in [L]$.
Let $\mathcal{Z}_{L+1}$ be a singleton and let $\mathcal{Z}_t$ be finite-dimensional inner-product spaces for $t \in [L]$.
We say that a loss map $\mathcal{L}(\{T_{t,S}\}_{t \in [L], S \in\mathcal{S}_t})$ has \emph{$L$-stage block-linear dependence} if there exist maps
\[
\Phi_{t,S}:\R^{m_{t,S}\times n_{t,S}}\times \mathcal{Z}_{t+1} \to\mathcal{E}_{t,S},
\qquad
g_t:\Big(\prod_{S \in\mathcal{S}_t}\mathcal{E}_{t,S}\Big)\times \mathcal{Z}_{t+1} \to\mathcal{Z}_t
\quad (t \in [L]),
\]
and $g_0:\mathcal{Z}_1 \to\R$ such that:
\begin{enumerate}[leftmargin=1.75em]
\item for every fixed $z_{t+1} \in\mathcal{Z}_{t+1}$, the map $T \mapsto \Phi_{t,S}(T, z_{t+1})$ is linear;
\item if $z_{L+1}$ denotes the unique point of $\mathcal{Z}_{L+1}$ and
\[
z_t \coloneqq g_t\Bigl(\{\Phi_{t,S}(T_{t,S}, z_{t+1})\}_{S \in\mathcal{S}_t}, z_{t+1}\Bigr),
\quad t=L, L-1,\ldots, 1,
\]
then $\mathcal{L}(\{T_{t,S}\})=g_0(z_1)$.
\end{enumerate}
\end{definition}

The key feature is that each block $T_{t,S}$ enters stage $t$ through a linear map $\Phi_{t,S}$, but the stage transition maps $g_t$ may couple the blocks in an arbitrary (possibly nonconvex) way.
Stagewise affine shifts (e.g., the frozen term $W_{t,S,0} H_{t,S}(z_{t+1})$) are absorbed into the stage maps $g_t$.
For attention, the concrete map is $\Phi_{t,(a,S)}(T, z)=TH_{t,S}(z)$, where $H_{t,S}(z)$ is the frozen token matrix at stage $t$.
An output projection adapted alongside $Q/K/V$ receives inputs that depend on those same-stage updates.

\paragraph{Stagewise bottleneck ranks.}
For $z_{t+1} \in\mathcal{Z}_{t+1}^{\mathrm{reach}}$ (the set of reachable upstream states; see Definition~\ref{def:reach-z} in the appendix), define
\[
d_{t,S}(z_{t+1})\coloneqq\dim\mathcal{R}(\Phi_{t,S}(\cdot, z_{t+1})),
\qquad
d_{t,S}^{\max}\coloneqq\sup_{z_{t+1} \in\mathcal{Z}_{t+1}^{\mathrm{reach}}} d_{t,S}(z_{t+1}).
\]
The recursive \emph{upstream dimension budget} is
\[
\beta_{L+1} \coloneqq 0,\qquad
\beta_t \coloneqq\beta_{t+1}+\sum_{S \in\mathcal{S}_t} d_{t,S}^{\max},
\quad t=L, L-1,\ldots, 1.
\]
Intuitively, $\beta_t$ upper-bounds the intrinsic dimension of the stage-$t$ reachable state family.

\paragraph{Standing assumptions.}
We use two mild assumptions throughout, which are exactly the standard evaluation-mode condition.

\begin{assumption}[Well-posed objective]\label{ast:meas-wp-new-new}
The loss function $g$ (or $g_0$ in $L$-stage formulation, through which the empirical loss factors) is bounded below and lower semicontinuous, and all maps in the stagewise decomposition are continuous.
\end{assumption}

\begin{assumption}[Deterministic forward pass]\label{ast:frozen-suffix-new}
The forward pass is deterministic per sample (no Dropout, no BatchNorm).
\end{assumption}

\section{Main results}\label{sec:results-new}

All of our results follow from the same principle: freezing the backbone makes each adapted block observable only through a finite-rank linear map, and trace-norm regularization selects a low-rank representative among all updates that produce the same observations.

\subsection{Low-rank minimizers for a single frozen block}\label{sec:single-block}

\begin{theorem}[Low-rank global minimizers from bottleneck rank]\label{thm:rank-single}
Let Assumptions~\ref{ast:meas-wp-new-new} and~\ref{ast:frozen-suffix-new} hold.
Suppose the empirical loss depends on $T$ only through a linear map $\Phi(T)$ of bottleneck rank $d_\Phi$.
Then the trace-norm-regularized problem~\eqref{eqn:obj-nuc} admits a global minimizer.
Moreover, for every global minimizer $\tilde{T}$, there exists a global minimizer $T^\star$ with
\[
\Phi(T^\star)=\Phi(\tilde{T})\qquad\text{and}\qquad\frac{\rank(T^\star)\bigl(\rank(T^\star)+1\bigr)}{2}\le d_\Phi.
\]
Consequently,
\[
\rank(T^\star)\le\min\left\{m, n,\left\lfloor\frac{\sqrt{8 d_\Phi+1}-1}{2}\right\rfloor\right\}.
\]
\end{theorem}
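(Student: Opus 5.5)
The argument has two parts. First, existence of a global minimizer. Second, a Pataki--Barvinok-type rank reduction carried out inside the fiber $\{T:\Phi(T)=\Phi(\tilde T)\}$, where the loss is constant.

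\emph{Existence.} Write $\mathcal{L}^{\mathrm{vanilla}}(T)=g(\Phi(T))$. By Assumption~\ref{ast:meas-wp-new-new}, $g$ is bounded below and lower semicontinuous, so $T\mapsto g(\Phi(T))+\lambda\|T\|_\star$ is lower semicontinuous. It is also coercive, since $\lambda\|T\|_\star\to\infty$ while $g\ge \inf g>-\infty$. Hence any sublevel set is closed and bounded in $\R^{m\times n}$, and a minimizer exists by Weierstrass.

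\emph{Reduction to a convex problem on the fiber.} Fix a global minimizer $\tilde T$ and set $b\coloneqq\Phi(\tilde T)$. Every $T$ with $\Phi(T)=b$ has the same loss value. Hence any minimizer of
\[
\min\{\|T\|_\star:\ \Phi(T)=b\}
\]
is again a global minimizer of \eqref{eqn:obj-nuc}; its objective value is at most that of $\tilde T$. Existence for this subproblem holds because it has a closed feasible set (containing $\tilde T$) and a coercive objective.

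\emph{SDP lift.} Use the standard semidefinite characterization
\[
\|T\|_\star=\min\Bigl\{\tfrac12(\tr X+\tr Y): \begin{pmatrix} X & T\\ T^\top & Y\end{pmatrix}\succeq 0\Bigr\}.
\]
The fiber problem becomes an SDP over $Z=\begin{pmatrix}X&T\\T^\top&Y\end{pmatrix}\in\mathbb{S}_+^{m+n}$, with linear objective $\frac12\tr Z$. Its constraints $\Phi(T)=b$ are linear in $Z$. Only $d_\Phi$ of these constraints are linearly independent: $b$ lies in $\mathcal{R}(\Phi)$, so after choosing a basis of $\mathcal{R}(\Phi)$ the constraint reduces to $d_\Phi$ scalar equations $\langle A_k,Z\rangle=c_k$. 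The SDP is bounded and attains its optimum, since the nuclear-norm problem does and the lift is exact.

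\emph{Rank reduction.} Take an optimal $Z$ of minimal rank, or simply run the Pataki/Barvinok face-reduction argument. Write $Z=PP^\top$ with $P\in\R^{(m+n)\times r}$ and $r=\rank Z$. If $r(r+1)/2>d_\Phi$, the linear system in symmetric $\Delta\in\mathbb{S}^r$ given by
\[
\langle P^\top A_kP,\Delta\rangle=0\ \ (k\le d_\Phi),
\]
has a nonzero solution. Perturbing $Z_s=P(I+s\Delta)P^\top$ stays feasible for small $|s|$. The objective is linear in $s$ and $Z$ is optimal, so the objective is constant along this line. Push $s$ until $I+s\Delta$ becomes singular; this yields an optimal point of strictly smaller rank, so one can repeat (or contradict minimality).

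Hence there is an optimal $Z^\star$ with $\rank Z^\star(\rank Z^\star+1)/2\le d_\Phi$. Its off-diagonal block $T^\star$ is a submatrix of $Z^\star$, so $\rank T^\star\le\rank Z^\star$. Moreover $T^\star$ satisfies $\Phi(T^\star)=b$ and $\|T^\star\|_\star\le\frac12\tr Z^\star=\min$, so it is a global minimizer. Since $r\mapsto r(r+1)/2$ is increasing, the bound transfers to $\rank T^\star$. Solving the quadratic inequality gives $\lfloor(\sqrt{8d_\Phi+1}-1)/2\rfloor$, and the trivial bound $\rank T^\star\le\min\{m,n\}$ completes the claim.

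The main obstacle is the bookkeeping in the lift: one must check that the constraint count is $d_\Phi$ rather than $\dim\mathcal{E}$, and that the rank bound on $Z$ transfers to $T$ while the minimum is preserved. Both follow from the remarks above. Assumption~\ref{ast:frozen-suffix-new} is used only to justify that the loss factors through $\Phi$.
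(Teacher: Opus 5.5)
Your proposal is correct and follows essentially the same route as the paper: lift to $Z\succeq 0$ on the fiber $\Phi(\bar Z)=\Phi(\tilde T)$, express this fiber constraint as $d_\Phi$ affine constraints, minimize the trace, obtain a low-rank optimal lift, and read off $T^\star=\bar Z^\star$. The differences are cosmetic: the paper takes an extreme point of the optimal face and cites the Barvinok--Pataki bound, whereas you take a minimal-rank optimal lift and prove that bound inline via the standard perturbation $Z_s=P(I+s\Delta)P^\top$.
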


\paragraph{Proof idea.}
Among all PSD lifts $Z \succeq 0$ with $\Phi(\bar{Z})=\Phi(\tilde{T})$, choose one that minimizes $\tr(Z)$.
Any extreme point of this spectrahedron satisfies the Barvinok--Pataki rank bound~\citep{pataki1998rank, barvinok2001remark}: its rank $r$ obeys $r(r+1)/2\le d_\Phi$.
The off-diagonal block $\bar{Z}$ recovers $T^\star$ with the same or smaller nuclear norm.
The full proof is in Appendix~\ref{sec:proof-single-block}.

\begin{remark}[Single-block rank scales]
For a single frozen block with $\Phi(T)=TH$, we have $d_\Phi\le m\rank(H)\le mN$.
When the observed output dimension is smaller than $\min\{K_{\mathrm{out}},\rank(H)\}\le\min\{K_{\mathrm{out}},N\}$, this has the same worst-case counting scale as $O(\sqrt{K_{\mathrm{out}}N})$ in \cite{jang2024lora}.
\end{remark}

\paragraph{Extension to $L$ consecutive adapted layers.}
The next theorem concerns multiple adapted stages, each potentially containing several heads and projections.
The same argument extends by stagewise induction: fix upstream stages, apply Theorem~\ref{thm:rank-single} to the conditional objective at each stage, and propagate downward.

\begin{theorem}[Low-rank global minimizers for $L$ stages]\label{thm:rank-L-stage}
Under Assumptions~\ref{ast:meas-wp-new-new} and~\ref{ast:frozen-suffix-new}, suppose the loss has $L$-stage block-linear dependence (Definition~\ref{def:L-stage-block-lin-meas}) and let all $\lambda_{t,S}>0$.
Then the trace-norm-regularized objective admits a global minimizer $\{T_{t,S}^\star\}$ satisfying, for every stage $t \in [L]$ and block $S \in\mathcal{S}_t$,
\[
\frac{r_{t,S}(r_{t,S}+1)}{2}\le d_{t,S}(z_{t+1}^\star)\le d_{t,S}^{\max},
\qquad r_{t,S}\coloneqq\rank(T_{t,S}^\star).
\]
\end{theorem}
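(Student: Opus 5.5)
We first establish existence of a global minimizer of the full objective
\[
F(\{T_{t,S}\})=\mathcal{L}(\{T_{t,S}\})+\sum_{t,S}\lambda_{t,S}\|T_{t,S}\|_\star .
\]
Since $g_0$ is bounded below (Assumption~\ref{ast:meas-wp-new-new}), the regularizer makes $F$ coercive. The composition $g_0\circ g_1\circ\cdots$ of continuous stage maps with a lower semicontinuous $g_0$ is lower semicontinuous. Hence $F$ attains its minimum on a compact sublevel set. Let $\{\tilde T_{t,S}\}$ be such a minimizer, with induced states $\tilde z_L,\dots,\tilde z_1$.

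We then replace blocks one stage at a time, starting at the most upstream stage $t=L$ and moving downstream to $t=1$, while keeping all observed quantities fixed. The invariant is that after processing stages $L,\dots,t$, the current tuple is still a global minimizer and induces exactly the same states $z_{t},\dots,z_1$, hence the same loss. At stage $t$ the upstream state $z_{t+1}^\star$ is already fixed: it equals $\tilde z_{t+1}$ because all upstream observations were preserved. For each $S\in\mathcal{S}_t$ separately, we consider the linear map $\Phi_{t,S}(\cdot,z^\star_{t+1})$, whose rank is $d_{t,S}(z^\star_{t+1})$. We apply the convex core of Theorem~\ref{thm:rank-single}, namely the trace-minimizing PSD lift plus the Barvinok--Pataki extreme-point argument. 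This gives $T^\star_{t,S}$ with
\[
\Phi_{t,S}(T^\star_{t,S},z^\star_{t+1})=\Phi_{t,S}(T_{t,S},z^\star_{t+1}),\qquad
\|T^\star_{t,S}\|_\star\le\|T_{t,S}\|_\star,
\]
and $r(r+1)/2\le d_{t,S}(z^\star_{t+1})$. More precisely, the nuclear-norm minimizer over the affine fiber $\{T:\Phi(T)=c\}$ equals the minimum trace over PSD lifts up to the factor $1/2$. The face of minimizers is a spectrahedron cut out by $d_\Phi$ independent linear constraints, plus the trace level constraint, which can be handled by noting that the optimal face's extreme points are extreme in the feasible set. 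Since the stage-$t$ observations $\{\Phi_{t,S}\}_S$ are unchanged, $z_t=g_t(\cdot,z_{t+1})$ is unchanged, and therefore all downstream states and the loss are unchanged. The regularizer does not increase, so the new tuple is again a global minimizer and the invariant propagates. The final bound $d_{t,S}(z^\star_{t+1})\le d^{\max}_{t,S}$ is immediate because $z^\star_{t+1}$ is reachable.

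The main subtlety is the order of replacement. Changing an upstream block could alter $z_{t+1}$ and hence the map $\Phi_{t,S}(\cdot,z_{t+1})$ at downstream stages. This is why we process upstream stages first and preserve each stage's observed outputs exactly: the state sequence never moves, so each stage's bottleneck map stays the one evaluated at the final $z^\star_{t+1}$. A second delicate point is the Barvinok--Pataki count. One must check that restricting to the minimal-trace face does not add a constraint that worsens the bound to $d_\Phi+1$. I would handle this as in Theorem~\ref{thm:rank-single}: take an extreme point of the optimal face, which is also an extreme point of the fiber spectrahedron $\{Z\succeq0:\Phi(\bar Z)=c\}$ defined by $d_\Phi$ constraints. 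The optimal face exists and has extreme points because trace is bounded below and the face contains no lines, as $Z\succeq0$ excludes lines. This extreme point therefore has rank obeying $r(r+1)/2\le d_\Phi$, and the off-diagonal block yields the desired $T^\star_{t,S}$.
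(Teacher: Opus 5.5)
Your proposal is correct and follows essentially the same route as the paper. It establishes existence by coercivity plus lower semicontinuity, then sweeps upstream to downstream ($t=L,\dots,1$). In that sweep each block, with all other blocks fixed, is replaced by a low-rank element of its observation fiber, obtained from a minimal-trace PSD lift and the Barvinok--Pataki extreme-point bound, so every state $z_t$ and hence the objective value is preserved. The only cosmetic difference is that you run the fiber argument directly instead of invoking Theorem~\ref{thm:rank-single} on the conditional objective, which spares you from checking that the conditional loss is bounded below and lower semicontinuous.
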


The case $L=1$ recovers Theorem~\ref{thm:rank-single}.
For generic frozen linear layers, these bounds can still be loose because $d_{t,S}^{\max}$ may be large.
The next subsection shows that multi-head attention \emph{substantially tightens} the effective bottleneck.

\subsection{Rank bounds for multi-head attention}\label{sec:attn-bounds}

For attention modules, the bottleneck rank from Theorem~\ref{thm:rank-L-stage} can be refined using two invariances below that are specific to the attention mechanism.

\paragraph{Invariance 1: score centering.}
Row-wise softmax satisfies $\mathrm{softmax}_{\mathrm{row}}(M+C)=\mathrm{softmax}_{\mathrm{row}}(M P_{\mathbf{1}}^\perp+C)$, where $M,C\in\R^{q\times k}$ (allowing $-\infty$ entries in the mask $C$) and $P_{\mathbf{1}}^\perp=I_k-\frac{1}{k}\mathbf{1}_k\mathbf{1}_k^\top\in\R^{k\times k}$ (Lemma~\ref{lem:softmax-row-center}).
Thus, query/key scores are observed only after centering across key columns, reducing the key-token dimension from $\varrho_{K,i}$ to $\varrho_{K,i}-1$.
For queries and keys, the loss observes the update $T_Q^{(a)}$ only through the \emph{centered key subspace}, and similarly for $T_K^{(a)}$ through the query subspace.

\paragraph{Invariance 2: value-projection compression.}
Value updates are observable to the loss only after two compressions:
(i) attention-weighted pooling by $A_i^{(a)}$, which projects onto a subspace of dimension $\rank(A_i^{(a)})$; and
(ii) the frozen output projection $W_O^{(a)}$.

\paragraph{Projector compression principle.}
Both invariances are instances of the following general lemma: if $\Phi(T)=\Phi(P_{\mathrm{L}} T P_{\mathrm{R}})$ for orthogonal projectors $P_{\mathrm{L}},P_{\mathrm{R}}$, then $T^\star=P_{\mathrm{L}}\tilde{T}P_{\mathrm{R}}$ is an equally optimal global minimizer with $\rank(T^\star)\le\min\{\rank(P_{\mathrm{L}}),\rank(P_{\mathrm{R}})\}$ (Lemma~\ref{lem:proj-compression}).

Combining these ingredients gives per-head, per-projection rank certificates:

\begin{theorem}[Rank bounds for multi-head attention from invariances]\label{thm:rank-attn}
Let Assumptions~\ref{ast:meas-wp-new-new} and~\ref{ast:frozen-suffix-new} hold.
Consider $L$ consecutive multi-head attention modules with adapted blocks $\mathcal{S}_t \subseteq\mathcal{A}_t \times \{Q,K,V\}$, fixed output-projection blocks $W_{t,O}^{(a)}$, and regularization weights $\lambda_{t,S}^{(a)}>0$. Every attention row has at least one unmasked key.
Then the nuclear-norm-regularized objective admits a global minimizer.
Moreover, there exists a global minimizer $\{T_{t,S}^{(a)\star}\}$ that minimizes total rank $\sum_{t,a,S}\rank(T_{t,S}^{(a)})$ over the global minimizer set, and this minimizer satisfies:

\medskip
\noindent\textbf{Query bound:}
\[
\rank(T_{t,Q}^{(a)\star})\le\min\left\{d_{k,t}, \rank(H_{t,Q}^\star),\rank(H_{t,K}^\star),\sum_{i=1}^N(\varrho_{t,K,i}-1)\right\}\quad\forall\,(a,Q)\in\mathcal{S}_t.
\]

\noindent\textbf{Key bound:}
\[
\rank(T_{t,K}^{(a)\star})\le\min\left\{d_{k,t},\rank(H_{t,Q}^\star),\rank(H_{t,K}^\star),\sum_{i=1}^N(\varrho_{t,K,i}-1)\right\}\quad\forall\, (a, K)\in\mathcal{S}_t.
\]

\noindent\textbf{Value bound:}
\[
\rank(T_{t,V}^{(a)\star})\le\min\left\{d_{v,t},\rank(H_{t,V}^\star),\sum_{i=1}^N \rank(A_{t,i}^{(a)\star})\right\}\quad\forall\,(a,V)\in\mathcal{S}_t.
\]
Here, $H_{t,S}^\star$ and $A_{t,i}^{(a)\star}$ are the frozen features and attention matrices evaluated at the same global minimizer whose rank is being bounded.
\end{theorem}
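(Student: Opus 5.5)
I will prove existence first and then pick a total-rank minimizer. From that minimizer I build a compressed competitor and argue by contradiction that compression cannot lower the rank.

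\emph{Existence.} Existence follows as in Theorem~\ref{thm:rank-L-stage}. The objective is lower semicontinuous and bounded below (Assumption~\ref{ast:meas-wp-new-new}), and the nuclear-norm terms with $\lambda_{t,S}^{(a)}>0$ make it coercive. Hence the global minimizer set $\mathcal{M}$ is nonempty and compact. Total rank takes values in a finite set of integers, so some $T^\star\in\mathcal{M}$ attains $\min_{\mathcal{M}}\sum\rank(T_{t,S}^{(a)})$.

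\emph{Compression step.} Fix this $T^\star$ and its reachable states $z^\star_{t+1}$. Fix one stage $t$, head $a$ and projection $S$, and freeze every other block at its value in $T^\star$. The conditional objective in $T=T_{t,S}^{(a)}$ depends on $T$ only through a linear map $\Phi$. The map $\Phi$ depends on the upstream state $z^\star_{t+1}$, which is unchanged because only the stage-$t$ block moves. The plan is to exhibit orthogonal projectors $P_{\mathrm L},P_{\mathrm R}$ with $\Phi(T)=\Phi(P_{\mathrm L}TP_{\mathrm R})$ for all $T$ that lie in the affine family of interest. Lemma~\ref{lem:proj-compression} then says $P_{\mathrm L}T^\star P_{\mathrm R}$ is again a global minimizer. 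Its nuclear norm cannot increase, since projectors are contractions, and its rank is at most $\min\{\rank P_{\mathrm L},\rank P_{\mathrm R}\}$.

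Suppose the rank of this block exceeded that bound. Replacing the block would give a point of $\mathcal{M}$ with strictly smaller total rank, contradicting minimality. So each block already satisfies its bound. This is why the bounds use $H^\star$ and $A^\star$ evaluated at $T^\star$ itself. The trivial bounds $d_{k,t}$ and $d_{v,t}$ are just the row dimension of the block.

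\emph{Choice of projectors.}
\begin{itemize}
\item \textbf{Value block.} The loss sees $T$ through $W_O^{(a)}TH_{V,i}A_i^{(a)\top}$, with $A^{(a)}_i$ fixed because $Q$ and $K$ are held fixed. Take $P_{\mathrm R}$ to be the projector onto $\mathrm{span}\bigcup_i\mathcal{R}(H_{V,i}A_i^{(a)\top})$. Its rank is at most $\min\{\rank H_V,\sum_i\rank A_i^{(a)}\}$.
\item \textbf{Query block.} Let $\widetilde{K}_i$ denote the fixed keys. Write the score as $H_{Q,i}^\top(W_{Q,0}+T)^\top\widetilde{K}_i/\sqrt{d_k}$. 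By Lemma~\ref{lem:softmax-row-center}, only $\widetilde{K}_iP_{\mathbf 1}^\perp$ matters, so the observation is $T\mapsto\{TH_{Q,i}\}$ paired with $\widetilde{K}_iP_{\mathbf 1}^\perp$. Take the left projector $P_{\mathrm L}$ onto $\mathrm{span}_i\,\mathcal{R}(\widetilde{K}_iP_{\mathbf 1}^\perp)$. Its rank is at most $\min\{\sum_i(\varrho_{K,i}-1),\rank H_K\}$, because $\widetilde{K}_i=W_KH_{K,i}$. Take the right projector $P_{\mathrm R}$ onto $\mathcal{R}(H_Q)$.
\item \textbf{Key block.} This is symmetric. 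The left projector is onto the span of the query columns, bounded by $\rank H_Q$. The right projector is onto $\mathrm{span}_i\,\mathcal{R}(H_{K,i}P_{\mathbf 1}^\perp)$, whose rank is at most $\min\{\rank H_K,\sum_i(\varrho_{K,i}-1)\}$.
\end{itemize}
The query block is not fully covered by this choice. A factor $\rank H_{K}$ enters through $W_K$, so the left projector bounds give $\rank H_K$ and the centered count. The term $\rank H_Q$ comes from the right projector.

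\emph{Main obstacle.} The delicate point is that the compressed block is a minimizer only if the projector identity holds with the frozen pre-update part present. For the query block, $(W_{Q,0}+T)$ appears, and the identity has to be written for the full affine map. The update $T$ enters additively, so $\Phi(T)$ includes $\langle\cdot\rangle$ terms linear in $T$, and projecting only $T$ leaves the frozen term untouched. The masking assumption, that every row has an unmasked key, is needed for the centering lemma. I would try to first verify Lemma~\ref{lem:proj-compression} in its affine form (loss $=g(\Phi(T)+c)$). Then I would check that the centered-key projector still applies when $\widetilde{K}_i$ includes a fixed adapted key update from the same stage.
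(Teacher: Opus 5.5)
Your proposal is correct and follows the paper's proof: existence by coercivity, then a total-rank-minimizing global minimizer. It then compresses each block conditionally on the others, which the paper packages as Theorem~\ref{thm:rank-full-selfconsistent}, using the same projectors $P_{K_{t,\mathrm{ctr}}^{(a)}}\,\cdot\,P_{H_{t,Q}}$, $P_{Q_{t,\mathrm{all}}^{(a)}}\,\cdot\,P_{H_{t,K,\mathrm{ctr}}}$ and $(\cdot)\,P_{H_{t,V,A}^{(a)}}$, and closes with the contradiction to rank minimality. The affine-shift worry in your last paragraph is already handled by Lemma~\ref{lem:proj-compression}, which is stated for $g(\Phi(T)+s)$; also, using the trivial row-dimension bound $d_{v,t}$ instead of the paper's left projector $P_{W_{t,O}^{(a)\top}}$ costs nothing.
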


\paragraph{Interpretation.} Our bounds have the following implications.
First, query and key ranks are controlled by the centered key/query subspace. The term $\sum_{i=1}^N(\varrho_{t,K,i}-1)$ arises from the score-centering invariance.
Second, value rank is controlled by attention sparsity.
The term $\sum_{i=1}^N\rank(A_{t,i}^{(a)\star})$ reflects the effective number of distinct attention patterns.
When attention is sparse or concentrated on a few dominant keys per query, $\rank(A_{t,i}^{(a)\star})$ is small, yielding a tight value-rank bound.
Third, the frozen output projection provides an unconditional value-rank bound. Since $\rank(T_{t,V}^{(a)\star})\le\rank(W_O^{(a)\top})\le d_{v,t}$, the value update can never exceed the per-head value dimension, regardless of the data.

\paragraph{Adapted output projections.}
The results aboved are extended for output projection in the next theorem.
The preceding rank certificates assumed fixed output-projection weights.
They extend structurally when one also adapts the full concatenated-head output projection, provided its input is treated as fixed only \emph{conditionally} on the other adapted blocks.

\begin{theorem}[Attention rank bounds with adapted output projections]\label{thm:rank-attn-adapted-output}
In the setting of Theorem~\ref{thm:rank-attn}, suppose that at each selected stage $t$ the full output projection $W_{t,O,0}\in\R^{d_{\mathrm{out},t}\times|\mathcal{A}_t|d_{v,t}}$ also receives an update $T_{t,O}$ of the same shape with nuclear-norm weight $\lambda_{t,O}>0$.
Then the nuclear-norm-regularized objective admits a global minimizer that minimizes the total rank of all adapted $Q/K/V/O$ blocks over the global minimizer set.
At every such minimizer, the query and key bounds of Theorem~\ref{thm:rank-attn} hold unchanged, and
\begin{align*}
\rank(T_{t,V}^{(a)\star})&\le\min\left\{d_{v,t},\rank(H_{t,V}^\star),\sum_{i=1}^N\rank(A_{t,i}^{(a)\star})\right\},\\
\rank(T_{t,O}^\star)&\le\min\left\{d_{\mathrm{out},t},|\mathcal{A}_t|d_{v,t},\sum_{i=1}^N\varrho_{t,Q,i}\right\},
\end{align*}
where all features and attention matrices are evaluated at this minimizer.
\end{theorem}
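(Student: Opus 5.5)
The plan mirrors the proof of Theorem~\ref{thm:rank-attn}: first establish existence of global minimizers, then pick a total-rank-minimizing one, and finally certify each rank bound by a conditional projector-compression argument.

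\emph{Existence.} The objective is bounded below by Assumption~\ref{ast:meas-wp-new-new}. Lower semicontinuity of $g_0$ and continuity of the stage maps make the objective lower semicontinuous. Since all $\lambda>0$, the nuclear-norm penalties make it coercive. Hence a global minimizer exists. Total rank takes values in a finite set of nonnegative integers, so the nonempty global-minimizer set contains an element of minimal total rank. Fix such a minimizer $\{T^\star\}$.

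\emph{Query, key and value bounds.} Fix all blocks except one, say $T_{t,S}^{(a)}$. Conditionally on the others, the loss observes this block through a linear map whose frozen inputs are evaluated at the minimizer. The adapted output projection $W_{t,O,0}+T_{t,O}^\star$ now plays the role of the frozen $W_O^{(a)}$: it is the corresponding column block of the concatenated matrix. Query and key blocks never pass through $W_O$, so their bounds are unchanged. For values, the block enters only through $(W_{t,O,0}+T^\star_{t,O})^{(a)}\,T H_{t,V,i}(A_{t,i}^{(a)\star})^\top$. Lemma~\ref{lem:proj-compression} then applies with $P_{\mathrm{R}}$ the projector onto $\mathrm{span}\{H_{t,V,i}(A_{t,i}^{(a)\star})^\top\}_i$, whose rank is at most $\min\{\rank(H^\star_{t,V}),\sum_i\rank(A^{(a)\star}_{t,i})\}$; the bound $d_{v,t}$ is trivial. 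The replacement $P_{\mathrm{L}}T^\star P_{\mathrm{R}}$ leaves every observation unchanged, including downstream features. Its nuclear norm does not increase, so it is again a global minimizer. If it had strictly smaller rank, this would contradict total-rank minimality. Therefore $T^\star$ itself has rank at most $\rank(P_{\mathrm{R}})$. This conditional replacement is exactly the step used for Theorem~\ref{thm:rank-attn}, and I would simply invoke it with the modified output block.

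\emph{Output projection bound.} Conditionally on the $Q/K/V$ blocks at the minimizer, stage $t$ sees $T_{t,O}$ only through $T_{t,O}\,G_t$. Here $G_t$ stacks the concatenated head outputs $[\widetilde V_i^{(a)}A_i^{(a)\top}]_a$ over samples, and it has $\sum_i\varrho_{t,Q,i}$ columns. Take $P_{\mathrm{R}}$ to be the projector onto the column span of $G_t$. Its rank is at most $\min\{|\mathcal{A}_t|d_{v,t},\sum_i\varrho_{t,Q,i}\}$, while $d_{\mathrm{out},t}$ bounds rank trivially. The same compress-and-use-minimality argument then gives the bound.

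\emph{Main obstacle.} The delicate point is that the blocks are coupled. Compressing one block must not change the features seen by the others, including upstream-to-downstream dependence and the $V$--$O$ coupling. This holds because the compressed block reproduces identical observed outputs, so every stage state $z_t$ is unchanged. I expect the care to go into stating this cleanly via Definition~\ref{def:L-stage-block-lin-meas}, treating each block in turn with the others held at the minimizer.
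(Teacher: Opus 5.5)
Your proposal is correct and follows essentially the same route as the paper: existence via lower semicontinuity and nuclear-norm coercivity, then selection of a total-rank-minimal global minimizer, then blockwise conditional compression. The paper simply packages your compress-and-contradict-minimality step as Theorem~\ref{thm:rank-full-selfconsistent}, using the same projectors you chose: the centered-score projectors for $Q/K$, the projector onto the span of $H_{t,V,i}(A_{t,i}^{(a)\star})^\top$ with $\widetilde W_{t,O}^{(a)}$ held fixed for $V$, and $T_{t,O}\mapsto T_{t,O}P_{H_{t,O}}$ for $O$. One wording fix: query and key effects do reach the loss through the output projection, and the actual reason their bounds are unchanged is that the compression preserves the attention matrices exactly.
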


A value update $T_{t,V}^{(a)}$ enters the stage output only after left multiplication by the head-$a$ column block $\widetilde W_{t,O}^{(a)}\in\R^{d_{\mathrm{out},t}\times d_{v,t}}$ of the adapted output projection $W_{t,O,0}+T_{t,O}$.
Adapting the output projection changes which directions of the $d_{v,t}$-dimensional head output are retained, but it cannot enlarge that output.
The update $T_{t,O}$ differs from the $Q/K/V$ updates because its input, the attention-weighted values, depends on the same-stage $Q/K/V$ updates.
The proof therefore compresses each block conditionally on the other blocks instead of using a joint linear map, and for the same reason Corollary~\ref{cor:rankdef-L-attn} does not apply to $T_{t,O}$.

\subsection{Generic rank deficiency of first-order stationary points}\label{sec:rankdef}

The results in Sections~\ref{sec:single-block}--\ref{sec:attn-bounds} concern \emph{global} minimizers of the convex trace-norm objective.
In practice, LoRA solves the \emph{nonconvex} factorized problem $\mathcal{L}^{\mathrm{lora}}_\lambda(U,V)$ via gradient descent, which may converge to stationary points that are not global minimizers.
We now show that the bottleneck structure also constrains the rank at stationary points of the factorized problem.

\begin{assumption}[Differentiability for stationarity]\label{ast:stationarity-diff}
The induced empirical loss $\mathcal{L}$ is differentiable with respect to every adapted block.
\end{assumption}
\begin{assumption}[Stagewise $C^1$ regularity for stagewise linear maps]\label{ast:multistage-c1}
In Definition~\ref{def:L-stage-block-lin-meas}, for every stage $t\in[L]$, every block $S\in\mathcal{S}_t$, every basis vector $e_j$ of $\mathcal{E}_{t,S}$, and every basis matrix $B_k$ of $\R^{m_{t,S}\times n_{t,S}}$, the maps $z_{t+1}\mapsto\Phi_{t,S}(\cdot,z_{t+1})^\star(e_j),z_{t+1}\mapsto\Phi_{t,S}(B_k,z_{t+1})$ are $C^1$ on an open neighborhood of $\mathcal{Z}_{t+1}^{\mathrm{reach}}$.
Here, $\Phi_{t,S}^\star$ is the adjoint map of $\Phi_{t,S}$.
Moreover, each transition map $g_t$ is $C^1$ on an open neighborhood of the reachable set of its arguments.
\end{assumption}

Assumption~\ref{ast:multistage-c1} is a regularity assumption on how the moving linear observation maps depend on upstream states, and common evaluation-mode attention blocks satisfy this condition on bounded reachable sets.

\paragraph{Why a perturbation is needed.}
The KKT conditions at stationarity gives $(\mathcal{B}_{m,n}(G)+\lambda I_{m+n})R=0$ with $G=\nabla_T\mathcal{L}$ and $\mathcal{B}_{m,n}(G)=\bigl[\begin{smallmatrix}0&G\\G^\top&0\end{smallmatrix}\bigr]$.
Full column rank of $R$ forces an $r$-dimensional nullspace, whereas bottleneck structure restricts $G$ to the low-dimensional range $\mathcal{R}(\Phi^\star)$ (Lemma~\ref{lem:grad-range-L-stage}).
A generic PSD perturbation makes this low-dimensional symmetric-matrix family avoid such a high-dimensional nullspace.

\begin{definition}[Perturbed LoRA objective]\label{def:perturbed-lora-new}
For perturbation $P\in\mathbb{S}_+^{m+n}$, define
\begin{equation*}
\mathcal{L}^{\mathrm{lora}}_{\lambda,P}(U,V)
\coloneqq
\mathcal{L}^{\mathrm{vanilla}}(UV^\top)+\frac{\lambda}{2}\bigl(\|U\|_F^2+\|V\|_F^2\bigr)+\frac{1}{2}\langle P, RR^\top\rangle.
\end{equation*}
\end{definition}

\begin{theorem}[Generic exclusion of full-rank stationary points]\label{thm:generic-rankdef-master}
Fix $d\ge1$, $r\ge1$, $\lambda>0$, and let $\mathcal{G}\subseteq\mathbb{S}^d$ be contained in a finite union of $C^1$ images of dimension at most $p$.
If $\frac{r(r+1)}{2}>p$,
then for Lebesgue-a.e.\ $P\in\mathbb{S}_+^d$, every matrix $R \in\R^{d\times r}$ satisfying $(G+\lambda I+P)R=0$ for some $G\in\mathcal{G}$ has $\rank(R)<r$.
\end{theorem}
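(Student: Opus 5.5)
We argue by dimension counting. The aim is to show that the set of ``bad'' perturbations is contained in the image of a $C^1$ map from a manifold of dimension strictly less than $\dim\mathbb{S}^d=d(d+1)/2$; such an image has Lebesgue measure zero.

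First we reduce to a statement about nullities. A matrix $R$ of full column rank $r$ with $(G+\lambda I+P)R=0$ exists iff $\dim\ker(G+\lambda I+P)\ge r$. So the bad set is
\[
\mathcal{P}_{\mathrm{bad}}\coloneqq\{P\in\mathbb{S}^d:\ \exists G\in\mathcal{G},\ \dim\ker(G+\lambda I+P)\ge r\}.
\]
We may ignore the restriction $P\succeq0$, since $\mathbb{S}_+^d$ is a full-dimensional subset of $\mathbb{S}^d$ and Lebesgue-null sets stay null after intersection.

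Next we parametrize the bad set. Write $M\coloneqq G+\lambda I+P$. Let $\mathcal{N}_{\ge r}\subseteq\mathbb{S}^d$ be the set of symmetric matrices with nullity at least $r$, equivalently rank at most $d-r$. It is the finite union over $k\le d-r$ of the smooth manifolds $\mathcal{M}_k$ of symmetric rank-$k$ matrices. The dimension of $\mathcal{M}_k$ is
\[
\dim\mathcal{M}_k=\frac{d(d+1)}{2}-\frac{(d-k)(d-k+1)}{2}.
\]
To verify this, parametrize $\mathcal{M}_k$ as $\{Q\Lambda Q^\top\}$, where $Q$ ranges over the Stiefel manifold of $d\times k$ frames and $\Lambda$ over invertible symmetric $k\times k$ matrices, modulo $GL_k$. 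This gives $dk-k(k+1)/2+k(k+1)/2-\ldots$; equivalently, the standard codimension of the rank-$k$ symmetric stratum is $(d-k)(d-k+1)/2$. For $k\le d-r$ this codimension is at least $r(r+1)/2$.

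Then
\[
\mathcal{P}_{\mathrm{bad}}\subseteq\bigcup_{k\le d-r}\bigcup_j\bigl\{N-\lambda I-\psi_j(u):\ N\in\mathcal{M}_k,\ u\in U_j\bigr\},
\]
where $\psi_j:U_j\to\mathbb{S}^d$ are the $C^1$ maps with $\dim U_j\le p$ whose images cover $\mathcal{G}$. Each set on the right is the image of the $C^1$ map $(N,u)\mapsto N-\lambda I-\psi_j(u)$ on the manifold $\mathcal{M}_k\times U_j$. That manifold has dimension at most
\[
\frac{d(d+1)}{2}-\frac{r(r+1)}{2}+p<\frac{d(d+1)}{2},
\]
using the hypothesis $r(r+1)/2>p$.

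Finally, a $C^1$ (indeed locally Lipschitz) map from a second-countable manifold of dimension $q$ into $\mathbb{R}^D$ with $q<D$ has image of Lebesgue measure zero, by the easy case of Sard's theorem. A finite union of null sets is null, which completes the argument.

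The main obstacle is the clean verification of the dimension of the symmetric rank-$k$ stratum and its manifold structure. I would handle it via the local chart: near $N_0=\mathrm{diag}(\Lambda_0,0)$ with $\Lambda_0$ invertible, a symmetric matrix $\bigl[\begin{smallmatrix}A&B\\B^\top&C\end{smallmatrix}\bigr]$ has rank $k$ iff its Schur complement vanishes,
\[
C=B^\top A^{-1}B.
\]
This exhibits the stratum locally as a graph over $(A,B)$, of dimension $k(k+1)/2+k(d-k)$, which agrees with the count above. A secondary point is that the images of $C^1$ maps need not be manifolds; this does not matter, since only the measure-zero property of $C^1$ images of lower-dimensional domains is used.
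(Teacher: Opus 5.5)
Your proof is correct and follows essentially the same route as the paper. Both arguments stratify the symmetric matrices of nullity at least $r$ into smooth strata of codimension at least $r(r+1)/2$; the paper indexes these by nullity $s$, and you index them by rank $k=d-s$. Both then show by dimension counting that the Minkowski sum of each stratum with the $C^1$ images covering $\mathcal{G}$ is Lebesgue-null. Your Schur-complement chart supplies the manifold-structure verification that the paper only asserts, and the garbled Stiefel count can simply be dropped.
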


\paragraph{Proof idea.}
For each $s\ge r$, the set of symmetric matrices with nullspace dimension exactly $s$ is a smooth embedded submanifold of $\mathbb{S}^d$ of codimension $s(s+1)/2$.
Therefore, the set of symmetric matrices with nullspace dimension at least $r$ is a finite union of such manifolds, not generally a single submanifold.
The family $\mathcal{G}+\lambda I$ has dimension at most $p$.
When $r(r+1)/2>p$, the translated family $\mathcal{G}+\lambda I+P$ generically misses this submanifold by a dimension-counting argument (Lemma~\ref{lem:measure-zero-sum}).

\paragraph{Application to attention.}
Using the stagewise gradient-range characterization (Lemma~\ref{lem:reach-move-param-L}), the family $\mathcal{G}_{t,S}$ for each attention block is $C^1$-parameterizable with dimension at most $\beta_{t+1}+d_{t,S}^{\max}$.
This yields:

\begin{corollary}[Generic rank deficiency for $L$ attention modules]\label{cor:rankdef-L-attn}
Under Assumptions~\ref{ast:meas-wp-new-new},\ref{ast:frozen-suffix-new}, and mild differentiability conditions (Assumptions~\ref{ast:stationarity-diff} and~\ref{ast:multistage-c1}), consider $L$ consecutive attention modules.
For each adapted block $\alpha=(a,S)\in\mathcal{S}_t$, let $r_{H,t,\alpha}^{\max}\coloneqq\sup_{z_{t+1}}\rank(H_{t,S}(z_{t+1}))$.
If
\[
\frac{r_{t,\alpha}(r_{t,\alpha}+1)}{2}>\beta_{t+1}+m_{t,\alpha}r_{H,t,\alpha}^{\max}\qquad\forall\,t\in[L],\alpha\in\mathcal{S}_t,
\]
then for Lebesgue-a.e.\ PSD perturbations $\{P_{t,\alpha}\}$, every first-order stationary point of the perturbed LoRA objective satisfies $\rank(R_{t,\alpha})<r_{t,\alpha}$.
\end{corollary}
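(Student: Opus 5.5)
The plan is to reduce Corollary~\ref{cor:rankdef-L-attn} to Theorem~\ref{thm:generic-rankdef-master}, applied once per adapted block $(t,\alpha)$, and then intersect the resulting full-measure sets of perturbations.

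\textbf{Step 1: stationarity condition.} Fix a block $(t,\alpha)$ and write $R_{t,\alpha}=\bigl[\begin{smallmatrix}U_{t,\alpha}\\V_{t,\alpha}\end{smallmatrix}\bigr]\in\R^{(m_{t,\alpha}+n_{t,\alpha})\times r_{t,\alpha}}$. At a first-order stationary point of the perturbed objective (Definition~\ref{def:perturbed-lora-new}, applied blockwise), differentiate with respect to $U$ and $V$ using Assumption~\ref{ast:stationarity-diff}. This gives $G V+\lambda U+(PR)_{\text{top}}=0$ and $G^\top U+\lambda V+(PR)_{\text{bottom}}=0$. In matrix form,
\[
\bigl(\mathcal{B}_{m,n}(G_{t,\alpha})+\lambda_{t,\alpha}I+P_{t,\alpha}\bigr)R_{t,\alpha}=0,
\]
where $G_{t,\alpha}=\nabla_{T_{t,\alpha}}\mathcal{L}$ is evaluated at $T=UV^\top$ with all other blocks fixed at the stationary point.

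\textbf{Step 2: parameterize the gradient family.} By the chain rule through Definition~\ref{def:L-stage-block-lin-meas},
\[
G_{t,\alpha}=\Phi_{t,\alpha}(\cdot,z_{t+1})^\star(\xi),
\qquad \xi\in\mathcal{E}_{t,\alpha},
\]
where $\xi$ is the back-propagated downstream derivative. This is the content of Lemma~\ref{lem:grad-range-L-stage}. For fixed $z_{t+1}$, the range of the adjoint has dimension $d_{t,\alpha}(z_{t+1})=m_{t,\alpha}\rank(H_{t,S}(z_{t+1}))\le m_{t,\alpha}r_{H,t,\alpha}^{\max}$.

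The upstream state $z_{t+1}$ must also be parameterized. Lemma~\ref{lem:reach-move-param-L} shows that reachable states are $C^1$ images of the upstream observation coordinates, of total dimension $\beta_{t+1}$. Assumption~\ref{ast:multistage-c1} supplies the regularity: the adjoints applied to basis vectors are $C^1$ in $z_{t+1}$.

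One subtlety is that the range dimension may vary with $z_{t+1}$. To handle it, stratify by the value $k=\rank(H_{t,S})$ and by the choice of a spanning subset of basis vectors $e_j$, which gives finitely many strata. On each stratum, write the gradient as $\sum_j c_j\,\Phi^\star(e_j)(z_{t+1})$ with coefficients $c\in\R^{mk}$. The map $(z\text{-parameters},c)\mapsto \mathcal{B}_{m,n}(G)$ is then $C^1$ with domain dimension at most $\beta_{t+1}+m_{t,\alpha}r_{H,t,\alpha}^{\max}=:p$. Hence $\mathcal{G}_{t,\alpha}=\{\mathcal{B}_{m,n}(G)\}$ is contained in a finite union of $C^1$ images of dimension at most $p$, and $\mathcal{B}_{m,n}$ is linear, so this property is preserved.

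\textbf{Step 3: conclude.} Apply Theorem~\ref{thm:generic-rankdef-master} with $d=m_{t,\alpha}+n_{t,\alpha}$ and $r=r_{t,\alpha}$. The hypothesis $r(r+1)/2>p$ holds by assumption, so for a.e.\ $P_{t,\alpha}$ every $R$ solving the equation of Step 1 has $\rank(R)<r_{t,\alpha}$. This applies in particular to the stationary-point factors.

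The null sets depend only on the family $\mathcal{G}_{t,\alpha}$, not on the perturbations of other blocks, because $\mathcal{G}_{t,\alpha}$ was defined over all reachable states. A countable (here finite) union of null sets in the product space is null, so a.e.\ choice of the tuple $\{P_{t,\alpha}\}$ works simultaneously for all blocks.

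\textbf{Main obstacle.} I expect the hardest step to be Step 2, namely justifying that the upstream states at stationary points of the \emph{perturbed} problem lie in a $C^1$ image of dimension $\beta_{t+1}$. Upstream states depend on upstream $T$ only through $\Phi_{\tau,S}(T,\cdot)$ for $\tau>t$. I would therefore build the parameterization recursively, using coordinates in each range $\mathcal{R}(\Phi_{\tau,S})$ and again stratifying by rank. This is exactly what Lemma~\ref{lem:reach-move-param-L} is meant to deliver, so I would invoke it directly.
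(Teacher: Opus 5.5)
Your proposal is correct and follows the paper's route. The paper computes $d_{t,\alpha}(z_{t+1})=m_{t,\alpha}\rank(H_{t,S}(z_{t+1}))$, hence $d_{t,\alpha}^{\max}=m_{t,\alpha}r_{H,t,\alpha}^{\max}$, and then invokes Theorem~\ref{thm:rankdef-L-stage}, whose proof is exactly your Steps~1--3 (KKT identity, gradient-range lemma, recursive $C^1$ parameterization, the master genericity theorem applied per block, and intersection of full-measure sets); you unroll that theorem inline instead of citing it.
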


\paragraph{Limitation: spurious local minima persist.}
This is a first-order \emph{exclusion} theorem, not a no-spurious-local-minima result.
Indeed, spurious local minima can occur even for arbitrarily overparameterized LoRA:

\begin{proposition}[Spurious local minima for overparameterized LoRA]\label{prop:nogo-new}
For any rank $r\ge 2$, any $\lambda>0$, and any $\varepsilon \in (0,1)$, there exists a smooth nonconvex loss $g$ such that for every $P\in\mathbb{S}_+^2$ with $\|P\|_F \le\varepsilon$, the perturbed rank-$r$ LoRA objective has a strict local minimum at $R=0$ that is not global.
\end{proposition}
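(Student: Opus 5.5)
We will build an explicit counterexample in the lowest-dimensional setting, $m=n=1$, so that $d=m+n=2$, $U,V\in\R^{1\times r}$, $R=\bigl[\begin{smallmatrix}U\\ V\end{smallmatrix}\bigr]\in\R^{2\times r}$ and $T=UV^\top=\sum_{j}u_jv_j$ is a scalar. The objective then becomes
\[
F_P(R)=g(T)+\tfrac{\lambda}{2}\|R\|_F^2+\tfrac12\langle P,RR^\top\rangle .
\]
The construction rests on two observations. First, near $R=0$ the regularizer dominates. Since $|T|\le \tfrac12\|R\|_F^2$, if $g$ is constant (say $g\equiv0$) on a neighborhood of $0$, or more generally satisfies $|g(T)-g(0)|\le c|T|$ with $c<\lambda$, then
\[
F_P(R)-F_P(0)\ge \tfrac{\lambda-c}{2}\|R\|_F^2>0
\]
for $R\ne0$ small, because $P\succeq0$. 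This makes $R=0$ a strict local minimum uniformly in $P$, with no need for $\|P\|_F\le\varepsilon$ at this step.

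Second, the point $0$ is not a global minimum, which we arrange by making $g$ very negative far away. Take $g$ smooth with $g(T)=0$ for $|T|\le1$ and $g(T)=-M$ for $T\ge2$, for example via a smooth bump transition. Evaluate the competitor $R_\star$ with $u_1=v_1=\sqrt2$ and all other columns zero. Then $T=2$ and $\|R_\star\|_F^2=4$. The PSD term satisfies
\[
\langle P,R_\star R_\star^\top\rangle\le\|P\|_F\,\|R_\star R_\star^\top\|_F\le \varepsilon\cdot 4<4,
\]
so $F_P(R_\star)\le -M+2\lambda+2$. Choosing $M>2\lambda+2$ gives $F_P(R_\star)<0=F_P(0)$ for every admissible $P$. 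This is where the bound $\|P\|_F\le\varepsilon<1$ is used. Since $g$ is nonconstant and bounded, it is nonconvex, and it is smooth by construction.

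We expect the only delicate point to be the uniformity in $P$, and both estimates above are already uniform. The local-min neighborhood is the set $\|R\|_F^2<2$, where $|T|\le1$; it does not depend on $P$, since the quadratic terms are $\tfrac12\langle \lambda I+P,RR^\top\rangle\ge\tfrac{\lambda}{2}\|R\|_F^2$. The hypothesis $r\ge2$ is not actually needed by the construction, because it works for any $r\ge1$. It is presumably stated to emphasize that overparameterization does not help, so we simply note that the argument is valid for all $r\ge2$.
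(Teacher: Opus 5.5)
Your proof is correct, but it takes a different route from the paper's. The paper uses the polynomial double well $g(t)=a(t^2-1)^2$ with $a>\lambda+\varepsilon$ and certifies the strict local minimum by the second-order test. Since $t(R)=\sum_k u_kv_k$ is bilinear, $\nabla t(0)=0$, and since $g'(0)=0$, the loss contributes nothing to the Hessian at the origin. Hence $\nabla^2F_P(0)=\lambda I_{2r}+I_r\otimes P\succeq\lambda I_{2r}\succ0$. Its competitor is $u_1=v_1=1$, at the bottom of the well $t=1$, which gives $F_P(R_1)\le\lambda+\|P\|_F\le\lambda+\varepsilon<a=F_P(0)$.

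You instead make $g$ flat on $[-1,1]$ and use the zeroth-order bound $|UV^\top|\le\frac12\|R\|_F^2$. This gives $F_P(R)-F_P(0)=\frac12\langle\lambda I+P,RR^\top\rangle\ge\frac{\lambda}{2}\|R\|_F^2$ on the explicit, $P$-independent ball $\|R\|_F^2<2$. You then place your competitor at $T=2$, where $g=-M$.

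What your route buys: no vectorization or Hessian computation, and quadratic growth on a stated basin. Your criterion $|g(T)-g(0)|\le c|T|$ with $c<\lambda$ also shows the phenomenon is robust; it covers the paper's $g$ too. What the paper's route buys: a single explicit polynomial loss rather than one built from a $C^\infty$ transition function. It also makes visible that, because $t$ is bilinear, only the regularizer and $P$ shape the curvature at the origin.

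Neither argument really needs $r\ge2$, since the paper's competitor also occupies a single column. Neither needs $\varepsilon<1$ in an essential way either, since $a$ (respectively $M$) can be taken larger.

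One small tidy-up: define $g$ on all of $\R$, e.g.\ $g=-M\psi$ with $\psi$ a smooth monotone step vanishing on $(-\infty,1]$ and equal to $1$ on $[2,\infty)$. Nonconvexity then follows at once from $g(1)=0>\frac12\bigl(g(0)+g(2)\bigr)=-\frac{M}{2}$.
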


This clarifies the scope of the generic rank-deficiency result: it guarantees that stationary points are rank-deficient (the LoRA factorization does not waste its rank budget at stationary points), but it does not guarantee that all such points are globally optimal.

\subsection{Perturbation transfer and generalization}\label{sec:transfer-gen}

\paragraph{Near-optimality of the perturbed factorization.}
The PSD perturbation in Theorem~\ref{thm:generic-rankdef-master} can be taken arbitrarily small.
The following result shows that global minimizers of the perturbed factorized objective remain near-optimal for the original trace-norm problem:

\begin{theorem}[Perturbation-to-unperturbed transfer]\label{thm:transfer}
Let $\mathcal{I}$ be any finite index set of adapted blocks.
For each $j\in\mathcal{I}$, fix a factorization rank $r_j$ and a regularization parameter $\lambda_j>0$.
Assume that the trace-norm-regularized problem admits a global minimizer $\{T_j^\star\}$ satisfying $\rank(T_j^\star)\le r_j\forall\,j\in\mathcal{I}$, and let $R_j^\star\coloneqq\bigl[\begin{smallmatrix}U_j^\star\\V_j^\star\end{smallmatrix}\bigr]$ for a balanced factorization of $T_j^\star$.
Let $\{(\widehat{U}_j,\widehat{V}_j)\}$ be a global minimizer of the perturbed LoRA objective with parameters $\lambda_j,P_j$ (Definition~\ref{def:perturbed-lora-new}).
Then
\[
\mathcal{L}(\{\widehat{T}_j\})+\sum_j\lambda_j\|\widehat{T}_j\|_\star\le\min_{\{T_j\}}\left\{\mathcal{L}(\{T_j\})+\sum_j\lambda_j\|T_j\|_\star\right\}+\frac{1}{2}\sum_j\langle P_j,R_j^\star R_j^{\star\top}\rangle.
\]
In particular, if $\|P_j\|_2\le\varepsilon_j$, the gap is at most $\sum_j\varepsilon_j\|T_j^\star\|_\star$.
\end{theorem}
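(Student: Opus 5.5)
The argument is a sandwich: compare the perturbed objective at its global minimizer $\{(\widehat{U}_j,\widehat{V}_j)\}$ with its value at the balanced factorizations $R_j^\star$ of the low-rank trace-norm minimizer. Write $\widehat{R}_j=\bigl[\begin{smallmatrix}\widehat{U}_j\\ \widehat{V}_j\end{smallmatrix}\bigr]$ and $\widehat{T}_j=\widehat{U}_j\widehat{V}_j^\top$. The variational characterization of the nuclear norm,
\[
\|T\|_\star=\min_{UV^\top=T}\tfrac12\bigl(\|U\|_F^2+\|V\|_F^2\bigr),
\]
implies that for any factors $\|\widehat{U}_j\widehat{V}_j^\top\|_\star\le\frac12(\|\widehat U_j\|_F^2+\|\widehat V_j\|_F^2)$. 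Since $P_j\succeq0$, we also have $\langle P_j,\widehat R_j\widehat R_j^\top\rangle\ge0$. Together these give the lower bound
\[
\mathcal{L}(\{\widehat T_j\})+\sum_j\lambda_j\|\widehat T_j\|_\star\le \mathcal{L}^{\mathrm{lora}}_{\lambda,P}(\{\widehat U_j,\widehat V_j\}),
\]
where the right-hand side is the multi-block perturbed objective, which sums the weight-decay and perturbation terms over $j$.

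For the upper bound, I will use the assumption $\rank(T_j^\star)\le r_j$. It allows a balanced factorization $T_j^\star=U_j^\star V_j^{\star\top}$ with $U_j^\star\in\R^{m_j\times r_j}$ and $V_j^\star\in\R^{n_j\times r_j}$, obtained from the compact SVD $A\Sigma B^\top$ by setting $U_j^\star=A\Sigma^{1/2}$ and $V_j^\star=B\Sigma^{1/2}$ and padding with zero columns. For this factorization $\frac12(\|U_j^\star\|_F^2+\|V_j^\star\|_F^2)=\|T_j^\star\|_\star$. The point $\{(U_j^\star,V_j^\star)\}$ is feasible for the rank-$r_j$ perturbed problem, so global optimality of $\{(\widehat U_j,\widehat V_j)\}$ yields
\[
\mathcal{L}^{\mathrm{lora}}_{\lambda,P}(\{\widehat U_j,\widehat V_j\})\le \mathcal{L}(\{T_j^\star\})+\sum_j\lambda_j\|T_j^\star\|_\star+\tfrac12\sum_j\langle P_j,R_j^\star R_j^{\star\top}\rangle .
\]
Because $\{T_j^\star\}$ is a global minimizer of the trace-norm problem, the first two terms equal the unrestricted minimum. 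Chaining this with the lower bound gives the stated inequality.

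For the ``in particular'' clause, I will bound $\langle P_j,R_j^\star R_j^{\star\top}\rangle\le\|P_j\|_2\tr(R_j^\star R_j^{\star\top})$, which holds since $R_j^\star R_j^{\star\top}\succeq0$ and the trace-norm dual of the spectral norm applies. Then I use
\[
\tr(R_j^\star R_j^{\star\top})=\|U_j^\star\|_F^2+\|V_j^\star\|_F^2=2\|T_j^\star\|_\star .
\]
With $\|P_j\|_2\le\varepsilon_j$ this gives $\frac12\cdot\varepsilon_j\cdot2\|T_j^\star\|_\star=\varepsilon_j\|T_j^\star\|_\star$ per block.

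The steps are routine, but two points need care. First, the nuclear-norm/Frobenius inequality must be applied with the unbalanced factors $\widehat U_j,\widehat V_j$. Second, the hypothesis $\rank(T_j^\star)\le r_j$ is what makes the balanced lift feasible at width $r_j$, and the zero-column padding must not change the Frobenius norms. Existence of the perturbed global minimizer is assumed in the statement, so nothing further is needed.
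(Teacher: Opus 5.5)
Your proposal is correct and matches the paper's proof essentially step for step. The lower bound comes from the variational nuclear-norm inequality together with $P_j\succeq0$, and the upper bound comes from evaluating the perturbed objective at the zero-padded balanced SVD factorization. The final estimate uses $\langle P_j,R_j^\star R_j^{\star\top}\rangle\le\|P_j\|_2\tr(R_j^\star R_j^{\star\top})=2\|P_j\|_2\|T_j^\star\|_\star$. The only difference is that the paper also verifies, via lower semicontinuity and coercivity, that the perturbed objective attains its minimum; as you note, this is unnecessary because the statement already posits the minimizer.
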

Our assumptions satisfy that the trace-norm-regularized problem admits a global minimizer, and this theorem can be applied to a single block index set $\mathcal{S}$, a stagewise index set $\mathcal{S}_t$, and the flattened attention index set $\widetilde{\mathcal{S}}\coloneqq\{(t,a,S):t\in[L],(a,S)\in\mathcal{S}_t\}$.

\paragraph{Generalization.}
Uniform generalization bounds for trace-norm-bounded updates in frozen networks are governed by $\E[\|EH^\top\|_2]$, where $E$ is a Rademacher sign matrix.
The key identity is the trace/spectral duality:
\[
\sup_{\|T\|_\star\le D} \langle T, EH^\top\rangle_F=D \|EH^\top\|_2,
\]
which reduces the Rademacher complexity of the loss class to the spectral norm of random-sign-weighted frozen features.
The full single-block bound (Theorem~\ref{thm:gen-full-single}) and the $L$-stage attention extension (Theorem~\ref{thm:gen-full-L-attn}) are stated and proved in Appendix~\ref{sec:gen-full}.

\section{Experiment}\label{sec:experiment}

\begin{figure}[ht]
\centering
\begin{subfigure}[b]{0.9\textwidth}
\centering
\includegraphics[width=\linewidth]{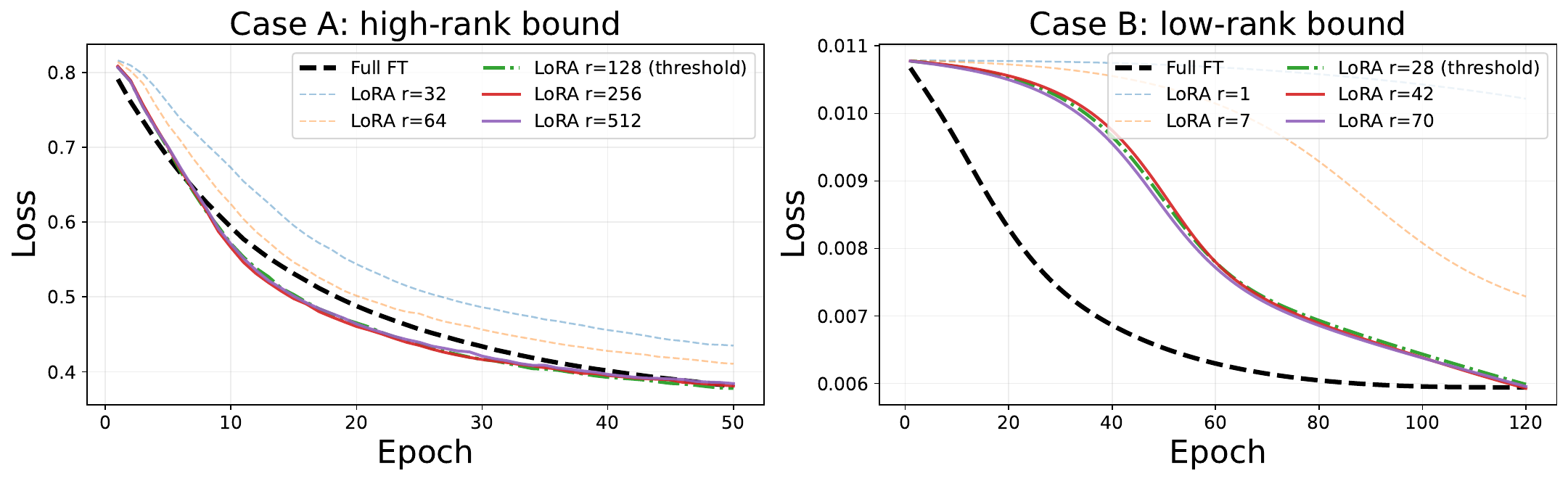}
\caption{MNLI}
\label{fig:MNLI}
\end{subfigure}
\hfill
\begin{subfigure}[b]{0.9\textwidth}
\centering
\includegraphics[width=\linewidth]{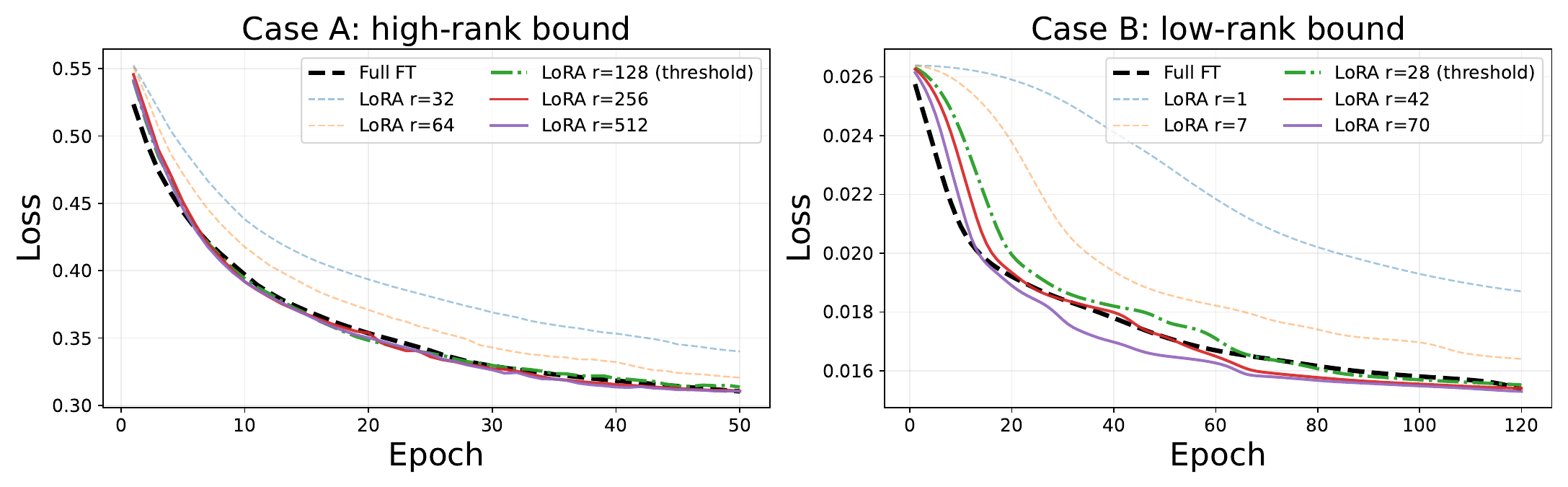}
\caption{SST-2}
\label{fig:SST2}
\end{subfigure}
\hfill
\begin{subfigure}[b]{0.9\textwidth}
\centering
\includegraphics[width=\linewidth]{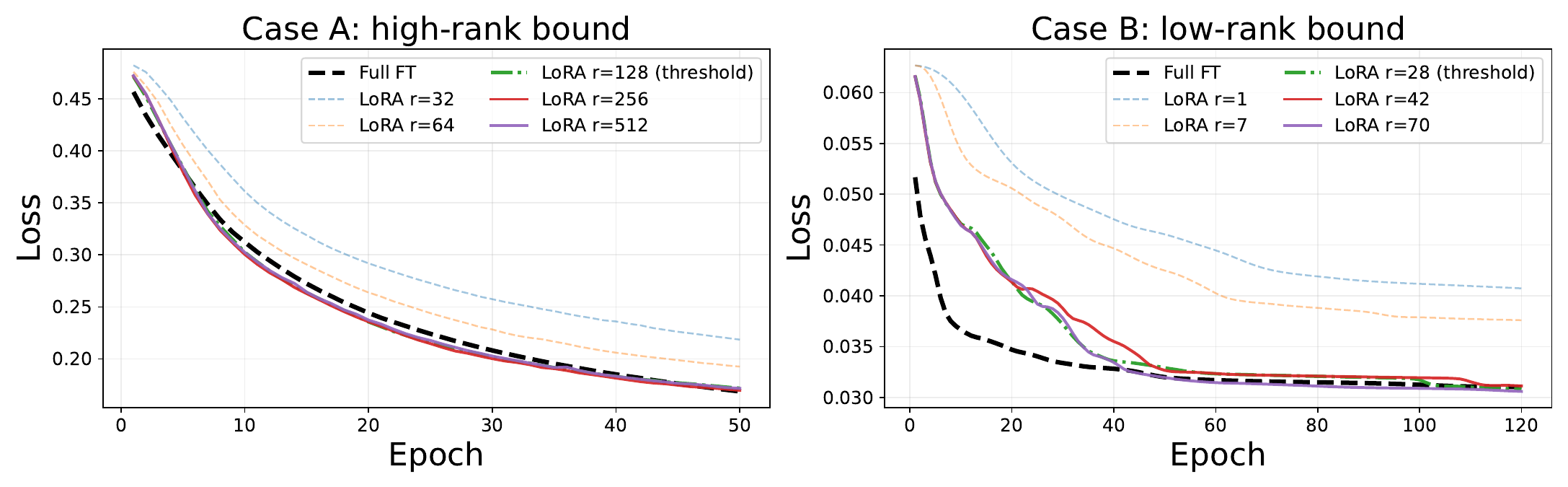}
\caption{RTE}
\label{fig:RTE}
\end{subfigure}
\caption{Training loss trajectories on GLUE classification tasks (MNLI, SST-2, and RTE) using Qwen2.5-1.5B across various LoRA ranks. The results demonstrate a clear threshold effect: ranks below the estimated threshold exhibit distinct, suboptimal convergence trajectories, whereas ranks at or above the threshold tightly cluster and converge to nearly indistinguishable final losses, closely matching the convergence point of full fine-tuning.}
\label{fig:classification_results}
\end{figure}

We test the predicted rank threshold on SST-2, MNLI, and RTE~\citep{wang2018glue} using Qwen2.5-1.5B~\citep{yang2025qwen3}.
Only one query-projection block in one head of layers 7 and 8 is adapted; after a classification-head warm-up, both the backbone and classifier are frozen.
The threshold is the smallest rank at which the upstream and downstream rank proxies both saturate.

\paragraph{Two rank-bound regimes.}
For the selected query blocks, Theorem~\ref{thm:rank-attn} gives the upper bound
\[
\rank(T_{t,Q}^{(a)\star})\le\min\left\{d_{k,t}, \rank(H_{t,Q}^\star),\rank(H_{t,K}^\star),\sum_{i=1}^N(\varrho_{t,K,i}-1)\right\}\quad\forall\,(a,Q)\in\mathcal{S}_t.
\]
We evaluate in two regimes.
We test a many-example regime where this estimate reaches the head cap $d_{k,t}$ and a few-example regime where it is strictly smaller.
For Qwen2.5-1.5B, $d_{k,t}=128$.

\paragraph{Results.}
Figure~\ref{fig:classification_results} shows a consistent threshold effect: ranks below the estimate follow distinct, worse loss trajectories, while ranks at or above it nearly overlap and approach the full fine-tuning loss.
This is consistent with the high-rank LoRA/full-fine-tuning alignment observed by \citet{jang2024lora}.
We also observe that high-rank LoRA runs converge to almost the same point as the full fine-tuning baseline, although their training dynamics differ, consistent with the related phenomenon observed by \citet{jang2024lora}.

\section{Conclusion}\label{sec:conclusion}

In this paper, we identify that LoRA operates via a finite-rank mechanism where frozen layers project updates into low-dimensional subspaces.
Through this, trace-norm regularization selects a low-rank representative among all updates producing the same observed outputs.
This yields global low-rank minimizers without NTK linearization and gives rank bounds controlled by bottleneck dimensions rather than by the ambient number of trainable parameters.

For attention modules, we showed that the observable subspaces are smaller than the naive frozen-feature image, row-wise softmax removes one key-column direction per sample, and value updates are observable only after attention-weighted pooling and the frozen output projection.
These invariances give per-head rank certificates for query, key, and value updates at rank-minimal global minimizers.
We also prove that small perturbations exclude full-rank stationary points in nonconvex LoRA while maintaining near-optimality.

\paragraph{Limitations.}
First, our rank theorems certify existence for trace-norm objectives, not convergence of gradient-based LoRA.
Second, we can't explicitly calculate bounds in Theorem~\ref{thm:rank-attn}, which are based on global solutions.
Finally, the attention bounds are a posteriori, and the generic rank-deficiency theorem is only a first-order statement for perturbed objectives; Proposition~\ref{prop:nogo-new} shows that spurious local minima can remain.

\section*{Acknowledgments} The authors would like to thank anonymous reviewers and the Area Chair for their useful feedback to improve our work. This work is supported by grant VUNI.2526.CAIR.04 and Seed Fund 10000225 CECS.

\bibliography{bib}
\bibliographystyle{apalike}
\newpage
\appendix

\section{Full list of notations and auxiliary definitions}\label{sec:notation-full}

We first summarize the key notation used throughout the paper. Table~\ref{tab:notation-lora} summarizes the basic symbols for the LoRA fine-tuning setup, and Table~\ref{tab:notation-attn} lists the additional notation used for attention layers and multi-stage linear mappings.

\begin{table}[h!]
\centering
\begin{tabular}{lll}
\hline
\textbf{Symbol} & \textbf{Type / Shape} & \textbf{Description}\\
\hline

$n$ & scalar & Input dimension \\
$m$ & scalar & Output dimension \\
\hline 

$W_0$ & $\R^{m\times n}$ & Pretrained (frozen) weight matrix \\
$W$ & $\R^{m\times n}$ & Adapted weight matrix ($W=W_0+T$)\\
$T$ & $\R^{m\times n}$ & Low-rank update matrix \\
$U$ & $\R^{m\times r}$ & Left low-rank factor\\
$V$ & $\R^{n \times r}$ & Right low-rank factor\\
$r$ & scalar & Rank of LoRA update ($r\ll \min(m,n)$)\\

$f_W$ & function & Network with frozen weights except adapted blocks \\

\hline
$S_{\mathrm{data}}$ & $\{(x_i,y_i)\}_{i=1}^N$ & Training dataset \\
$N$ & scalar & Number of training samples \\
$x_i$ & input & Input sample \\
$y_i$ & output & Target label \\

$h(x)$ & $\R^n$ & Frozen input activation to adapted layer\\
$h_i$ & $\R^n$ & Activation for sample $x_i$ ($h_i=h(x_i)$)\\
$H$ & $\R^{n \times N}$ & Stacked feature matrix $[h_1,\dots,h_N]$ \\

\hline
$\mathcal{S}$ & $\{Q,K,V\}$ & Attention projection blocks \\
$H_S(x)$ & matrix & Frozen token matrix for block $S$ \\
$H_{S,i}$ & matrix & Token matrix for sample $x_i$ \\

\hline
$W_S$ & matrix & Adapted weight for block $S$ ($W_S=W_{S,0}+T_S$)\\
$W_{t,S}$ & matrix & Weight at stage $t$ for block $S$ \\
$T_{t,S}$ & matrix & Update at stage $t$ \\

$H_{t,S}(x;\cdot)$ & matrix & Token matrix entering block $S$ at stage $t$ \\
$H_{t,S,i}$ & matrix & Same for sample $x_i$ \\

\hline
$\mathcal{L}$ & function & Loss function \\
$\Phi,\Phi_S$ & operator & Linear maps \\
$g$ & function & Scalar function in loss decomposition \\
$s, s_S$ & vector & Affine shift(s)\\
\hline

\end{tabular}
\caption{Summary of notation and parameters for LoRA fine-tuning setup} \label{tab:notation-lora}
\end{table}

\begin{table}[h!]
\centering
\begin{tabular}{lll}
\hline
\textbf{Symbol} & \textbf{Type / Shape} & \textbf{Description}\\
\hline
$H_{Q,i}, H_{K,i}, H_{V,i}$ & matrices & Token matrices for query, key, value (sample $i$)\\
$H_Q, H_K, H_V$ & stacked matrices & $[H_{S,1},\dots,H_{S,N}]$ for $S \in\{Q,K,V\}$ \\

$d_k$ & scalar & Query/key head dimension \\
$d_v$ & scalar & Value head dimension \\
$m_Q, m_K$ & scalar & Output dims of $Q,K$ projections ($= d_k$)\\

\hline
$W_{S,0}$ & matrix & Frozen projection for $S \in\{Q,K,V\}$ \\
$T_S$ & matrix & Trainable LoRA update for projection $S$ \\

$\widetilde{Q}_i$ & matrix & $(W_{Q,0}+T_Q)H_{Q,i}$ (adapted queries)\\
$\widetilde{K}_i$ & matrix & $(W_{K,0}+T_K)H_{K,i}$ (adapted keys)\\
$\widetilde{V}_i$ & matrix & $(W_{V,0}+T_V)H_{V,i}$ (adapted values)\\

\hline
$A_i$ & $\R^{\varrho_{Q,i}\times \varrho_{K,i}}$ & Attention matrix (row-softmax)\\
$C_i^M$ & matrix & Attention mask (with $-\infty$ entries allowed)\\
$O_i$ & matrix & Attention output: $W_O \widetilde{V}_i A_i^\top$ \\
$\varrho_{Q,i},\varrho_{K,i}$ & scalars & Number of query/key tokens for sample $i$ \\
$\varrho_{t,Q,i},\varrho_{t,K,i}$ & scalars & Number of query/key token positions at stage $t$ for sample $i$ \\

\hline
$\mathcal{A}$ & set & Set of attention heads \\
$a \in\mathcal{A}$ & index & Head index \\

$\widetilde{Q}_i^{(a)},\widetilde{K}_i^{(a)},\widetilde{V}_i^{(a)}$ & matrices & Head-specific projections \\
$A_i^{(a)}$ & matrix & Head-specific attention matrix \\
$W_O^{(a)}$ & matrix & Output projection block for head $a$ \\

$O_i$ & matrix & Multi-head output: $\sum_{a} W_O^{(a)} \widetilde{V}_i^{(a)} (A_i^{(a)})^\top$ \\

\hline
$\mathcal{A}_t$ & set & Head set at stage $t$ \\
$d_{k,t}, d_{v,t}$ & scalars & Head dimensions at stage $t$ \\

$H_{t,S,i}$ & matrix & Token matrix at stage $t$, block $S$, sample $i$ \\
$H_{t,S}$ & stacked matrix & $[H_{t,S,1},\dots,H_{t,S,N}]$ \\

\hline
$z_t$ & element of $\mathcal{Z}_t$ & Stage-$t$ state in model \\
$\alpha=(a,S)$ & index & Block index: head $a$, projection $S \in\{Q,K,V\}$ \\
$\Phi_\alpha(T_\alpha)$ & map & $T_\alpha H_S$ \\
$\Phi_{t,\alpha}(T,z)$ & map & Stagewise linear map: $T_{t,\alpha} H_{t,S}(z)$ \\

\hline
\end{tabular}
\caption{Summary of attention notation and multi-stage linear mapping}\label{tab:notation-attn}
\end{table}

For a single block, we also write
\[
F_{\mathrm{primal}}(T)\coloneqq g(\Phi(T)+s)+\lambda\|T\|_\star,F_{\mathrm{lift}}(Z)\coloneqq g(\Phi(\bar{Z})+s)+\frac{\lambda}{2}\tr(Z),Z\succeq0.
\]

\paragraph{NTK linearization as a special case.}
Our setting also contains the NTK/lazy regime as a special case \citep{jacot2018neural, lee2019wide}.
Let $\Delta\theta\in\R^p$ denote the trainable parameter update; if the adapted parameter is a matrix, one may take $\Delta\theta=\operatorname{vec}(T)$.
The first-order Taylor model around $\theta_0$ is
\[
f(x;\theta_0+\Delta\theta)\approx f(x;\theta_0)+J_{\theta_0}(x)\Delta\theta,
\]
where $J_{\theta_0}(x)\in\R^{K\times p}$ is the Jacobian of the logits with respect to the trainable parameters at $\theta_0$.
Given training data $\{(x_i,y_i)\}_{i=1}^N$, define
\[
\Phi_{\mathrm{NTK}}(\Delta\theta)\coloneqq\bigl(J_{\theta_0}(x_1)\Delta\theta,\ldots,J_{\theta_0}(x_N)\Delta\theta\bigr)\in(\R^K)^N
\]
and
\[
s_{\mathrm{NTK}}\coloneqq\bigl(f(x_1;\theta_0),\ldots,f(x_N;\theta_0)\bigr)\in(\R^K)^N.
\]
Then any empirical loss depending only on the vector of training predictions can be written as
\[
\mathcal{L}_{\mathrm{NTK}}(\Delta\theta)=g\bigl(\Phi_{\mathrm{NTK}}(\Delta\theta)+s_{\mathrm{NTK}}\bigr)
\]
for a suitable map $g:(\R^K)^N\to\R$.
For a sample-wise loss, one may take
\[
g(z_1,\ldots,z_N)\coloneqq\frac1N\sum_{i=1}^N\ell(z_i,y_i).
\]
In the scalar-output case $K=1$, this reduces to a map into $\R^N$. We now introduce several auxiliary concepts used later in the analysis.

\begin{definition}[$C^1$-parameterizable subsets]\label{def:c1-param}
Let $\mathcal{V}$ be a finite-dimensional real inner-product space and let $\mathcal{G}\subseteq\mathcal{V}$.
We say that $\mathcal{G}$ is $C^1$-parameterizable with dimension at most $p$ if there exist finitely many open sets $U_j\subseteq\R^{p_j}$ with $p_j\le p$ and $C^1$ maps $f_j:U_j\to\mathcal{V}$ such that
\[
\mathcal{G}\subseteq\bigcup_{j=1}^Jf_j(U_j).
\]
\end{definition}

\begin{definition}[Reachable sets]\label{def:reach-z}
For $L$-stage block-linear dependence, set $\mathcal{Z}_{L+1}^{\mathrm{reach}}\coloneqq\mathcal{Z}_{L+1}$.
For $t=L,L-1,\ldots,1$, define the reachable stage-$t$ linear-output set
\[
\mathcal{U}_t^{\mathrm{reach}}\coloneqq\left\{\{\Phi_{t,S}(T_{t,S},z_{t+1})\}_{S\in\mathcal{S}_t}:z_{t+1}\in\mathcal{Z}_{t+1}^{\mathrm{reach}},T_{t,S}\in\R^{m_{t,S}\times n_{t,S}}\right\}\subseteq\prod_{S\in\mathcal{S}_t}\mathcal{E}_{t,S},
\]
and the reachable stage-$t$ state set
\[
\mathcal{Z}_t^{\mathrm{reach}}\coloneqq\left\{g_t\Bigl(\{u_{t,S}\}_{S\in\mathcal{S}_t},z_{t+1}\Bigr):z_{t+1}\in\mathcal{Z}_{t+1}^{\mathrm{reach}},u_{t,S}\in\mathcal{R}(\Phi_{t,S}(\cdot,z_{t+1}))\right\}.
\]
\end{definition}

\paragraph{Population and empirical risk.}
For a single update $T\in\R^{m\times n}$, define
\[
L(T)\coloneqq\E_{(x,y)\sim\mathcal{P}}[\ell(f_{W_0+T}(x),y)],\quad \mathcal{L}^{\mathrm{vanilla}}_{S_{\mathrm{data}}}(T)\coloneqq\frac1N\sum_{i=1}^N\ell(f_{W_0+T}(x_i),y_i).
\]
For multiple adapted matrices $\{T_S\}$, we use the same notation blockwise.
When the sample is fixed, we write $\mathcal{L}^{\mathrm{vanilla}}$ for $\mathcal{L}^{\mathrm{vanilla}}_{S_{\mathrm{data}}}$.

For $D\ge0$, define the deterministic activation ball
\[
\mathcal{Z}_{\mathrm{act}}(D,R_h)\coloneqq\left\{z\in\R^m:\|z\|_2\le(\|W_0\|_2+D)R_h\right\}.
\]
To state our generalization bounds, we also recall the empirical Rademacher complexity.
\paragraph{Empirical Rademacher complexity.}
Let $\mathcal{F}$ be a class of functions $f:\mathcal{X}\times\mathcal{Y}\to\R$.
The (conditional) empirical Rademacher complexity of $\mathcal{F}$ on $S_{\mathrm{data}}$ is
\[
\mathfrak{R}_{S_{\mathrm{data}}}(\mathcal{F})\coloneqq\E_\varepsilon\left[\sup_{f\in\mathcal{F}}\frac1N\sum_{i=1}^N\varepsilon_if(x_i,y_i)\mid S_{\mathrm{data}}\right],
\]
where $\varepsilon_1,\dots,\varepsilon_N$ are i.i.d.\ uniform signs in $\{-1,1\}$.
For $D\ge0$, define the trace-norm ball
\[
\B_\star(D)\coloneqq\{T\in\R^{m\times n}:\|T\|_\star\le D\},
\]
and define the loss class induced by trace-norm-bounded updates
\[
\mathcal{F}_D\coloneqq\{(x,y)\mapsto\ell(f_{W_0+T}(x),y):T\in\B_\star(D)\}.
\]

\paragraph{Miscellaneous.}
For $Z\in\mathbb{S}_+^{m+n}$ write $Z=\begin{bmatrix}Z_{11}&Z_{12}\\Z_{21}&Z_{22}\end{bmatrix}$ with $Z_{12}\in\R^{m\times n}$ and denote $\bar{Z}\coloneqq Z_{12}$.
For sets $A,B$ in a vector space, the Minkowski sum is $A+B\coloneqq\{a+b:a\in A,b\in B\}$.
We use $\operatorname{blkdiag}$ for a block diagonal matrix.
For a matrix $M$, let $P_M$ denote the orthogonal projector onto the column space $\mathcal{R}(M)$.
If $M=0$, then $P_M=0$.

\section{Generalization bounds}\label{sec:gen-full}

We now state the assumptions used to derive the generalization bounds in this section. 

\begin{assumption}[Bounded and Lipschitz loss]\label{ast:bounded-lipschitz-loss}
The loss satisfies $0\le\ell(\cdot,y)\le B$ and is $L_\ell$-Lipschitz ($L_\ell>0$) with respect to $\|\cdot\|_2$ for every $y$.
\end{assumption}

\begin{assumption}[Almost-sure bounded frozen activations]\label{ast:bounded-activations}
For $(x,y)\sim\mathcal{P}$, the frozen input activation to the single adapted block satisfies $\|h(x)\|_2\le R_h$ almost surely.
In the attention setting, for each $S\in\{Q,K,V\}$, the frozen token matrix satisfies $\|H_S(x)\|_F\le R_{H,S}$ almost surely.
Consequently, on every realized sample,
\[
\|h_i\|_2\le R_h, \qquad \|H_{S,i}\|_F\le R_{H,S}\ (i\in[N],\ S\in\{Q,K,V\}).
\]
\end{assumption}

\begin{assumption}[Inter-stage token Lipschitzness for $L$ consecutive attention modules]\label{ast:L-attn-token-lipschitz}
Fix trace-norm radii $(D_{t,S})_{t\in[L],S\in\mathcal{S}_t}$.
For each $t\in\{1,\ldots,L-1\}$, assume there exists a finite constant $L_{H,t}(D_{>t})$ such that for all $i\in[N]$ and all two upstream update collections $\{T_{\tau,S}\}_{\tau>t}$ and $\{T'_{\tau,S}\}_{\tau>t}$ satisfying $\|T_{\tau,S}\|_\star,\|T'_{\tau,S}\|_\star\le D_{\tau,S} (\tau>t)$,
we have
\begin{align*}
&\left(\sum_{S\in\mathcal{S}_t}\|H_{t,S,i}(\{T_{\tau,S'}\}_{\tau>t})-H_{t,S,i}(\{T'_{\tau,S'}\}_{\tau>t})\|_F^2\right)^{1/2}\\
&\le
L_{H,t}(D_{>t})\left(\sum_{\tau=t+1}^L\sum_{S\in\mathcal{S}_\tau}\|T_{\tau,S}H_{\tau,S,i}(\{T_{\nu,S'}\}_{\nu>\tau})-T'_{\tau,S}H_{\tau,S,i}(\{T'_{\nu,S'}\}_{\nu>\tau})\|_F^2\right)^{1/2}.
\end{align*}
\end{assumption}

\begin{assumption}[Regular frozen suffix on reachable activation sets for $L$ attention modules]\label{ast:L-attn-suffix}
Fix trace-norm radii $(D_{t,S})$.
Assume there exists a finite constant $L_{L\mathrm{attn}}(D)$ such that for all $i\in[N]$ and all pairs of reachable collections $\{Z_{t,S,i}\}_{t,S},\{Z'_{t,S,i}\}_{t,S}$,
one has
\[
\left\|\psi_{L\mathrm{attn},i}(\{Z_{t,S,i}\}_{t,S})-\psi_{L\mathrm{attn},i}(\{Z'_{t,S,i}\}_{t,S})\right\|_2\le L_{L\mathrm{attn}}(D)\left(\sum_{t=1}^L\sum_{S\in\mathcal{S}_t}\|Z_{t,S,i}-Z'_{t,S,i}\|_F^2\right)^{1/2}.
\]
\end{assumption}

For the single-block case, let $\sigma_i\in\{-1,1\}^m$ be an $m$-dimensional vector with independent Rademacher entries.
Let $E\in\{-1,1\}^{m\times N}$ be the matrix whose $i$-th column is $\sigma_i$.
Then $\sum_{i=1}^N\sigma_ih_i^\top=EH^\top$.

We also define
\[
L_\psi(D,R_h)\coloneqq\max_{i\in[N]}\sup_{z\in\mathcal{Z}_{\mathrm{act}}(D,R_h)}\|D\psi_i(z)\|_2.
\]

\begin{assumption}[Regular single-block frozen suffix on reachable activation sets]\label{ast:single-suffix-lipschitz}
Fix $D\ge0$.
Assume that for each $i\in[N]$, the map $\psi_i$ is $C^1$ on an open neighborhood of $\mathcal{Z}_{\mathrm{act}}(D,R_h)$.
\end{assumption}

The set $\mathcal{Z}_{\mathrm{act}}(D,R_h)$ is closed and bounded, hence compact.
Since each $D\psi_i$ is continuous, the supremum $L_\psi(D,R_h)$ is finite.

The next two lemmas formalize the two “frozen suffix” decompositions used in the analysis. The first applies to a single adapted module, and the second extends the idea to $L$ consecutive adapted attention modules. 

\begin{lemma}[Single-cut suffix representation for one adapted module]\label{lem:frozen-suffix-multi}
Assume Assumption~\ref{ast:frozen-suffix-new} holds.
Let $\mathcal{S}$ index adapted linear maps that lie behind one computation-graph cut, in the following sense:
for every training sample $i\in[N]$, the input tensor $H_{S,i}$ to each adapted map $S$ is computed before any map in $\mathcal{S}$ is applied and is independent of the whole update tuple $\{T_{S'}\}$.
Then there exist deterministic maps
\[
\psi_i:\prod_{S\in\mathcal{S}}\R^{m_S\times\varrho_{S,i}}\to\R^K
\]
depending only on $x_i$ and the frozen weights such that, for all updates $\{T_S\}$,
\[
f_{\{W_{S,0}+T_S\}}(x_i)=\psi_i\Bigl(\{(W_{S,0}+T_S)H_{S,i}\}\Bigr)=\psi_i\Bigl(\{W_{S,0}H_{S,i}+T_SH_{S,i}\}\Bigr).
\]
Consequently, for one adapted module, the empirical loss depends on each block $T_S$ only through the bottleneck outputs $\{T_SH_{S,i}\}_{i=1}^N$, or equivalently through the stacked linear operator $\Phi_S(T_S)=T_SH_S$.
\end{lemma}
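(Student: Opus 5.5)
The plan is to read off $\psi_i$ directly from the computation graph of the frozen network, using the cut hypothesis to split the forward pass on $x_i$ into a \emph{prefix} and a \emph{suffix}. Under Assumption~\ref{ast:frozen-suffix-new} the forward pass is a deterministic composition of primitive operations, so I can fix a topological order of the computation graph.

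\textbf{Step 1 (prefix).} By hypothesis, every input tensor $H_{S,i}$ with $S\in\mathcal{S}$ is produced by nodes that are evaluated before any adapted map in $\mathcal{S}$. All parameters on these nodes are frozen, so $H_{S,i}$ is a deterministic function of $x_i$ and the frozen weights alone, and it does not depend on $\{T_{S'}\}$.

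\textbf{Step 2 (suffix).} I then define $\psi_i$ as the map that takes the tuple $\{Y_S\}_{S\in\mathcal{S}}$, with $Y_S\in\R^{m_S\times\varrho_{S,i}}$, and substitutes $Y_S$ for the output of the node implementing the adapted map $S$. Every other node is evaluated exactly as in the frozen network. This includes any node whose value does not depend on the adapted outputs; those can simply be recomputed from $x_i$, and they are already fixed once $x_i$ and the frozen weights are fixed. Determinism ensures that $\psi_i$ is a well-defined function of $\{Y_S\}$ that depends only on $x_i$ and the frozen weights.

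\textbf{Step 3 (the substitution identity).} I need to check that running the adapted network equals $\psi_i$ evaluated at $Y_S=(W_{S,0}+T_S)H_{S,i}$. In the adapted network, the only parameter-dependent nodes are the adapted maps, and their outputs are exactly these $Y_S$ by Step 1. An induction along the topological order then shows that every downstream node coincides with its counterpart in the computation defining $\psi_i$. The second equality is just distributivity of matrix multiplication.

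\textbf{Step 4 (conclusion).} The empirical loss is $\frac1N\sum_i \ell\bigl(\psi_i(\{W_{S,0}H_{S,i}+T_SH_{S,i}\}),y_i\bigr)$. In this expression $W_{S,0}H_{S,i}$ is a fixed shift, so the loss depends on each $T_S$ only through $\{T_SH_{S,i}\}_i$. Horizontally stacking over $i$ gives $\Phi_S(T_S)=T_SH_S$.

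\textbf{Main obstacle.} The delicate point is making the "cut" precise. The argument requires that no adapted map's output feeds back into another adapted map's input. This is exactly the hypothesis that each $H_{S,i}$ is computed before any map in $\mathcal{S}$ is applied, and it is what excludes, for example, an adapted $W_O$ together with same-stage $Q/K/V$ updates. It is also where Assumption~\ref{ast:frozen-suffix-new} is genuinely needed: stochastic layers such as Dropout, or batch-coupled layers such as BatchNorm, would make $\psi_i$ random or dependent on other samples. Once the cut is formalized via the topological order, everything else is bookkeeping.
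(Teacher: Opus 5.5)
Your proposal is correct and follows the same route as the paper. Both arguments fix $i$ and use the cut hypothesis to make $H_{S,i}$ a frozen prefix quantity, define $\psi_i$ as the deterministic frozen suffix from the cut outputs to the logits, and substitute $(W_{S,0}+T_S)H_{S,i}$. Your topological-order induction simply spells out the step the paper asserts in one line.
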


\begin{proof}[Proof of Lemma~\ref{lem:frozen-suffix-multi}]
Fix $i\in[N]$.
By the single-cut assumption, the tensors $\{H_{S,i}\}$ and all cached quantities produced before the adapted module are fixed functions of $x_i$ and the frozen weights.
Cut the computation graph at the outputs $Z_{S,i}\coloneqq(W_{S,0}+T_S)H_{S,i},S\in\mathcal{S}$.
After this cut, all remaining operations, residual connections, attention operations, nonlinearities, and output layers are frozen and deterministic.
Therefore, the downstream computation defines a deterministic suffix map
\[
\psi_i:\prod_{S\in\mathcal{S}}\R^{m_S\times\varrho_{S,i}}\to\R^K
\]
from the collection $\{Z_{S,i}\}$ to the logits.
Substituting $Z_{S,i}=W_{S,0}H_{S,i}+T_SH_{S,i}$ gives the displayed representation.
Thus, changing $T_S$ while keeping the rest of the module fixed can affect the empirical loss only through the finite collection $\{T_SH_{S,i}\}_{i=1}^N$.
\end{proof}

\begin{lemma}[Multi-cut decomposition for $L$ consecutive adapted attention modules]\label{lem:frozen-suffix-L-attn}
Assume Assumption~\ref{ast:frozen-suffix-new} holds.
Consider $L$ consecutive attention modules, indexed so that larger $t$ is more upstream.
For each stage $t$, the token matrix $H_{t,S,i}(\{T_{\tau,S'}\}_{\tau>t})$ entering block $S$ is allowed to depend on upstream updates, but not on the current block update $T_{t,S}$.
Then there exist deterministic maps
\[
\psi_{L\mathrm{attn},i}:\prod_{t=1}^L\prod_{S\in\mathcal{S}_t}\R^{m_{t,S}\times\varrho_{t,S,i}}\to\R^K
\]
such that, for all update collections $\{T_{t,S}\}$,
\[
f_{\{W_{t,S,0}+T_{t,S}\}}(x_i)=\psi_{L\mathrm{attn},i}\Bigl(\{(W_{t,S,0}+T_{t,S})H_{t,S,i}(\{T_{\tau,S'}\}_{\tau>t})\}_{t\in[L],S\in\mathcal{S}_t}\Bigr).
\]
\end{lemma}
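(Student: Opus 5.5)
The plan is to build $\psi_{L\mathrm{attn},i}$ by cutting the computation graph of sample $x_i$ at every adapted output, exactly as in Lemma~\ref{lem:frozen-suffix-multi}, and to check by downward induction on $t$ that the resulting composite map reproduces the network output. Fix $i\in[N]$. By Assumption~\ref{ast:frozen-suffix-new} the forward pass is a deterministic composition of frozen operations interleaved with the adapted linear maps. For each stage $t$ and block $S\in\mathcal{S}_t$, introduce a free variable $Z_{t,S}\in\R^{m_{t,S}\times\varrho_{t,S,i}}$ standing for the output of the adapted map $S$ at stage $t$. We then define $\psi_{L\mathrm{attn},i}(\{Z_{t,S}\})$ as the output of the modified forward pass on $x_i$ in which, at each adapted site, the computed product $(W_{t,S,0}+T_{t,S})H_{t,S,i}$ is replaced by the supplied $Z_{t,S}$. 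Every other operation is frozen: residuals, normalizations in evaluation mode, softmax, frozen $W_O$, MLPs, and the head. So this modified pass is a well-defined deterministic function of $x_i$, the frozen weights, and the tuple $\{Z_{t,S}\}$, and it does not involve any $T$.

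Next, verify the identity
\[
f_{\{W_{t,S,0}+T_{t,S}\}}(x_i)=\psi_{L\mathrm{attn},i}\Bigl(\{(W_{t,S,0}+T_{t,S})H_{t,S,i}(\{T_{\tau,S'}\}_{\tau>t})\}_{t\in[L],S\in\mathcal{S}_t}\Bigr).
\]
Fix an update collection $\{T_{t,S}\}$ and substitute $Z_{t,S}^\circ\coloneqq(W_{t,S,0}+T_{t,S})H_{t,S,i}(\{T_{\tau,S'}\}_{\tau>t})$. We argue by induction from the most upstream stage $t=L$ downward that the intermediate activations of the modified pass coincide with those of the true adapted pass.

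At stage $L$, everything before the cut is frozen. Hence the token matrices $H_{L,S,i}$ agree in both passes, and the adapted outputs agree by the choice of $Z_{L,S}^\circ$. Between stage $t+1$ and stage $t$, the computations are frozen functions of the activations already produced, which agree by the induction hypothesis. Therefore the token matrices entering stage $t$ agree and equal $H_{t,S,i}(\{T_{\tau,S'}\}_{\tau>t})$. Here we use the hypothesis that these token matrices do not depend on the current-stage updates $T_{t,S}$; this is what guarantees that the inputs are fully determined before the stage-$t$ adapted maps act. The stage-$t$ outputs then agree by the definition of $Z_{t,S}^\circ$. After stage $1$ the remaining suffix is frozen, so the two final outputs coincide.

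The only delicate point is consistency. In the modified pass with arbitrary $Z$'s, the downstream token matrices are computed from the supplied $Z$'s, not from the $T$'s. We therefore need $\psi_{L\mathrm{attn},i}$ to be defined on all tuples, which is immediate, and to agree with the true network only along the substituted tuple, which is what the induction shows. I expect the main care to go into making the graph-cut ordering precise: within a stage, the $Q$, $K$, and $V$ inputs must all be computed before any stage-$t$ adapted map is applied. This is exactly the "not on the current block update" hypothesis, read blockwise within each stage.
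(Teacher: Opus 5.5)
Your proposal is correct and follows essentially the same route as the paper: cut the computation graph at every adapted output, take the frozen deterministic suffix as $\psi_{L\mathrm{attn},i}$, and argue downward from stage $L$ that the token matrices entering each stage, and hence the cut outputs, coincide with those of the true adapted pass. Your explicit ``modified forward pass'' definition of $\psi_{L\mathrm{attn},i}$ on arbitrary tuples, with agreement required only along the substituted tuple, makes precise what the paper's proof leaves implicit.
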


\begin{proof}[Proof of Lemma~\ref{lem:frozen-suffix-L-attn}]
Fix $i\in[N]$.
We cut the computation graph once at each adapted attention module.
At the most upstream adapted stage $L$, the inputs to the adapted projections are fixed by the frozen prefix, so the stage outputs have the form
\[
Z_{L,S,i}=(W_{L,S,0}+T_{L,S})H_{L,S,i}.
\]
Because the computation between stage $L$ and stage $L-1$ is deterministic, the token matrices entering stage $L-1$ are deterministic functions of the stage-$L$ cut outputs.
Thus, the stage-$L-1$ adapted outputs have the form
\[
Z_{L-1,S,i}=(W_{L-1,S,0}+T_{L-1,S})H_{L-1,S,i}(\{T_{\tau,S'}\}_{\tau>L-1}).
\]
Repeating this argument down to stage $1$ shows that every adapted output can be written as
\[
Z_{t,S,i}=(W_{t,S,0}+T_{t,S})H_{t,S,i}(\{T_{\tau,S'}\}_{\tau>t}).
\]
After all these cuts, the remaining suffix from the collection $\{Z_{t,S,i}\}_{t,S}$ to the logits is frozen and deterministic.
This suffix is the map $\psi_{L\mathrm{attn},i}$.
The displayed formula follows by substituting the expressions for the cut outputs.
\end{proof}

The Rademacher-complexity proof below requires applying vector contraction to functions whose natural domains are not all of Euclidean space, but only the reachable activation sets induced by trace-norm-bounded updates. To avoid introducing any extension assumption, we use the classical real-valued Lipschitz extension formula of \citet{mcshane1934extension}. The lemma is stated in the metric-space form needed later.

\begin{lemma}[McShane extension theorem]\label{lem:mcshane-extension}
Let $(X,d)$ be a metric space, let $\emptyset\neq A\subseteq X$, and let $f:A\to\R$ be $L$-Lipschitz.
Define
\[
\bar{f}(x)\coloneqq\inf_{a\in A}\bigl(f(a)+Ld(x,a)\bigr),x\in X.
\]
Then $\bar{f}:X\to\R$ is an $L$-Lipschitz extension of $f$.
\end{lemma}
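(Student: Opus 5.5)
We argue directly from the formula for $\bar{f}$, in three steps. First, $\bar{f}$ is real-valued. Second, it agrees with $f$ on $A$. Third, it is $L$-Lipschitz on all of $X$.

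\emph{Finiteness.} Fix a base point $a_0\in A$, which exists since $A\neq\emptyset$. Then $\bar{f}(x)\le f(a_0)+Ld(x,a_0)<\infty$. For the lower bound, take any $a\in A$. The Lipschitz property of $f$ on $A$ and the triangle inequality give
\[
f(a)+Ld(x,a)\ge f(a_0)-Ld(a,a_0)+Ld(x,a)\ge f(a_0)-Ld(x,a_0).
\]
Hence $\bar{f}(x)\ge f(a_0)-Ld(x,a_0)>-\infty$, so $\bar{f}$ maps $X$ into $\R$. This is the only place where nonemptiness of $A$ matters.

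\emph{Extension.} Let $x\in A$. Choosing $a=x$ in the infimum gives $\bar{f}(x)\le f(x)$. Conversely, for every $a\in A$ we have $f(a)+Ld(x,a)\ge f(x)$ by the Lipschitz property. Taking the infimum over $a$ yields $\bar{f}(x)\ge f(x)$, so $\bar{f}=f$ on $A$.

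\emph{Lipschitz bound.} For $x,y\in X$ and each $a\in A$, the triangle inequality gives
\[
f(a)+Ld(x,a)\le f(a)+Ld(y,a)+Ld(x,y).
\]
Taking the infimum over $a$ on both sides gives $\bar{f}(x)\le\bar{f}(y)+Ld(x,y)$. Exchanging the roles of $x$ and $y$ gives the reverse inequality, so $|\bar{f}(x)-\bar{f}(y)|\le Ld(x,y)$. Here we may subtract because both values are finite by the first step.

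The argument has no genuine obstacle. The only point requiring care is the finiteness step, since the infimum could a priori be $-\infty$. That step must come before the Lipschitz step, which subtracts the two infima.
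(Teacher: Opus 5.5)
Your proof is correct and follows essentially the same route as the paper: the extension step is obtained by choosing $a=x$ and applying the Lipschitz property, and the Lipschitz bound by the triangle inequality followed by taking infima and swapping $x,y$. The only addition is your explicit check that $\bar{f}$ is finite. The paper leaves this implicit; it also follows from the Lipschitz step applied with $y\in A$. It is nonetheless a sensible point of care before subtracting the two infima.
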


\begin{proof}[Proof of Lemma~\ref{lem:mcshane-extension}]
Fix $x\in A$.
For every $a\in A$, the $L$-Lipschitz property of $f$ gives
\[
f(a)+Ld(x,a)\ge f(x).
\]
Taking $a=x$ shows equality is attained, hence $\bar{f}(x)=f(x)$.
Thus $\bar{f}$ extends $f$.

Now fix $x,y\in X$.
For every $a\in A$,
\[
\bar{f}(x)\le f(a)+Ld(x,a)\le f(a)+Ld(y,a)+Ld(x,y).
\]
Taking the infimum over $a\in A$ yields $\bar{f}(x)\le\bar{f}(y)+Ld(x,y)$.
Exchanging the roles of $x$ and $y$ gives $\bar{f}(y)\le\bar{f}(x)+Ld(x,y)$.
Therefore $|\bar{f}(x)-\bar{f}(y)|\le Ld(x,y)$, so $\bar{f}$ is $L$-Lipschitz.
\end{proof}

We next show the standard Rademacher deviation bound that converts a complexity estimate into a high-probability generalization guarantee.

\begin{lemma}[Uniform deviation from conditional Rademacher complexity]\label{lem:rad-to-uniform}
Let $\mathcal{F}$ be a class of functions $f:\mathcal{X}\times\mathcal{Y}\to[0,B]$.
Let $S_{\mathrm{data}}=\{(x_i,y_i)\}_{i=1}^N$ be i.i.d.\ from $\mathcal{P}$ and write
\[
\widehat{\E}_Sf\coloneqq\frac1N\sum_{i=1}^Nf(x_i,y_i),\qquad \E f\coloneqq\E_{(x,y)\sim\mathcal{P}}[f(x,y)].
\]
Define the (conditional) empirical Rademacher complexity
\[
\mathfrak{R}_{S_{\mathrm{data}}}(\mathcal{F})\coloneqq\E_\varepsilon\left[\sup_{f\in\mathcal{F}}\frac1N\sum_{i=1}^N\varepsilon_if(x_i,y_i)\mid S_{\mathrm{data}}\right],
\]
where $\varepsilon_1,\dots,\varepsilon_N$ are i.i.d.\ uniform signs.
Then for every $\eta\in(0,1)$, with probability at least $1-\eta$ over $S\sim\mathcal{P}^N$,
\[
\sup_{f\in\mathcal{F}}\Bigl(\E f-\widehat{\E}_Sf\Bigr)\le2\mathfrak{R}_{S_{\mathrm{data}}}(\mathcal{F})+3B\sqrt{\frac{\log(2/\eta)}{2N}}.
\]
\end{lemma}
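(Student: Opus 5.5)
The plan is the standard symmetrization argument followed by a bounded-differences concentration step, applied twice.

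Define $\Psi(S)\coloneqq\sup_{f\in\mathcal{F}}(\E f-\widehat{\E}_Sf)$. Replacing one sample $(x_i,y_i)$ by an independent copy changes $\widehat{\E}_Sf$ by at most $B/N$ uniformly in $f$, since $f$ takes values in $[0,B]$. Hence $\Psi$ has bounded differences $B/N$. McDiarmid's inequality then gives, with probability at least $1-\eta/2$,
\[
\Psi(S)\le\E_S\Psi(S)+B\sqrt{\log(2/\eta)/(2N)}.
\]

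Next I bound $\E_S\Psi(S)$ by symmetrization with a ghost sample $S'=\{(x_i',y_i')\}$. Writing $\E f=\E_{S'}\widehat{\E}_{S'}f$ and pulling the supremum inside the expectation (Jensen) gives
\[
\E_S\Psi\le\E_{S,S'}\sup_f\frac1N\sum_i\bigl(f(x_i',y_i')-f(x_i,y_i)\bigr).
\]
Each summand is symmetric under swapping $(x_i,y_i)$ and $(x_i',y_i')$, so I can insert independent Rademacher signs $\varepsilon_i$ without changing the distribution. Splitting the supremum into two terms then gives $\E_S\Psi\le2\E_S\mathfrak{R}_{S}(\mathcal{F})$.

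It remains to replace the expected complexity by the conditional empirical one, which is why the statement uses $\mathfrak{R}_{S_{\mathrm{data}}}$. The map $S\mapsto\mathfrak{R}_S(\mathcal{F})$ also has bounded differences $B/N$: changing one point changes each inner sum by at most $B/N$ in absolute value, and this survives the supremum and the expectation over $\varepsilon$. A second application of McDiarmid gives, with probability at least $1-\eta/2$,
\[
\E_S\mathfrak{R}_S\le\mathfrak{R}_S+B\sqrt{\log(2/\eta)/(2N)}.
\]
Multiplying by $2$ and taking a union bound with the first event gives total deviation
\[
B\sqrt{\log(2/\eta)/(2N)}+2B\sqrt{\log(2/\eta)/(2N)}=3B\sqrt{\log(2/\eta)/(2N)},
\]
which holds with probability at least $1-\eta$. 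This is exactly the claimed bound.

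The only delicate point is measurability of the suprema over the possibly uncountable class $\mathcal{F}$. I would handle it in the standard way, by assuming $\mathcal{F}$ is pointwise separable or by working with outer expectations. In the application below, $\mathcal{F}_D$ is indexed continuously by the separable set $\B_\star(D)$, so the supremum reduces to a countable dense subset. Beyond that, the argument is routine, and the main thing to check carefully is that the constants ($2$ and $3$) come out exactly as stated.
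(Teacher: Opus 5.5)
Your proposal is correct and follows the paper's proof essentially step for step: McDiarmid on $\sup_f(\E f-\widehat{\E}_Sf)$ with bounded differences $B/N$, ghost-sample symmetrization giving $2\E_S\mathfrak{R}_S(\mathcal{F})$, a second McDiarmid application to the conditional complexity, and a union bound over the two $\eta/2$ events to get the constant $3$. Your measurability remark addresses a point the paper passes over silently, so it is a sensible addition rather than a change of route.
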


\begin{proof}[Proof of Lemma~\ref{lem:rad-to-uniform}]
Define
\[
g(S)\coloneqq\sup_{f\in\mathcal{F}}\bigl(\E f-\widehat{\E}_Sf\bigr).
\]
If we replace $(x_i,y_i)$ by an independent copy $(x_i',y_i')$ while keeping all other samples fixed, then for any fixed $f$,
\[
\left|\widehat{\E}_Sf-\widehat{\E}_{S_{\mathrm{data}}^{(i)}} f\right|=\frac1N\left|f(x_i,y_i)-f(x_i',y_i')\right|\le\frac{B}{N}.
\]
Taking the supremum over $f$ implies $|g(S)-g(S_{\mathrm{data}}^{(i)})|\le B/N$.
McDiarmid's inequality therefore yields: with probability at least $1-\eta/2$,
\[
g(S)\le\E[g(S)]+B\sqrt{\frac{\log(2/\eta)}{2N}}.
\]
Let $S'=\{(x_i',y_i')\}_{i=1}^N$ be an i.i.d.\ ghost sample from $\mathcal{P}^N$, independent of $S$.
Then
\[
\E[g(S)]=\E_S\left[\sup_{f\in\mathcal{F}}\Bigl(\E_{S'}\widehat{\E}_{S'} f-\widehat{\E}_Sf\Bigr)\right]\le\E_{S,S'}\left[\sup_{f\in\mathcal{F}}\Bigl(\widehat{\E}_{S'} f-\widehat{\E}_Sf\Bigr)\right].
\]
Introduce i.i.d.\ signs $\varepsilon_1,\dots,\varepsilon_N$, independent of $(S,S')$, and use that $(S,S')$ and $(S',S)$ have the same distribution to obtain
\[
\E_{S,S'}\left[\sup_{f}\frac1N\sum_{i=1}^N\bigl(f(x_i',y_i')-f(x_i,y_i)\bigr)\right]=\E_{S,S',\varepsilon}\left[\sup_{f}\frac1N\sum_{i=1}^N\varepsilon_i\bigl(f(x_i',y_i')-f(x_i,y_i)\bigr)\right].
\]
By the triangle inequality and symmetry,
\[
\E[g(S)]\le2\E_S\left[\E_\varepsilon\left[\sup_{f\in\mathcal{F}}\frac1N\sum_{i=1}^N\varepsilon_if(x_i,y_i)\mid S_{\mathrm{data}}\right]\right]=2\E_S[\mathfrak{R}_{S_{\mathrm{data}}}(\mathcal{F})].
\]
Fix $\varepsilon$ and define
\[
h_\varepsilon(S)\coloneqq\sup_{f\in\mathcal{F}}\frac1N\sum_{i=1}^N\varepsilon_if(x_i,y_i).
\]
Replacing one sample changes $h_\varepsilon(S)$ by at most $B/N$, hence $\mathfrak{R}_{S_{\mathrm{data}}}(\mathcal{F})=\E_\varepsilon[h_\varepsilon(S)|S]$ also satisfies bounded differences with constant $B/N$.
McDiarmid implies: with probability at least $1-\eta/2$,
\[
\E_S[\mathfrak{R}_{S_{\mathrm{data}}}(\mathcal{F})]\le\mathfrak{R}_{S_{\mathrm{data}}}(\mathcal{F})+B\sqrt{\frac{\log(2/\eta)}{2N}}.
\]
On the intersection of the two $1-\eta/2$ events and using $\E[g(S)]\le2\E[\mathfrak{R}_{S_{\mathrm{data}}}(\mathcal{F})]$, we get
\[
g(S)\le2\mathfrak{R}_{S_{\mathrm{data}}}(\mathcal{F})+3B\sqrt{\frac{\log(2/\eta)}{2N}}.
\]
as desired. 
\end{proof}
Combining these results gives the main single-block generalization result. The bound reduces uniform generalization over the trace-norm ball to the empirical Rademacher complexity of the induced loss class, and then to the spectral norm of a signed feature matrix.
\begin{theorem}[Uniform generalization for trace-norm-bounded updates in frozen networks]\label{thm:gen-full-single}
Assume Assumptions~\ref{ast:frozen-suffix-new},\ref{ast:bounded-lipschitz-loss},\ref{ast:bounded-activations}, and \ref{ast:single-suffix-lipschitz} hold.
Then with probability at least $1-\eta$ over $S_{\mathrm{data}}\sim\mathcal{P}^N$,
\[
\sup_{T\in\B_\star(D)}\Bigl(L(T)-\mathcal{L}^{\mathrm{vanilla}}_{S_{\mathrm{data}}}(T)\Bigr)\le2\mathfrak{R}_{S_{\mathrm{data}}}(\mathcal{F}_D)+3B\sqrt{\frac{\log(2/\eta)}{2N}}.
\]
Moreover,
\begin{equation}\label{eqn:rad-main}
\mathfrak{R}_{S_{\mathrm{data}}}(\mathcal{F}_D)\le\frac{\sqrt{2}L_\ell L_\psi(D,R_h)D}{N}\E_\sigma\bigl[\|EH^\top\|_2\mid S_{\mathrm{data}}\bigr],
\end{equation}
\end{theorem}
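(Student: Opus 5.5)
The first inequality is exactly Lemma~\ref{lem:rad-to-uniform} applied to $\mathcal{F}_D$. Its functions take values in $[0,B]$ by Assumption~\ref{ast:bounded-lipschitz-loss}, so only the Rademacher bound~\eqref{eqn:rad-main} needs proof. The plan is to peel off the loss and the frozen suffix by a vector contraction inequality, leaving a linear class in $T$, and then to use trace/spectral duality.

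\textbf{Step 1 (suffix representation).} By Lemma~\ref{lem:frozen-suffix-multi} with a single adapted block, $f_{W_0+T}(x_i)=\psi_i(W_0h_i+Th_i)$. For $T\in\B_\star(D)$ we have $\|(W_0+T)h_i\|_2\le(\|W_0\|_2+\|T\|_2)R_h\le(\|W_0\|_2+D)R_h$, using $\|T\|_2\le\|T\|_\star$ and Assumption~\ref{ast:bounded-activations}. Hence the argument $z_i(T)\coloneqq(W_0+T)h_i$ lies in the convex ball $\mathcal{Z}_{\mathrm{act}}(D,R_h)$.

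\textbf{Step 2 (Lipschitz composite).} On this convex set, Assumption~\ref{ast:single-suffix-lipschitz} and the mean value inequality make $\psi_i$ $L_\psi(D,R_h)$-Lipschitz. The map $\phi_i(z)\coloneqq\ell(\psi_i(z),y_i)$ is therefore $L_\ell L_\psi$-Lipschitz on $\mathcal{Z}_{\mathrm{act}}$. McShane (Lemma~\ref{lem:mcshane-extension}) extends it to all of $\R^m$ with the same constant; this is harmless because only values on the reachable set enter the complexity.

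\textbf{Step 3 (vector contraction).} Apply Maurer's vector-contraction inequality to the class $\{T\mapsto z_i(T)\}$ with the extended $\phi_i$:
\[
\E_\varepsilon\sup_T\sum_i\varepsilon_i\phi_i(z_i(T))\le\sqrt2\,L_\ell L_\psi\,\E_\sigma\sup_T\sum_i\langle\sigma_i,W_0h_i+Th_i\rangle .
\]
The $W_0h_i$ term has zero mean in $\sigma$, and the supremum does not affect it since it does not depend on $T$, so it drops out. We are left with $\sup_{\|T\|_\star\le D}\langle T,\sum_i\sigma_ih_i^\top\rangle=\sup\langle T,EH^\top\rangle=D\|EH^\top\|_2$ by nuclear/spectral duality. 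Dividing by $N$ gives~\eqref{eqn:rad-main}.

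\textbf{Main obstacle.} The delicate point is Step~3. Maurer's inequality requires each $\phi_i$ to be Lipschitz on all of $\R^m$, which is why the McShane extension is needed. One must also check that the extended and original losses agree on every $z_i(T)$, $T\in\B_\star(D)$, so that the supremum is unchanged. Convexity of the ball is what makes the derivative bound on $\psi_i$ translate into a Lipschitz constant.
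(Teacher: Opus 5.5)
Your proposal is correct and follows the paper's proof: Lemma~\ref{lem:rad-to-uniform} gives the first display, and the Rademacher bound comes from a McShane extension, Maurer's vector contraction, and trace/spectral duality. The only differences are cosmetic. You contract on the full activation $(W_0+T)h_i$ and discard the $W_0h_i$ term because it has zero $\sigma$-mean, whereas the paper shifts to $z=Th_i$ and centers $\phi_i$ at $0$. Your mean-value argument on the convex ball $\mathcal{Z}_{\mathrm{act}}(D,R_h)$ also makes explicit the Lipschitz claim that the paper only asserts.
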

Theorem~\ref{thm:gen-full-single} reduces uniform generalization over the trace-norm ball $\B_\star(D)$ to the spectral norm of a random signed feature matrix $EH^\top$.

\begin{proof}[Proof of Theorem~\ref{thm:gen-full-single}]
The first display follows immediately from Lemma~\ref{lem:rad-to-uniform} applied to the bounded class $\mathcal{F}_D$.
It remains to prove \eqref{eqn:rad-main}.
For each $i\in[N]$, define
\[
K_i(D)\coloneqq\{Th_i:\|T\|_\star\le D\}\subseteq\R^m,\phi_i(z)\coloneqq\ell\bigl(\psi_i(W_0h_i+z),y_i\bigr),z\in K_i(D).
\]
Whenever $z=Th_i$ and $z'=T'h_i$ with $\|T\|_\star,\|T'\|_\star\le D,|\phi_i(z)-\phi_i(z')|\le L_\ell L_\psi(D,R_h)\|z-z'\|_2$.
Define $\widetilde{\phi}_i(z)\coloneqq\phi_i(z)-\phi_i(0)$ on $K_i(D)$.
Then $\widetilde{\phi}_i(0)=0$, and $\widetilde{\phi}_i$ is $L_\ell L_\psi(D,R_h)$-Lipschitz on $K_i(D)$.

Moreover, for every realization of $\varepsilon$,
\[
\sup_{\|T\|_\star\le D}\frac1N\sum_{i=1}^N\varepsilon_i\phi_i(Th_i)=\sup_{\|T\|_\star\le D}\frac1N\sum_{i=1}^N\varepsilon_i\widetilde{\phi}_i(Th_i)+\frac1N\sum_{i=1}^N\varepsilon_i\phi_i(0).
\]
Taking the conditional expectation and using
\[
\E_\varepsilon\left[\sum_{i=1}^N\varepsilon_i\phi_i(0)\mid S_{\mathrm{data}}\right]=0
\]
gives
\[
\mathfrak{R}_{S_{\mathrm{data}}}(\mathcal{F}_D)=\E_\varepsilon\left[\sup_{\|T\|_\star\le D}\frac1N\sum_{i=1}^N\varepsilon_i\widetilde{\phi}_i(Th_i)\mid S_{\mathrm{data}}\right].
\]

By Lemma~\ref{lem:mcshane-extension}, each $\widetilde{\phi}_i$ extends from $K_i(D)$ to a function $\bar\phi_i:\R^m\to\R$ with the same Lipschitz constant $L_\ell L_\psi(D,R_h)$.
Since $Th_i\in K_i(D)$ for every feasible $T$, we may write
\[
\mathfrak{R}_{S_{\mathrm{data}}}(\mathcal{F}_D)=\E_\varepsilon\left[\sup_{\|T\|_\star\le D}\frac1N\sum_{i=1}^N\varepsilon_i\bar\phi_i(Th_i)\mid S_{\mathrm{data}}\right].
\]
Applying Maurer's vector contraction inequality \cite{maurer2016vector} gives
\[
\mathfrak{R}_{S_{\mathrm{data}}}(\mathcal{F}_D)\le\frac{\sqrt{2}L_\ell L_\psi(D,R_h)}{N}\E_\sigma\left[\sup_{\|T\|_\star\le D}\sum_{i=1}^N\langle\sigma_i,Th_i\rangle\mid S_{\mathrm{data}}\right].
\]

Stacking the $\sigma_i$ as columns of $E$ gives
\[
\sum_{i=1}^N\langle\sigma_i,Th_i\rangle=\left\langle\sum_{i=1}^N\sigma_ih_i^\top,T\right\rangle_F=\langle EH^\top,T\rangle_F.
\]
Therefore, by trace/spectral duality,
\[
\sup_{\|T\|_\star\le D}\sum_{i=1}^N\langle\sigma_i,Th_i\rangle=D\|EH^\top\|_2.
\]
Substituting this into the previous display yields
\eqref{eqn:rad-main}.
\end{proof}

\paragraph{Why generalization depends on $\|EH^\top\|_2$.}
The trace norm is the dual of the operator norm: $\sup_{\|T\|_\star\le D}\langle T,M\rangle=D\|M\|_2$.
After symmetrization and vector contraction, the complexity of the frozen-network loss class reduces to such a dual pairing with $M=\sum_i\sigma_ih_i^\top=EH^\top$. 
Thus, the effective capacity is governed by the spectral geometry of frozen features, not by the ambient parameter dimension $mn$.

From here, for each $(t,S)$, let $E_{t,S}$ be an independent Rademacher matrix in $\{-1,1\}^{m_{t,S}\times\sum_{i=1}^N\varrho_{t,S,i}}$.

\paragraph{Standing notation for the $L$-stage generalization argument.}
Fix a realized sample $S_{\mathrm{data}}=\{(x_i,y_i)\}_{i=1}^N$ and radii $D=(D_{t,S})_{t\in[L],S\in\mathcal{S}_t}$.
Define
\[
\Theta_D\coloneqq\{\{T_{t,S}\}_{t\in[L],S\in\mathcal{S}_t}:\|T_{t,S}\|_\star\le D_{t,S}\ \forall t\in[L],S\in\mathcal{S}_t\}.
\]
For each $(t,S)$ let $\varrho_{t,S}^{\max}\coloneqq\max_{i\in[N]}\varrho_{t,S,i}$, and define the Hilbert product space
\[
\mathcal{H}\coloneqq\bigoplus_{t=1}^L\bigoplus_{S\in\mathcal{S}_t}\R^{m_{t,S}\times\varrho_{t,S}^{\max}},
\]
equipped with the product Frobenius norm, where each $\R^{m_{t,S}\times\varrho_{t,S,i}}$ is identified with its zero-padded copy inside $\R^{m_{t,S}\times\varrho_{t,S}^{\max}}$.

For each $i\in[N]$, define the reachable observed-output set
\[
K_i(D)\coloneqq\left\{\Bigl(\{T_{t,S}H_{t,S,i}(\{T_{\tau,S'}\}_{\tau>t})\}_{S\in\mathcal{S}_t}\Bigr)_{t=1}^L:\{T_{t,S}\}\in\Theta_D\right\}\subseteq\mathcal{H}.
\]
Since the all-zero update tuple belongs to $\Theta_D$, we have $0\in K_i(D)$.

For $z\in K_i(D)$, choose any feasible tuple $\{T_{t,S}\}\in\Theta_D$ producing $z$ and define
\[
\phi_i(z)\coloneqq\ell(f_{\{W_{t,S,0}+T_{t,S}\}}(x_i),y_i),\widetilde{\phi}_i(z)\coloneqq\phi_i(z)-\phi_i(0).
\]
This is well-defined by Lemma~\ref{lem:frozen-suffix-L-attn}, because the observed-output tuple determines all full adapted outputs stage by stage and therefore determines the logits uniquely.

Finally, define
\[
\Psi_i(\{T_{t,S}\})\coloneqq\Bigl(\{T_{t,S}H_{t,S,i}(\{T_{\tau,S'}\}_{\tau>t})\}_{S\in\mathcal{S}_t}\Bigr)_{t=1}^L\in\mathcal{H}.
\]

Define the corresponding loss class
\[
\mathcal{F}_D^{(L)}\coloneqq\left\{(x,y)\mapsto\ell(f_{\{W_{t,S,0}+T_{t,S}\}}(x),y):\{T_{t,S}\}\in\Theta_D\right\}.
\]

The multi-stage attention bound has one additional difficulty compared with the single-block bound: perturbing an upstream observed output can change the token matrices seen by downstream adapted blocks.
The next deterministic lemma isolates this recursive amplification.

\begin{lemma}[Recursive triangular amplification]\label{lem:L-attn-gamma}
Fix nonnegative numbers $\rho_1,\ldots,\rho_{L-1}$.
Set
\[
\Gamma_L\coloneqq1,\qquad \Gamma_t\coloneqq\left\|
\begin{bmatrix}
1&\rho_t\\
0&\Gamma_{t+1}
\end{bmatrix}
\right\|_2(t=L-1,\ldots,1).
\]
If nonnegative numbers $a_t,b_t$ satisfy
\[
b_L\le a_L,b_t\le a_t+\rho_t\left(\sum_{\tau=t+1}^La_\tau^2\right)^{1/2}(t=1,\ldots,L-1),
\]
then for every $t\in[L]$,
\[
\left(\sum_{\tau=t}^Lb_\tau^2\right)^{1/2}\le\Gamma_t\left(\sum_{\tau=t}^La_\tau^2\right)^{1/2}.
\]
\end{lemma}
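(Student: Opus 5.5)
The argument is a backward induction on $t$, from $t=L$ down to $t=1$, carrying the hypothesis
\[
\Bigl(\sum_{\tau=t}^Lb_\tau^2\Bigr)^{1/2}\le\Gamma_t\Bigl(\sum_{\tau=t}^La_\tau^2\Bigr)^{1/2}.
\]
Throughout write $A_t\coloneqq(\sum_{\tau=t}^La_\tau^2)^{1/2}$ and $B_t\coloneqq(\sum_{\tau=t}^Lb_\tau^2)^{1/2}$.

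\textbf{Base case.} For $t=L$ we have $B_L=b_L\le a_L=A_L=\Gamma_LA_L$, using $b_L\ge0$ and $\Gamma_L=1$.

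\textbf{Inductive step.} Fix $t\le L-1$ and assume $B_{t+1}\le\Gamma_{t+1}A_{t+1}$. Since $B_t^2=b_t^2+B_{t+1}^2$ and $b_t\ge0$, the hypothesis on $b_t$ together with the induction hypothesis (all quantities nonnegative, so squaring preserves the inequalities) gives
\[
B_t^2\le\bigl(a_t+\rho_tA_{t+1}\bigr)^2+\bigl(\Gamma_{t+1}A_{t+1}\bigr)^2.
\]
The right-hand side is exactly $\|Mv\|_2^2$, where
\[
M=\begin{bmatrix}1&\rho_t\\0&\Gamma_{t+1}\end{bmatrix},\qquad v=(a_t,A_{t+1})^\top.
\]
Hence
\[
B_t\le\|M\|_2\|v\|_2=\Gamma_t\bigl(a_t^2+A_{t+1}^2\bigr)^{1/2}=\Gamma_tA_t,
\]
which closes the induction.

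\textbf{Points to check.} First, $b_t\ge0$ is what allows passing from the one-sided bound on $b_t$ to a bound on $b_t^2$. Second, replacing $\sum_{\tau>t}a_\tau^2$ by $A_{t+1}^2$ is exactly the recursive structure built into the definition of $\Gamma_t$. The argument is essentially elementary; the only real step is recognizing the two-term bound as the norm of a $2\times2$ triangular matrix acting on $(a_t,A_{t+1})$.
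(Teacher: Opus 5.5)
Your proposal is correct and follows essentially the same route as the paper: backward induction on $t$, bounding $b_t^2+B_{t+1}^2$ by $\|Mv\|_2^2$ with $M=\begin{bmatrix}1&\rho_t\\0&\Gamma_{t+1}\end{bmatrix}$ and $v=(a_t,A_{t+1})^\top$, then applying the operator-norm bound. Your explicit remark that nonnegativity justifies squaring both inequalities is a detail the paper leaves implicit.
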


\begin{proof}[Proof of Lemma~\ref{lem:L-attn-gamma}]
We argue by backward induction on $t$.
The case $t=L$ is immediate.
Assume the claim holds for $t+1$ and set $c\coloneqq\left(\sum_{\tau=t+1}^La_\tau^2\right)^{1/2}$.
Then
\[
\left(\sum_{\tau=t}^Lb_\tau^2\right)^{1/2}\le\sqrt{(a_t+\rho_t c)^2+\Gamma_{t+1}^2c^2}=\left\|
\begin{bmatrix}
1&\rho_t\\
0&\Gamma_{t+1}
\end{bmatrix}
\binom{a_t}{c}\right\|_2\le\Gamma_t\sqrt{a_t^2+c^2},
\]
which is exactly the desired bound.
\end{proof}
The previous lemma provides the recursive norm-growth factor that controls how perturbations accumulate across consecutive stages.
Combining it with the observed output Lipschitz bound yields the following estimate for the full $L$-module attention. 
\begin{lemma}[Observed-output Lipschitzness for $L$ consecutive attention modules]\label{lem:L-attn-suffix-lipschitz}
Assume Assumptions~\ref{ast:frozen-suffix-new},\ref{ast:bounded-lipschitz-loss},\ref{ast:L-attn-token-lipschitz}, and \ref{ast:L-attn-suffix} hold.
For $t=1,\ldots,L-1$, define
\[
\|W_{t,0}\|_{\mathcal{S}_t}\coloneqq\left(\sum_{S\in\mathcal{S}_t}\|W_{t,S,0}\|_2^2\right)^{1/2},\rho_t(D)\coloneqq\|W_{t,0}\|_{\mathcal{S}_t}L_{H,t}(D_{>t}),
\]
and let $\Gamma_t(D)$ be the constants from Lemma~\ref{lem:L-attn-gamma}.
Then for any two update collections in the product trace-norm ball and every $i\in[N]$,
\begin{align*}
&\left|\ell\bigl(f_{\{W_{t,S,0}+T_{t,S}\}}(x_i),y_i\bigr)-\ell\bigl(f_{\{W_{t,S,0}+T'_{t,S}\}}(x_i),y_i\bigr)\right|\\
&\le L_\ell L_{L\mathrm{attn}}(D)\Gamma_1(D)\left(\sum_{t=1}^L\sum_{S\in\mathcal{S}_t}\left\|T_{t,S}H_{t,S,i}(\{T_{\tau,S'}\}_{\tau>t})-T'_{t,S}H_{t,S,i}(\{T'_{\tau,S'}\}_{\tau>t})\right\|_F^2\right)^{1/2}.
\end{align*}
\end{lemma}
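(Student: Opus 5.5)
The plan is to combine the two Lipschitz ingredients: Assumption~\ref{ast:L-attn-suffix} controls the logits in terms of the full adapted outputs, and Assumption~\ref{ast:L-attn-token-lipschitz} controls how upstream changes move the downstream token matrices. Lemma~\ref{lem:L-attn-gamma} then aggregates the stagewise bounds.

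\textbf{Step 1: reduce to full adapted outputs.} Fix $i$ and two collections $\{T_{t,S}\}$ and $\{T'_{t,S}\}$ in $\Theta_D$. By Lemma~\ref{lem:frozen-suffix-L-attn}, both logits are $\psi_{L\mathrm{attn},i}$ evaluated at the full adapted outputs
\[
Z_{t,S,i}=(W_{t,S,0}+T_{t,S})H_{t,S,i}(\{T_{\tau,S'}\}_{\tau>t})
\]
and the analogous $Z'_{t,S,i}$. Both tuples are reachable. Using the $L_\ell$-Lipschitzness of $\ell$ and then Assumption~\ref{ast:L-attn-suffix}, the loss difference is at most
\[
L_\ell L_{L\mathrm{attn}}(D)\Bigl(\sum_{t}\sum_{S\in\mathcal{S}_t}\|Z_{t,S,i}-Z'_{t,S,i}\|_F^2\Bigr)^{1/2}.
\]

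\textbf{Step 2: stagewise split.} Introduce the observed-output differences
\[
\Delta_{t,S}\coloneqq T_{t,S}H_{t,S,i}(\cdot)-T'_{t,S}H_{t,S,i}(\cdot'),
\]
and set $a_t\coloneqq(\sum_{S}\|\Delta_{t,S}\|_F^2)^{1/2}$ and $b_t\coloneqq(\sum_S\|Z_{t,S,i}-Z'_{t,S,i}\|_F^2)^{1/2}$. Write
\[
Z_{t,S,i}-Z'_{t,S,i}=\Delta_{t,S}+W_{t,S,0}\bigl(H_{t,S,i}(\cdot)-H_{t,S,i}(\cdot')\bigr).
\]
Apply the triangle inequality in the product Frobenius norm over $S\in\mathcal{S}_t$. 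For the frozen term, bound each $\|W_{t,S,0}\|_2\|\delta H_S\|_F$ termwise. Then apply Cauchy--Schwarz:
\[
\Bigl(\sum_S\|W_{t,S,0}\|_2^2\|\delta H_S\|_F^2\Bigr)^{1/2}\le\|W_{t,0}\|_{\mathcal{S}_t}\Bigl(\sum_S\|\delta H_S\|_F^2\Bigr)^{1/2}.
\]
This uses $\max_S\|W_{t,S,0}\|_2\le\|W_{t,0}\|_{\mathcal{S}_t}$. Next, apply Assumption~\ref{ast:L-attn-token-lipschitz} to the token differences. Its right-hand side is exactly $(\sum_{\tau>t}a_\tau^2)^{1/2}$, so
\[
b_t\le a_t+\rho_t(D)\Bigl(\sum_{\tau>t}a_\tau^2\Bigr)^{1/2},\qquad t<L.
\]
At stage $L$, the token matrices are fixed by the frozen prefix, so $b_L=a_L$.

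\textbf{Step 3: aggregate.} Lemma~\ref{lem:L-attn-gamma} with $t=1$ gives $(\sum_t b_t^2)^{1/2}\le\Gamma_1(D)(\sum_t a_t^2)^{1/2}$. Substituting this into Step 1 yields the claimed bound.

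The main obstacle is a bookkeeping issue in Step 2. The naive bound $\sum_S\|W_{t,S,0}\|_2\|\delta H_S\|_F$ must be converted into the $\ell_2$ form with the constant $\|W_{t,0}\|_{\mathcal{S}_t}$. The termwise-then-Cauchy--Schwarz route above handles this. I will also verify that the reachable-set hypothesis of Assumption~\ref{ast:L-attn-suffix} covers the tuples produced from $\Theta_D$.
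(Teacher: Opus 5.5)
Your proposal is correct and matches the paper's proof essentially line by line. You use the same decomposition $Z_{t,S,i}-Z'_{t,S,i}=W_{t,S,0}\Delta H_{t,S,i}+(\text{observed-output difference})$, the same operator-norm/Cauchy--Schwarz step across blocks to reach $b_t\le a_t+\rho_t(D)\bigl(\sum_{\tau>t}a_\tau^2\bigr)^{1/2}$, and the same aggregation via Lemma~\ref{lem:L-attn-gamma} with Assumption~\ref{ast:L-attn-suffix} and the $L_\ell$-Lipschitzness of $\ell$. The only differences are cosmetic: you apply the suffix and loss bounds first rather than last, and you note $b_L=a_L$ exactly where the paper only needs $b_L\le a_L$.
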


\begin{proof}[Proof of Lemma~\ref{lem:L-attn-suffix-lipschitz}]
Fix $i$.
Define
\[
a_t\coloneqq\left(\sum_{S\in\mathcal{S}_t}\left\|T_{t,S}H_{t,S,i}(\{T_{\tau,S'}\}_{\tau>t})-T'_{t,S}H_{t,S,i}(\{T'_{\tau,S'}\}_{\tau>t})\right\|_F^2\right)^{1/2}.
\]
Also, define the actual adapted outputs
\[
Z_{t,S,i}\coloneqq(W_{t,S,0}+T_{t,S})H_{t,S,i}(\{T_{\tau,S'}\}_{\tau>t}),
\]
\[
Z'_{t,S,i}\coloneqq(W_{t,S,0}+T'_{t,S})H_{t,S,i}(\{T'_{\tau,S'}\}_{\tau>t}),
\]
and
\[
b_t\coloneqq\left(\sum_{S\in\mathcal{S}_t}\|Z_{t,S,i}-Z'_{t,S,i}\|_F^2\right)^{1/2}.
\]
For the top stage $L$, the token matrices are frozen, hence $b_L\le a_L$.
For $t<L$, write
\[
\Delta H_{t,S,i}\coloneqq H_{t,S,i}(\{T_{\tau,S'}\}_{\tau>t})-H_{t,S,i}(\{T'_{\tau,S'}\}_{\tau>t}),
\]
so that
\[
Z_{t,S,i}-Z'_{t,S,i}=W_{t,S,0}\Delta H_{t,S,i}+\Bigl(T_{t,S}H_{t,S,i}(\{T_{\tau,S'}\}_{\tau>t})-T'_{t,S}H_{t,S,i}(\{T'_{\tau,S'}\}_{\tau>t})\Bigr).
\]
Therefore
\[
b_t\le a_t+\left(\sum_{S\in\mathcal{S}_t}\|W_{t,S,0}\Delta H_{t,S,i}\|_F^2\right)^{1/2}.
\]
Using $\|AB\|_F\le\|A\|_2\|B\|_F$ and Cauchy-Schwarz across $S$,
\[
b_t\le a_t+\|W_{t,0}\|_{\mathcal{S}_t}\left(\sum_{S\in\mathcal{S}_t}\|\Delta H_{t,S,i}\|_F^2\right)^{1/2}.
\]
Assumption~\ref{ast:L-attn-token-lipschitz} gives
\[
b_t\le a_t+\rho_t(D)\left(\sum_{\tau=t+1}^La_\tau^2\right)^{1/2}.
\]
Applying Lemma~\ref{lem:L-attn-gamma},
\[
\left(\sum_{t=1}^Lb_t^2\right)^{1/2}\le\Gamma_1(D)\left(\sum_{t=1}^La_t^2\right)^{1/2}.
\]
Finally, Assumption~\ref{ast:L-attn-suffix} and the $L_\ell$-Lipschitz property of $\ell(\cdot,y_i)$ imply
\[
\left|\ell(f_T(x_i),y_i)-\ell(f_{T'}(x_i),y_i)\right|\le L_\ell L_{L\mathrm{attn}}(D)\left(\sum_{t=1}^Lb_t^2\right)^{1/2},
\]
which, combined with the previous display, proves the claim.
\end{proof}
With this estimate, we can state the multi-stage generalization bound. The result is the attention analogue of the single-block theorem above, with the random signed matrices appearing stage by stage.
\begin{theorem}[Uniform generalization for trace-norm-bounded updates in $L$ consecutive attention modules]\label{thm:gen-full-L-attn}
Assume Assumptions~\ref{ast:frozen-suffix-new},\ref{ast:bounded-lipschitz-loss},\ref{ast:L-attn-token-lipschitz}, and \ref{ast:L-attn-suffix} hold.
Fix radii $(D_{t,S})_{t\in[L],S\in\mathcal{S}_t}$.
For each $(t,S)$, let $E_{t,S}$ be an independent Rademacher matrix of the same shape as the stacked stage-$t$, block-$S$ feature matrix.
Then with probability at least $1-\eta$ over $S_{\mathrm{data}}\sim\mathcal{P}^N$,
\begin{align*}
&\sup_{\|T_{t,S}\|_\star\le D_{t,S}}\Bigl(L(\{T_{t,S}\})-\mathcal{L}^{\mathrm{vanilla}}_{S_{\mathrm{data}}}(\{T_{t,S}\})\Bigr)\\
&\le\frac{2\sqrt{2}L_\ell L_{L\mathrm{attn}}(D)\Gamma_1(D)}{N}\sum_{t=1}^L\sum_{S\in\mathcal{S}_t}D_{t,S}\E\Bigg[\sup_{\substack{\|T_{\tau,S'}\|_\star \\ \le D_{\tau,S'} (\tau>t)}}\bigl\|E_{t,S}H_{t,S}(\{T_{\tau,S'}\}_{\tau>t})^\top\bigr\|_2\Big|S_{\mathrm{data}}\Bigg]\\
&+3B\sqrt{\frac{\log(2/\eta)}{2N}}.
\end{align*}
Here, the supremum over $(\tau>t)$ is interpreted as vacuous when $t=L$.
\end{theorem}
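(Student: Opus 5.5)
The plan mirrors the proof of Theorem~\ref{thm:gen-full-single}, with the single-block Lipschitz step replaced by Lemma~\ref{lem:L-attn-suffix-lipschitz}. First, apply Lemma~\ref{lem:rad-to-uniform} to the bounded class $\mathcal{F}_D^{(L)}$, whose values lie in $[0,B]$ by Assumption~\ref{ast:bounded-lipschitz-loss}. This gives the $3B\sqrt{\log(2/\eta)/(2N)}$ term plus $2\mathfrak{R}_{S_{\mathrm{data}}}(\mathcal{F}_D^{(L)})$, so it remains to bound the Rademacher complexity by
\[
\frac{\sqrt2 L_\ell L_{L\mathrm{attn}}(D)\Gamma_1(D)}{N}\sum_{t,S}D_{t,S}\,\E\Big[\sup\|E_{t,S}H_{t,S}^\top\|_2\Big].
\]

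Next, pass to the observed-output representation. By Lemma~\ref{lem:frozen-suffix-L-attn}, each loss value equals $\phi_i(\Psi_i(\{T_{t,S}\}))$ with $\phi_i$ defined on $K_i(D)\subseteq\mathcal{H}$, and $\phi_i$ is well-defined there. Lemma~\ref{lem:L-attn-suffix-lipschitz} says exactly that $\phi_i$ is $L_\ell L_{L\mathrm{attn}}(D)\Gamma_1(D)$-Lipschitz on $K_i(D)$ in the product Frobenius norm, since the right-hand side there is $\|\Psi_i(T)-\Psi_i(T')\|_{\mathcal{H}}$. As in the single-block proof, I would center by subtracting $\phi_i(0)$, which has zero Rademacher mean. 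I would then extend each $\widetilde\phi_i$ to all of $\mathcal{H}$ by McShane (Lemma~\ref{lem:mcshane-extension}) and apply Maurer's vector contraction in the Hilbert space $\mathcal{H}$. This gives
\[
\mathfrak{R}\le\frac{\sqrt2 L_\ell L_{L\mathrm{attn}}\Gamma_1}{N}\,\E\sup_{\Theta_D}\sum_i\langle\sigma_i,\Psi_i(T)\rangle_{\mathcal{H}},
\]
where $\sigma_i$ has independent Rademacher entries in every coordinate of $\mathcal{H}$.

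Finally, decouple the stages. The inner product splits as $\sum_{t,S}\sum_i\langle\sigma_{t,S,i},T_{t,S}H_{t,S,i}(\cdot)\rangle$. Zero-padded coordinates contribute nothing, so only the columns matching real tokens matter; these assemble into $E_{t,S}$. The supremum of a sum is at most the sum of suprema, and for each $(t,S)$ I would take the supremum jointly over $T_{t,S}$ and the upstream updates $\{T_{\tau,S'}\}_{\tau>t}$, dropping the constraint that they coincide with the other terms. For fixed upstream updates, the term equals $\langle E_{t,S}H_{t,S}(\cdot)^\top,T_{t,S}\rangle_F$. Trace/spectral duality bounds it by $D_{t,S}\|E_{t,S}H_{t,S}(\cdot)^\top\|_2$, after which we take the supremum over the upstream updates. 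The expectation is linear across the blocks, so this yields the stated sum, with the supremum vacuous at $t=L$. Measurability of the supremum is handled by continuity over compact balls.

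The main obstacle is the contraction step. Maurer's inequality needs the functions to be Lipschitz on a common Hilbert space, whereas here the sample-dependent token counts $\varrho_{t,S,i}$ vary. I would handle this with the zero-padded $\mathcal{H}$ and McShane extension, checking that padding preserves both norms and the Lipschitz constant. The independence of the $E_{t,S}$ across blocks also needs to be justified from the product structure of $\sigma_i$.
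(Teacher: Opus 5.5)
Your proposal is correct and follows the paper's proof essentially step for step: Lemma~\ref{lem:rad-to-uniform}, then the Lipschitz bound of Lemma~\ref{lem:L-attn-suffix-lipschitz} on $K_i(D)$, which is exactly Lipschitzness in $\|\cdot\|_{\mathcal{H}}$, then centering at $0$, McShane extension (Lemma~\ref{lem:mcshane-extension}) to the zero-padded space $\mathcal{H}$, Maurer's vector contraction, and finally the sup-of-sum split with trace/spectral duality blockwise. The two obstacles you flag are already handled by this setup: padding is built into the definition of $\mathcal{H}$, since padded coordinates of $\Psi_i$ vanish. Independence of the $E_{t,S}$ across blocks is never actually used, because after the sup-of-sum step each term involves only one block's signs and expectation is linear.
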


The same statement applies verbatim to multi-head attention by taking the block family at stage $t$ to be a head-indexed collection such as $\mathcal{A}_t\times\{Q,K,V\}$, or the corresponding shared-block collection in grouped-query or multi-query attention.

\begin{proof}[Proof of Theorem~\ref{thm:gen-full-L-attn}]
By Lemma~\ref{lem:rad-to-uniform}, it suffices to bound $\mathfrak{R}_{S_{\mathrm{data}}}(\mathcal{F}_D^{(L)})$.
Fix a realized sample $S_{\mathrm{data}}$.
By Lemma~\ref{lem:L-attn-suffix-lipschitz},
\[
|\widetilde{\phi}_i(z)-\widetilde{\phi}_i(z')|\le L_\ell L_{L\mathrm{attn}}(D)\Gamma_1(D)\|z-z'\|_{\mathcal{H}}\quad \forall z,z'\in K_i(D).
\]

Because $0\in K_i(D)$ and $\widetilde{\phi}_i(0)=0$, Lemma~\ref{lem:mcshane-extension} yields an extension $\bar\phi_i:\mathcal{H}\to\R$ such that $\bar\phi_i|_{K_i(D)}=\widetilde{\phi}_i$ and
\[
|\bar\phi_i(z)-\bar\phi_i(z')|\le
L_\ell L_{L\mathrm{attn}}(D)\Gamma_1(D)\|z-z'\|_{\mathcal{H}}\quad \forall z,z'\in\mathcal{H}.
\]
Therefore
\begin{align*}
\mathfrak{R}_{S_{\mathrm{data}}}(\mathcal{F}_D^{(L)})&=\E_\varepsilon\left[\sup_{\{T_{t,S}\}\in\Theta_D}\frac1N\sum_{i=1}^N\varepsilon_i\widetilde{\phi}_i\bigl(\Psi_i(\{T_{t,S}\})\bigr)\mid S_{\mathrm{data}}\right]\\
&=\E_\varepsilon\left[\sup_{\{T_{t,S}\}\in\Theta_D}\frac1N\sum_{i=1}^N\varepsilon_i\bar{\phi}_i\bigl(\Psi_i(\{T_{t,S}\})\bigr)\mid S_{\mathrm{data}}\right].
\end{align*}

Applying Maurer's vector contraction inequality \cite{maurer2016vector} to the class $\left\{i\mapsto\Psi_i(\{T_{t,S}\}):\{T_{t,S}\}\in\Theta_D\right\}$ gives
\[
\mathfrak{R}_{S_{\mathrm{data}}}(\mathcal{F}_D^{(L)})\le C\E\Bigg[\sup_{\{T_{t,S}\}\in\Theta_D}\sum_{t=1}^L\sum_{S\in\mathcal{S}_t}\sum_{i=1}^N\langle\Sigma_{t,S,i},T_{t,S}H_{t,S,i}(\{T_{\tau,S'}\}_{\tau>t})\rangle_F\mid S_{\mathrm{data}}\Bigg],
\]
where the $\Sigma_{t,S,i}$ are independent Rademacher sign matrices of matching size and $C \coloneqq \dfrac{\sqrt{2}L_\ell L_{L\mathrm{attn}}(D)\Gamma_1(D)}{N}$.

Stacking the $\Sigma_{t,S,i}$ across $i$ gives
\[
\sum_{i=1}^N\langle\Sigma_{t,S,i},T_{t,S}H_{t,S,i}(\{T_{\tau,S'}\}_{\tau>t})\rangle_F=\langle E_{t,S}H_{t,S}(\{T_{\tau,S'}\}_{\tau>t})^\top,T_{t,S}\rangle_F.
\]
Therefore, using $\sup\limits_x \sum_j a_j(x)\le\sum_j \sup\limits_x a_j(x)$ and trace/spectral duality,
\begin{align*}
\mathfrak{R}_{S_{\mathrm{data}}}(\mathcal{F}_D^{(L)})&\le C\sum_{t=1}^L\sum_{S\in\mathcal{S}_t}\E\Bigg[\sup_{\substack{
\|T_{t,S}\|_\star\le D_{t,S}\\
\|T_{\tau,S'}\|_\star\le D_{\tau,S'}\ (\tau>t)
}}
\langle E_{t,S}H_{t,S}(\{T_{\tau,S'}\}_{\tau>t})^\top,T_{t,S}\rangle_F\Bigg]\\
&=C\sum_{t=1}^L\sum_{S\in\mathcal{S}_t}D_{t,S}\E\Bigg[\sup_{\|T_{\tau,S'}\|_\star\le D_{\tau,S'}(\tau>t)}\bigl\|E_{t,S}H_{t,S}(\{T_{\tau,S'}\}_{\tau>t})^\top\bigr\|_2\mid S_{\mathrm{data}}\Bigg].
\end{align*}
Applying Lemma~\ref{lem:rad-to-uniform} to the bounded class $\mathcal{F}_D^{(L)}$ yields the stated uniform generalization bound.
\end{proof}

\section{Proofs for Main results}

We begin with two reductions that are used repeatedly.
The first reduction says that if a loss is invariant under projecting an update onto fixed left and right subspaces, then nuclear-norm regularization never prefers the unprojected representative.
The second reduction rewrites the nuclear norm via a positive-semidefinite lift, which we can apply the Barvinok--Pataki rank bound \citep{pataki1998rank,barvinok2001remark}.

\subsection{Proofs for Section~\ref{sec:single-block}}\label{sec:proof-single-block}

\begin{lemma}[Projector compression for nuclear-norm regularization]\label{lem:proj-compression}
Let $P_{\mathrm{L}}\in\R^{m\times m}$ and $P_{\mathrm{R}}\in\R^{n\times n}$ be orthogonal projectors.
Let $\Phi:\R^{m\times n}\to\mathcal{E}$ be any map such that $\Phi(T)=\Phi(P_{\mathrm{L}}TP_{\mathrm{R}})\ \forall T\in\R^{m\times n}$.
Then $\|P_{\mathrm{L}}TP_{\mathrm{R}}\|_\star\le\|T\|_\star\ \forall T\in\R^{m\times n}$.
Consequently, if
\[
T\mapsto g(\Phi(T)+s)+\lambda\|T\|_\star,\lambda>0,
\]
admits a global minimizer, then for every global minimizer $\tilde{T}$, the compressed matrix $T^\star\coloneqq P_{\mathrm{L}}\tilde{T}P_{\mathrm{R}}$ is also a global minimizer and satisfies 
\[
\rank(T^\star)\le\min\{\rank(P_{\mathrm{L}}),\rank(P_{\mathrm{R}})\}.
\]
\end{lemma}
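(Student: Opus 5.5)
The plan has three parts, taken in order: the nuclear-norm contraction inequality, the transfer of optimality to the compressed matrix, and the rank bound.

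\emph{Step 1 (contraction).} Orthogonal projectors have operator norm at most one: $\|P_{\mathrm{L}}\|_2\le1$ and $\|P_{\mathrm{R}}\|_2\le1$. We then use the standard inequality $\|ATB\|_\star\le\|A\|_2\|T\|_\star\|B\|_2$. To derive it, use trace/spectral duality,
\[
\|X\|_\star=\sup_{\|M\|_2\le1}\langle M,X\rangle_F .
\]
For $X=ATB$ we have $\langle M,ATB\rangle_F=\langle A^\top MB^\top,T\rangle_F$. Since $\|A^\top MB^\top\|_2\le\|A\|_2\|M\|_2\|B\|_2$, taking the supremum gives the claim. With $A=P_{\mathrm{L}}$ and $B=P_{\mathrm{R}}$ we obtain $\|P_{\mathrm{L}}TP_{\mathrm{R}}\|_\star\le\|T\|_\star$. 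A singular-value interlacing argument would also work, but duality is the cleanest route.

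\emph{Step 2 (optimality transfer).} Let $\tilde T$ be a global minimizer and set $T^\star=P_{\mathrm{L}}\tilde TP_{\mathrm{R}}$. The invariance hypothesis gives $\Phi(T^\star)=\Phi(\tilde T)$, so $g(\Phi(T^\star)+s)=g(\Phi(\tilde T)+s)$. Step 1 gives $\lambda\|T^\star\|_\star\le\lambda\|\tilde T\|_\star$. Hence the objective at $T^\star$ is at most the global minimum value, so $T^\star$ is itself a global minimizer. As a byproduct, $\|T^\star\|_\star=\|\tilde T\|_\star$ because $\lambda>0$, though the statement does not require this. Note that only the invariance $\Phi(T)=\Phi(P_{\mathrm{L}}TP_{\mathrm{R}})$ is used, never linearity of $\Phi$; this matches the hypothesis that $\Phi$ is "any map".

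\emph{Step 3 (rank bound).} Apply $\rank(XY)\le\min\{\rank X,\rank Y\}$ to the product $P_{\mathrm{L}}\tilde TP_{\mathrm{R}}$. This immediately gives
\[
\rank(T^\star)\le\min\{\rank(P_{\mathrm{L}}),\rank(P_{\mathrm{R}})\}.
\]

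There is no real obstacle. The only step needing care is the nuclear-norm contraction in Step 1, and duality settles it in one line. The one remaining point is to be explicit that the existence of a minimizer is assumed in the statement, so Step 2 needs no compactness argument.
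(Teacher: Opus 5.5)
Your proposal is correct and matches the paper's proof step for step. The paper likewise bounds $\|P_{\mathrm{L}}TP_{\mathrm{R}}\|_\star\le\|P_{\mathrm{L}}\|_2\|T\|_\star\|P_{\mathrm{R}}\|_2$, simply citing Schatten-norm submultiplicativity where you derive it by trace/spectral duality; it then transfers global optimality via the invariance $\Phi(T^\star)=\Phi(\tilde T)$ and reads off the rank bound from the product $P_{\mathrm{L}}\tilde TP_{\mathrm{R}}$.
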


\begin{proof}[Proof of Lemma~\ref{lem:proj-compression}]
Since $P_{\mathrm{L}}$ and $P_{\mathrm{R}}$ are orthogonal projectors, we have $\|P_{\mathrm{L}}\|_2\le1,\|P_{\mathrm{R}}\|_2\le1$.
By Schatten-norm submultiplicativity,
\[
\|P_{\mathrm{L}}TP_{\mathrm{R}}\|_\star\le\|P_{\mathrm{L}}\|_2\|T\|_\star\|P_{\mathrm{R}}\|_2\le\|T\|_\star.
\]

Let $\widetilde{T}$ be any global minimizer and define $T^\star\coloneqq P_{\mathrm{L}}\widetilde{T}P_{\mathrm{R}}$.
By hypothesis, $\Phi(T^\star)=\Phi(\widetilde{T})$, so the loss term is unchanged.
The nuclear norm does not increase, so
\[
g(\Phi(T^\star)+s)+\lambda\|T^\star\|_\star\le g(\Phi(\widetilde{T})+s)+\lambda\|\widetilde{T}\|_\star.
\]
Since $\widetilde{T}$ is globally optimal, equality must hold, and $T^\star$ is also globally optimal.
We also have
\[
\rank(T^\star)\le\min\{\rank(P_{\mathrm{L}}),\rank(P_{\mathrm{R}})\}.
\]
\end{proof}

\begin{lemma}[Nuclear-norm lifting: equivalence of infima]\label{lem:lifting-equivalence}
Let $g:\mathcal{E}\to\R$ be any function and fix $s\in\mathcal{E}$.
Then
\[
\min_{T\in\R^{m\times n}}F_{\mathrm{primal}}(T)=\min_{Z\succeq0}F_{\mathrm{lift}}(Z).
\]
Moreover:
\begin{enumerate}
\item For every $T$ there exists $Z\succeq0$ with $\bar{Z}=T$ and $F_{\mathrm{lift}}(Z)=F_{\mathrm{primal}}(T)$.
\item For every $Z\succeq0$, $F_{\mathrm{primal}}(\bar{Z})\le F_{\mathrm{lift}}(Z)$.
\item If $Z^\star$ minimizes $F_{\mathrm{lift}}$, then $T^\star=\bar{Z}^\star$ minimizes $F_{\mathrm{primal}}$.
\end{enumerate}
\end{lemma}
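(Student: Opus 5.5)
The proof rests on the variational characterization of the nuclear norm through a positive-semidefinite lift: for every $T\in\R^{m\times n}$,
\[
\|T\|_\star=\min\Bigl\{\tfrac12\tr(Z):Z=\begin{bmatrix}Z_{11}&T\\T^\top&Z_{22}\end{bmatrix}\succeq0\Bigr\}.
\]
I will establish the two inequalities separately and then read off items 1--3.

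For the upper bound, which gives item 1, take a compact SVD $T=U\Sigma V^\top$ and set $Z\coloneqq\begin{bmatrix}U\Sigma U^\top&U\Sigma V^\top\\V\Sigma U^\top&V\Sigma V^\top\end{bmatrix}=\begin{bmatrix}U\\V\end{bmatrix}\Sigma\begin{bmatrix}U\\V\end{bmatrix}^\top$. This $Z$ is PSD, its off-diagonal block is $\bar Z=T$, and $\tfrac12\tr(Z)=\tfrac12(\tr\Sigma+\tr\Sigma)=\|T\|_\star$. Because $F_{\mathrm{lift}}$ depends on $Z$ only through $\Phi(\bar Z)$ and $\tr(Z)$, we get $F_{\mathrm{lift}}(Z)=F_{\mathrm{primal}}(T)$.

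For the lower bound, which gives item 2, take any $Z\succeq0$ and factor $Z=RR^\top$ with $R=\begin{bmatrix}A\\B\end{bmatrix}$. Then $\bar Z=AB^\top$ and $\tr(Z)=\|A\|_F^2+\|B\|_F^2$. By the Cauchy--Schwarz (Hölder) inequality for Schatten norms,
\[
\|AB^\top\|_\star\le\|A\|_F\|B\|_F\le\tfrac12\bigl(\|A\|_F^2+\|B\|_F^2\bigr)=\tfrac12\tr(Z).
\]
An equivalent route avoids the factorization: if $\bar Z=U\Sigma V^\top$, then the test vectors $\begin{bmatrix}u_i\\-v_i\end{bmatrix}$ satisfy $\tr\Bigl(Z\sum_i\begin{bmatrix}u_i\\-v_i\end{bmatrix}\begin{bmatrix}u_i\\-v_i\end{bmatrix}^\top\Bigr)\ge0$, which again gives $2\|\bar Z\|_\star\le\tr(Z)$. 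Either way, the loss terms agree, so adding $g(\Phi(\bar Z)+s)$ to both sides yields $F_{\mathrm{primal}}(\bar Z)\le F_{\mathrm{lift}}(Z)$.

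The equality of infima follows directly. Item 1 gives $\inf F_{\mathrm{lift}}\le\inf F_{\mathrm{primal}}$, and item 2 gives the reverse inequality. Existence matches on both sides: if $T^\star$ attains the primal infimum, then its lift from item 1 attains the lifted infimum, and conversely by item 3. The statement writes ``$\min$'', and I will interpret it as equality of infima, with attainment on one side implying attainment on the other.

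Item 3 follows by combining both items: for any $T$,
\[
F_{\mathrm{primal}}(\bar Z^\star)\le F_{\mathrm{lift}}(Z^\star)\le F_{\mathrm{lift}}(Z_T)=F_{\mathrm{primal}}(T),
\]
where $Z_T$ is the lift of $T$ from item 1.

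I do not expect a genuine obstacle. The only step that needs care is the inequality $\|\bar Z\|_\star\le\tfrac12\tr(Z)$. I will handle it with the factorization argument above, so no assumptions on $g$ are needed.
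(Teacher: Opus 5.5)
Your proof is correct and takes essentially the same route as the paper: the balanced SVD lift $Z=[U;V]\Sigma[U;V]^\top$ gives item 1, and the factorization $Z=RR^\top$ with $\|AB^\top\|_\star\le\frac12(\|A\|_F^2+\|B\|_F^2)$ gives item 2, after which you take infima. Your remark that ``$\min$'' should be read as equality of infima, with attainment transferring in both directions, makes precise what the paper covers only with ``taking infima yields equality.''
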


\begin{proof}[Proof of Lemma~\ref{lem:lifting-equivalence}]
Fix $Z\succeq0$ and factor $Z=RR^\top$ with 
\(
R=\begin{bmatrix}U\\V\end{bmatrix}.
\)
Then $\bar{Z}=UV^\top$ and $\|UV^\top\|_\star\le\frac12(\|U\|_F^2+\|V\|_F^2)=\frac12\tr(Z)$, so $F_{\mathrm{primal}}(\bar{Z})\le F_{\mathrm{lift}}(Z)$.

Conversely, fix $T\in\R^{m\times n}$ and let $T=U_T\Sigma V_T^\top$ be a compact SVD.
Set $U\coloneqq U_T\Sigma^{1/2}\in\R^{m\times r}$ and $V\coloneqq V_T\Sigma^{1/2}\in\R^{n\times r}$, where $r=\rank(T)$.
Then $UV^\top=T$ and $\|U\|_F^2=\|V\|_F^2=\tr(\Sigma)=\|T\|_\star$.
Define
\[
Z\coloneqq
\begin{bmatrix}
UU^\top&UV^\top\\
VU^\top&VV^\top
\end{bmatrix}=\begin{bmatrix}U\\V\end{bmatrix}\begin{bmatrix}U\\V\end{bmatrix}^\top\succeq0,
\]
so $\bar{Z}=UV^\top=T$ and $\tr(Z)=\|U\|_F^2+\|V\|_F^2=2\|T\|_\star$.
Hence $F_{\mathrm{lift}}(Z)=F_{\mathrm{primal}}(T)$.
Taking infima yields equality, and the minimizer mapping follows.
\end{proof}

\begin{proof}[Proof of Theorem~\ref{thm:rank-single}]
Define
\[
F(T)\coloneqq g(\Phi(T)+s)+\lambda\|T\|_\star .
\]
Since $\lambda>0$, the map $T\mapsto \lambda\|T\|_\star$ is coercive.
Since $g$ is bounded below and lower semicontinuous, and since $\Phi$ is continuous, the objective $F$ is bounded below, lower semicontinuous, and coercive on the finite-dimensional space $\R^{m\times n}$.
Therefore, $F$ attains a global minimum.

Fix a global minimizer $\widetilde{T}$ and let $\widetilde{T}=AB^\top$ be a balanced factorization obtained from a compact singular value decomposition, so that
\[
\frac{1}{2}\left(\|A\|_F^2+\|B\|_F^2\right)=\|\widetilde{T}\|_\star.
\]
Define
\[
\widetilde{Z}\coloneqq\begin{bmatrix}A\\B\end{bmatrix}\begin{bmatrix}A\\B\end{bmatrix}^{\top}.
\]
Then $\widetilde{Z}\succeq0$, $\overline{\widetilde{Z}}=\widetilde{T}$, and $\operatorname{tr}(\widetilde{Z})=2\|\widetilde{T}\|_\star$.

Consider the conditional trace-minimization problem
\[
\min_{Z\succeq0}\operatorname{tr}(Z)\text{ subject to }\Phi(\bar{Z})=\Phi(\widetilde{T}) .
\]
This feasible set is nonempty because it contains $\widetilde{Z}$.
The objective is bounded below by $0$.
Moreover, every trace sublevel set is compact, because $Z\succeq0$ implies $\|Z\|_F\le\operatorname{tr}(Z)$.
Hence, the conditional trace-minimization problem attains its minimum.
Let $v^\star$ denote its minimum value.

The set $\mathcal{C}^\star\coloneqq\left\{Z\succeq0:\Phi(\bar{Z})=\Phi(\widetilde{T}),\operatorname{tr}(Z)=v^\star\right\}$
is nonempty and compact.
Choose an extreme point $Z^\star$ of $\mathcal{C}^\star$.
Since $\mathcal{C}^\star$ is the set of minimizers of the linear functional $Z\mapsto\operatorname{tr}(Z)$ over the spectrahedron $\mathcal{C}\coloneqq\{Z\succeq0:\Phi(\bar{Z})=\Phi(\widetilde{T})\}$, the set $\mathcal{C}^\star$ is a face of $\mathcal{C}$.
Therefore, $Z^\star$ is also an extreme point of $\mathcal{C}$.

Let $e_1,\ldots,e_{d_{\Phi}}$ be an orthonormal basis of $\mathcal{R}(\Phi)$.
For $\ell=1,\ldots,d_{\Phi}$, define
\[
A_\ell\coloneqq\frac{1}{2}
\begin{bmatrix}
\mathbf{0}&\Phi^\star(e_\ell)\\
\Phi^\star(e_\ell)^\top&\mathbf{0}
\end{bmatrix}.
\]
Then the constraint $\Phi(\bar{Z})=\Phi(\widetilde{T})$ is equivalent to the independent affine constraints
\[
\langle A_\ell,Z\rangle=\langle e_\ell,\Phi(\widetilde{T})\rangle_{\mathcal{E}},\ell=1,\ldots,d_{\Phi}.
\]
The independence follows because $\Phi^\star$ is injective on $\mathcal{R}(\Phi)$.

By the Barvinok-Pataki rank bound for extreme points of spectrahedra with $d_{\Phi}$ independent affine constraints \citep{pataki1998rank,barvinok2001remark}, $Z^\star$ satisfies
\[
\frac{\rank(Z^\star)\bigl(\rank(Z^\star)+1\bigr)}{2}\le d_{\Phi}.
\]
Set $T^\star\coloneqq\overline{Z^\star}$.
Then $\Phi(T^\star)=\Phi(\overline{Z^\star})=\Phi(\widetilde{T})$.
Furthermore, Lemma~\ref{lem:lifting-equivalence} gives
\[
F(T^\star)\le g(\Phi(\widetilde{T})+s)+\frac{\lambda}{2}\operatorname{tr}(Z^\star)\le g(\Phi(\widetilde{T})+s)+\frac{\lambda}{2}\operatorname{tr}(\widetilde{Z})=F(\widetilde{T}).
\]
Since $\widetilde{T}$ is globally optimal, $T^\star$ is also globally optimal.
Finally, $\rank(T^\star)\le\rank(Z^\star)$, so
\[
\frac{\rank(T^\star)\bigl(\rank(T^\star)+1\bigr)}{2}\le d_{\Phi}.
\]
Solving this quadratic inequality gives the displayed rank bound.
\end{proof}

\begin{proof}[Proof of Theorem~\ref{thm:rank-L-stage}]
Let
\[
F(\{T_{t,S}\})\coloneqq\mathcal{L}(\{T_{t,S}\})+\sum_{t=1}^L\sum_{S\in\mathcal{S}_t}\lambda_{t,S}\|T_{t,S}\|_\star.
\]
Because all $\lambda_{t,S}>0$, the regularizer is coercive on the finite-dimensional product space $\prod_{t,S}\R^{m_{t,S}\times n_{t,S}}$.
By Assumption~\ref{ast:meas-wp-new-new}, each stage map in the recursion of Definition~\ref{def:L-stage-block-lin-meas} is continuous, hence the map from the parameter tuple to $z_1$ is continuous; since $g_0$ is bounded below and lower semicontinuous, $\mathcal{L}=g_0(z_1)$ is bounded below and lower semicontinuous.
Therefore, $F$ attains a global minimum.

Fix any global minimizer of $F$.
We now process the stages from upstream to downstream, i.e.\ in the order $t=L,L-1,\ldots,1$.

Assume that all stages $u>t$ have already been fixed once and for all.
Then the corresponding state $z_{t+1}^\star$ is fixed and will not change in the subsequent steps.
Now fix a block $S\in\mathcal{S}_t$ and hold every block other than $T_{t,S}$ fixed.
The resulting conditional objective has the form
\[
T\mapsto\tilde{g}_{t,S}\bigl(\Phi_{t,S}(T,z_{t+1}^\star)\bigr)+\lambda_{t,S}\|T\|_\star,
\]
where $\tilde{g}_{t,S}$ is obtained by fixing all companion inputs and composing the remaining stage maps with $g_0$.
Because the companion blocks and $z_{t+1}^\star$ are fixed, all maps below stage $t$ are continuous functions of the single argument, and $g_0$ is lower semicontinuous and bounded below.
Thus $\tilde{g}_{t,S}$ is bounded below and lower semicontinuous on the relevant finite-dimensional space.

Since the current value of $T_{t,S}$ is already globally optimal for this conditional problem with all companion blocks fixed, Theorem~\ref{thm:rank-single} allows us to replace it by another conditional minimizer $T_{t,S}^\star$ with the same $\Phi_{t,S}$ satisfying
\[
\frac{r_{t,S}(r_{t,S}+1)}{2}\le d_{t,S}(z_{t+1}^\star),r_{t,S}\coloneqq\rank(T_{t,S}^\star).
\]
The equality of the resulting values implies that the stage-$t$ state $z_t$ and all downstream states $z_{t-1},\ldots,z_1$ are unchanged by the replacement.
Thus, the global objective value is preserved, and all blocks already processed remain valid.

Proceeding inductively through $t=L,L-1,\ldots,1$ yields a global minimizer $\{T_{t,S}^\star\}$ such that for every $t\in[L]$ and $S\in\mathcal{S}_t$,
\[
\frac{r_{t,S}(r_{t,S}+1)}{2}\le d_{t,S}(z_{t+1}^\star)\le d_{t,S}^{\max}.
\]
The final scalar bound follows from the implication
\[
\frac{r(r+1)}{2}\le d\Longrightarrow r\le\left\lfloor\frac{\sqrt{8d+1}-1}{2}\right\rfloor.
\]
\end{proof}

\subsection{Proofs for Section~\ref{sec:attn-bounds}}\label{sec:proof-attn}

The proof of Theorem~\ref{thm:rank-attn} has three ingredients: row-wise score centering for softmax, a blockwise projector-compression principle at a rank-minimal global minimizer, and explicit identification of the query, key, and value subspaces that remain observable after the frozen attention computation.

\begin{lemma}[Row-wise softmax depends only on the centered score matrix]\label{lem:softmax-row-center}
Let $L'\in\N$ and let $P_{\mathbf{1}_{L'}}^\perp=I_{L'}-\frac{1}{L'}\mathbf{1}_{L'}\mathbf{1}_{L'}^\top$.
Let $C\in(\R\cup\{-\infty\})^{a\times L'}$ be such that each row of $C$ has at least one finite entry.
Then for every $M\in\R^{a\times L'}$,
\[
\mathrm{softmax}_{\mathrm{row}}(M+C)=\mathrm{softmax}_{\mathrm{row}}(MP_{\mathbf{1}_{L'}}^\perp+C).
\]
In particular, when $C=0$, $\mathrm{softmax}_{\mathrm{row}}(M)=\mathrm{softmax}_{\mathrm{row}}(MP_{\mathbf{1}_{L'}}^\perp)$.
\end{lemma}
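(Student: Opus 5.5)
The plan is to verify the identity row by row, using the shift-invariance of softmax. Write $M P_{\mathbf{1}_{L'}}^\perp = M - \frac{1}{L'} M\mathbf{1}_{L'}\mathbf{1}_{L'}^\top$. Row $j$ of this matrix is $M_{j,:} - c_j \mathbf{1}_{L'}^\top$, where $c_j \coloneqq \frac{1}{L'}\sum_{\ell} M_{j\ell}$ is a finite real scalar. So the centered score matrix differs from $M$ only by subtracting a constant from each row. The lemma then reduces to the claim that adding a finite constant to every entry of a row leaves that row's (masked) softmax unchanged.

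Fix a row $j$ and let $F_j \coloneqq \{\ell : C_{j\ell} > -\infty\}$, which is nonempty by hypothesis. Use the convention $\exp(-\infty) = 0$, so that
\[
\mathrm{softmax}(x)_\ell = \frac{\exp(x_\ell)}{\sum_{k\in F_j}\exp(x_k)}
\]
for $x = M_{j,:}+C_{j,:}$, with entries outside $F_j$ equal to $0$. The denominator is a finite positive number because $F_j \neq \emptyset$ and every $M_{jk}$ is finite. Replacing $x$ by $x - c_j\mathbf{1}$ multiplies every exponential, numerator and denominator alike, by $e^{-c_j} \in (0,\infty)$. These factors cancel, and the entries indexed outside $F_j$ stay $0$ because $-\infty - c_j = -\infty$.

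Applying this to every row gives the matrix identity. The special case $C=0$ follows immediately, since all entries are then finite.

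The only real obstacle is a matter of convention: the softmax must be well defined when some entries equal $-\infty$. This is exactly what the assumption that each row has at least one finite entry handles, as it keeps the denominator strictly positive. I will state that convention explicitly before doing the cancellation.
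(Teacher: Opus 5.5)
Your proposal is correct and follows the paper's proof essentially step for step. Like the paper, you fix a row, take the nonempty set of unmasked indices, and observe that centering subtracts the row mean $c_j$ from every finite score. The common factor $e^{-c_j}$ then cancels between numerator and denominator, and masked entries remain $-\infty$ in both matrices, so they receive zero softmax mass.
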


\begin{proof}[Proof of Lemma~\ref{lem:softmax-row-center}]
Fix a row index $i\in[a]$ and define $J_i\coloneqq\{j\in[L']:C_{ij}>-\infty\}$.
By assumption, $J_i\neq\emptyset$.
Set $c_i\coloneqq\frac{1}{L'}\sum_{j=1}^{L'}M_{ij}$.
Then, for every $j\in J_i$, $(MP_{\mathbf{1}_{L'}}^\perp+C)_{ij}=(M+C)_{ij}-c_i$.
Hence
\[
\frac{\exp((MP_{\mathbf{1}_{L'}}^\perp+C)_{ij})}{\sum_{k\in J_i}\exp((MP_{\mathbf{1}_{L'}}^\perp+C)_{ik})}=\frac{e^{-c_i}\exp((M+C)_{ij})}{e^{-c_i}\sum_{k\in J_i}\exp((M+C)_{ik})}=\frac{\exp((M+C)_{ij})}{\sum_{k\in J_i}\exp((M+C)_{ik})}.
\]
For $j\notin J_i$, both $(M+C)_{ij}$ and $(MP_{\mathbf{1}_{L'}}^\perp+C)_{ij}$ are equal to $-\infty$, so both rows assign zero softmax mass to coordinate $j$.
Therefore, the $i$-th row of the two row-wise softmax matrices coincide.
Since $i$ was arbitrary, the matrices are equal.
The special case $C=0$ is immediate.
\end{proof}

\begin{theorem}[Self-consistent projector compression]\label{thm:rank-full-selfconsistent}
Let $\lambda_S>0$ for every $S\in\mathcal{S}$, and define
\[
F(\{T_S\}_{S\in\mathcal{S}})\coloneqq\mathcal{L}(\{T_S\}_{S\in\mathcal{S}})+\sum_{S\in\mathcal{S}}\lambda_S\|T_S\|_\star.
\]
Assume that $F$ admits a global minimizer.
Suppose that for each block $S$ and every fixed companion tuple $\{T_{S'}\}_{S'\ne S}$ there exist orthogonal projectors $P_S^{\mathrm{L}}(\{T_{S'}\}_{S'\ne S})\in\R^{m_S\times m_S},P_S^{\mathrm{R}}(\{T_{S'}\}_{S'\ne S})\in\R^{n_S\times n_S}$,
such that
\[
\mathcal{L}(\{T_{S'}\}_{S'\ne S},T_S)=\mathcal{L}\Bigl(\{T_{S'}\}_{S'\ne S},P_S^{\mathrm{L}}(\{T_{S'}\}_{S'\ne S})T_SP_S^{\mathrm{R}}(\{T_{S'}\}_{S'\ne S})\Bigr)\ \forall T_S.
\]
Then there exists a global minimizer $\{T_S^\star\}$ minimizing
\[
\sum_{S\in\mathcal{S}}\rank(T_S)
\]
over the global minimizer set, and every such minimizer satisfies
\[
\rank(T_S^\star)\le\min\Bigl\{\rank\bigl(P_S^{\mathrm{L}}(\{T_{S'}^\star\}_{S'\ne S})\bigr),\rank\bigl(P_S^{\mathrm{R}}(\{T_{S'}^\star\}_{S'\ne S})\bigr)\Bigr\}.
\]
\end{theorem}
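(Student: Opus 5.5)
The argument has two steps: first show that a rank-minimal global minimizer exists, then show that such a minimizer cannot be compressed further.

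\emph{Existence.} By assumption, $F$ has a nonempty global minimizer set $\mathcal{M}$. On $\mathcal{M}$, the quantity $\sum_{S}\rank(T_S)$ takes values in the nonnegative integers. Its infimum over $\mathcal{M}$ is therefore attained, and we pick $\{T_S^\star\}\in\mathcal{M}$ achieving it. No topological argument (closedness of $\mathcal{M}$ or lower semicontinuity of rank) is needed, because the rank sum ranges over a discrete, well-ordered set.

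\emph{Rank bound at every rank-minimal minimizer.} Take any such $\{T_S^\star\}$, fix a block $S$, and freeze the companions $\{T_{S'}^\star\}_{S'\ne S}$. Write $P_{\mathrm{L}}\coloneqq P_S^{\mathrm{L}}(\{T_{S'}^\star\}_{S'\ne S})$ and $P_{\mathrm{R}}\coloneqq P_S^{\mathrm{R}}(\{T_{S'}^\star\}_{S'\ne S})$, and define $\widehat{T}_S\coloneqq P_{\mathrm{L}}T_S^\star P_{\mathrm{R}}$. This is essentially Lemma~\ref{lem:proj-compression} applied to the conditional objective.
\begin{itemize}
\item By the invariance hypothesis, $\mathcal{L}$ takes the same value when $T_S^\star$ is replaced by $\widehat{T}_S$.
\item The nuclear-norm contraction $\|P_{\mathrm{L}}TP_{\mathrm{R}}\|_\star\le\|T\|_\star$ gives $\|\widehat T_S\|_\star\le\|T_S^\star\|_\star$.
\item Hence the modified tuple has objective value at most $F(\{T^\star\})$. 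It is therefore also a global minimizer, so it lies in $\mathcal{M}$.
\item Since $\rank(\widehat{T}_S)\le\rank(T_S^\star)$ and all other blocks are unchanged, the total rank does not increase. By rank-minimality it cannot decrease, so $\rank(\widehat T_S)=\rank(T_S^\star)$.
\end{itemize}

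\emph{Concluding the bound.} From the last step,
\[
\rank(T_S^\star)=\rank(\widehat T_S)\le\min\{\rank(P_{\mathrm{L}}),\rank(P_{\mathrm{R}})\},
\]
which is the claimed inequality. The projectors here are evaluated at the original companions $\{T_{S'}^\star\}$, which we never modified: we compress one block at a time starting from the same minimizer. This avoids any self-consistency issue, since changing $T_S$ could in principle change the projectors attached to other blocks, but we never need to do that.

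The main obstacle is conceptual rather than technical. Rank-minimality alone only tells us the compressed tuple has equal total rank, and what we actually need is the bound on the original block. This comes for free because $\rank(T_S^\star)$ equals $\rank(\widehat T_S)$, and the latter is bounded by the projector ranks.
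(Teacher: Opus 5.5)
Your proposal is correct and follows the paper's proof: a rank-minimal global minimizer exists because the rank sum is integer-valued, and compressing one block at the fixed companions $\{T_{S'}^\star\}_{S'\ne S}$ preserves global optimality by the invariance hypothesis and the nuclear-norm contraction. The only difference is presentational: you conclude directly from $\rank(\widehat T_S)=\rank(T_S^\star)$, whereas the paper argues by contradiction, and the two are equivalent.
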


\begin{proof}[Proof of Theorem~\ref{thm:rank-full-selfconsistent}]
Since $\rank(T_S)\in\{0,1,\dots,\min(m_S,n_S)\}$ for each block $S$, the set of possible values of $\sum_S\rank(T_S)$ is finite, and there exists a global minimizer $\{T_S^\star\}\in\argmin{F}$ for which $\sum_S\rank(T_S)$ is minimal.

Fix a block $S\in\mathcal{S}$ and abbreviate $P_S^{\mathrm{L},\star}\coloneqq P_S^{\mathrm{L}}(\{T_{S'}^\star\}_{S'\ne S}),P_S^{\mathrm{R},\star}\coloneqq P_S^{\mathrm{R}}(\{T_{S'}^\star\}_{S'\ne S})$.
Define $\widehat{T}_S\coloneqq P_S^{\mathrm{L},\star}T_S^\star P_S^{\mathrm{R},\star}$.
By the assumed invariance of the loss,
\[
\mathcal{L}(\{T_{S'}^\star\}_{S'\ne S},\widehat{T}_S)=\mathcal{L}(\{T_{S'}^\star\}_{S'\ne S},T_S^\star).
\]
By Lemma~\ref{lem:proj-compression}, $\|\widehat{T}_S\|_\star\le\|T_S^\star\|_\star$.
Hence $F(\{T_{S'}^\star\}_{S'\ne S},\widehat{T}_S)\le F(\{T_S^\star\})$.
Since $\{T_S^\star\}$ is already globally optimal, the new tuple is also a global minimizer.

If $\rank(T_S^\star)>\min\{\rank(P_S^{\mathrm{L},\star}),\rank(P_S^{\mathrm{R},\star})\}$, then 
\[
\rank(\widehat{T}_S)\le\min\{\rank(P_S^{\mathrm{L},\star}),\rank(P_S^{\mathrm{R},\star})\}<\rank(T_S^\star),
\]
so, the new global minimizer would have strictly smaller total rank, contradicting the choice of $\{T_S^\star\}$.
Therefore, the displayed rank bound must hold for every block $S$.
\end{proof}

\begin{proof}[Proof of Theorem~\ref{thm:rank-attn}]
By Lemma~\ref{lem:frozen-suffix-L-attn}, for each sample $i$ the network output is a deterministic function of the collection
\[
\left\{(W_{t,S,0}^{(a)}+T_{t,S}^{(a)})H_{t,S,i}(\{T_{\tau,S'}^{(a')}\}_{\tau>t})\right\}_{(t,a,S)\in\widetilde{\mathcal{S}}}.
\]
Therefore the empirical loss exhibits $L$-stage blockwise linear transformation dependence in the sense of Definition~\ref{def:L-stage-block-lin-meas}, with linear mappings $\Phi_{t,(a,S)}(T,z)=TH_{t,S}(z)$.
Assumption~\ref{ast:meas-wp-new-new} lets us invoke Theorem~\ref{thm:rank-L-stage}, so the corresponding trace-norm regularized objective admits a global minimizer.
We now verify the projector-invariance hypothesis of Theorem~\ref{thm:rank-full-selfconsistent} for the same objective, viewed as a multi-block problem indexed by $\widetilde{\mathcal{S}}$.

Fix an adapted block $(t,a,S)\in\widetilde{\mathcal{S}}$ and hold all companion blocks fixed.
These companions determine a fixed upstream state $z_{t+1}$ and therefore fixed samplewise input matrices $H_{t,Q,i},H_{t,K,i},H_{t,V,i}(i\in[N])$.

First, consider the case $S=Q$.
Hold all blocks other than $T_{t,Q}^{(a)}$ fixed.
Define
\[
\widetilde{K}_{t,i}^{(a)}\coloneqq(W_{t,K,0}^{(a)}+T_{t,K}^{(a)})H_{t,K,i},\quad \widetilde{V}_{t,i}^{(a)}\coloneqq(W_{t,V,0}^{(a)}+T_{t,V}^{(a)})H_{t,V,i},
\] 
and for $T\in\R^{d_{k,t}\times n_{t,Q}}$ set
\[
A_{t,Q,i}^{(a)}(T)\coloneqq\mathrm{softmax}_{\mathrm{row}}\left(\frac{((W_{t,Q,0}^{(a)}+T)H_{t,Q,i})^\top\widetilde{K}_{t,i}^{(a)}}{\sqrt{d_{k,t}}}+C_{t,i}^M\right).
\]
Since every other head, every residual contribution, and every downstream stage is fixed, the full stage-$t$ output on sample $i$ is equal to a fixed matrix plus
\[
W_{t,O}^{(a)}\widetilde{V}_{t,i}^{(a)}A_{t,Q,i}^{(a)}(T)^\top.
\]
Hence, the conditional loss depends on $T$ only through the collection
\[
\left\{W_{t,O}^{(a)}\widetilde{V}_{t,i}^{(a)}A_{t,Q,i}^{(a)}(T)^\top\right\}_{i=1}^N.
\]

Define
\[
P_{t,i}^\perp\coloneqq I_{\varrho_{t,K,i}}-\frac{1}{\varrho_{t,K,i}}\mathbf{1}_{\varrho_{t,K,i}}\mathbf{1}_{\varrho_{t,K,i}}^\top;
K_{t,\mathrm{ctr}}^{(a)}\coloneqq[\widetilde{K}_{t,1}^{(a)}P_{t,1}^\perp,\ldots,\widetilde{K}_{t,N}^{(a)}P_{t,N}^\perp];
\widehat{T}_{t,Q}^{(a)}\coloneqq P_{K_{t,\mathrm{ctr}}^{(a)}}T_{t,Q}^{(a)}P_{H_{t,Q}}.
\]
By Lemma~\ref{lem:softmax-row-center}, for every $i$, the matrix $A_{t,Q,i}^{(a)}(T)$ depends only on
\[
\left(\frac{((W_{t,Q,0}^{(a)}+T)H_{t,Q,i})^\top\widetilde{K}_{t,i}^{(a)}}{\sqrt{d_{k,t}}}\right)P_{t,i}^\perp.
\]
Moreover, $(H_{t,Q,i})^\top(\widehat{T}_{t,Q}^{(a)})^\top\widetilde{K}_{t,i}^{(a)}P_{t,i}^\perp=(H_{t,Q,i})^\top(T_{t,Q}^{(a)})^\top\widetilde{K}_{t,i}^{(a)}P_{t,i}^\perp$, because $(H_{t,Q,i})^\top P_{H_{t,Q}}=(H_{t,Q,i})^\top\text{ and }P_{K_{t,\mathrm{ctr}}^{(a)}}\widetilde{K}_{t,i}^{(a)}P_{t,i}^\perp=\widetilde{K}_{t,i}^{(a)}P_{t,i}^\perp$.
Therefore $A_{t,Q,i}^{(a)}(\widehat{T}_{t,Q}^{(a)})=A_{t,Q,i}^{(a)}(T_{t,Q}^{(a)})\ \forall i$, so the conditional loss is invariant under
\[
T_{t,Q}^{(a)}\mapsto P_{K_{t,\mathrm{ctr}}^{(a)}}T_{t,Q}^{(a)}P_{H_{t,Q}}.
\]

Next, consider the case $S=K$.
Hold all blocks other than $T_{t,K}^{(a)}$ fixed and define
\[
\widetilde{Q}_{t,i}^{(a)}\coloneqq(W_{t,Q,0}^{(a)}+T_{t,Q}^{(a)})H_{t,Q,i},\quad \widetilde{V}_{t,i}^{(a)}\coloneqq(W_{t,V,0}^{(a)}+T_{t,V}^{(a)})H_{t,V,i}.
\]
For $T\in\R^{d_{k,t}\times n_{t,K}}$, set
\[
A_{t,K,i}^{(a)}(T)\coloneqq\mathrm{softmax}_{\mathrm{row}}\left(\frac{(\widetilde{Q}_{t,i}^{(a)})^\top(W_{t,K,0}^{(a)}+T)H_{t,K,i}}{\sqrt{d_{k,t}}}+C_{t,i}^M\right).
\]
As above, the conditional loss depends on $T$ only through
\[
\left\{W_{t,O}^{(a)}\widetilde{V}_{t,i}^{(a)}A_{t,K,i}^{(a)}(T)^\top\right\}_{i=1}^N.
\]
By Lemma~\ref{lem:softmax-row-center}, $A_{t,K,i}^{(a)}(T)$ depends only on
\[
\frac{(\widetilde{Q}_{t,i}^{(a)})^\top W_{t,K,0}^{(a)}H_{t,K,i}P_{t,i}^\perp+(\widetilde{Q}_{t,i}^{(a)})^\top TH_{t,K,i}P_{t,i}^\perp}{\sqrt{d_{k,t}}}.
\]
Define
\[
Q_{t,\mathrm{all}}^{(a)}\coloneqq[\widetilde{Q}_{t,1}^{(a)},\ldots,\widetilde{Q}_{t,N}^{(a)}], \
H_{t,K,\mathrm{ctr}}\coloneqq[H_{t,K,1}P_{t,1}^\perp,\ldots,H_{t,K,N}P_{t,N}^\perp],
\ 
\widehat{T}_{t,K}^{(a)}\coloneqq P_{Q_{t,\mathrm{all}}^{(a)}}T_{t,K}^{(a)}P_{H_{t,K,\mathrm{ctr}}}.
\]
Then $(\widetilde{Q}_{t,i}^{(a)})^\top\widehat{T}_{t,K}^{(a)}H_{t,K,i}P_{t,i}^\perp=(\widetilde{Q}_{t,i}^{(a)})^\top T_{t,K}^{(a)}H_{t,K,i}P_{t,i}^\perp$, because $(\widetilde{Q}_{t,i}^{(a)})^\top P_{Q_{t,\mathrm{all}}^{(a)}}=(\widetilde{Q}_{t,i}^{(a)})^\top,P_{H_{t,K,\mathrm{ctr}}}H_{t,K,i}P_{t,i}^\perp=H_{t,K,i}P_{t,i}^\perp$.
Thus $A_{t,K,i}^{(a)}(\widehat{T}_{t,K}^{(a)})=A_{t,K,i}^{(a)}(T_{t,K}^{(a)})\ \forall i$, and the conditional loss is invariant under
\[
T_{t,K}^{(a)}\mapsto P_{Q_{t,\mathrm{all}}^{(a)}}T_{t,K}^{(a)}P_{H_{t,K,\mathrm{ctr}}}.
\]

Finally, consider the case $S=V$.
Hold all blocks other than $T_{t,V}^{(a)}$ fixed.
Then all attention matrices
\[
A_{t,i}^{(a)}\coloneqq\mathrm{softmax}_{\mathrm{row}}\left(\frac{(\widetilde{Q}_{t,i}^{(a)})^\top\widetilde{K}_{t,i}^{(a)}}{\sqrt{d_{k,t}}}+C_{t,i}^M\right)
\]
are fixed, and the conditional loss depends on $T$ only through
\[
\left\{W_{t,O}^{(a)}(W_{t,V,0}^{(a)}+T)H_{t,V,i}(A_{t,i}^{(a)})^\top\right\}_{i=1}^N.
\]
Define $H_{t,V,A}^{(a)}\coloneqq[H_{t,V,1}(A_{t,1}^{(a)})^\top,\ldots,H_{t,V,N}(A_{t,N}^{(a)})^\top]$ and set $\widehat{T}_{t,V}^{(a)}\coloneqq P_{W_{t,O}^{(a)\top}}T_{t,V}^{(a)}P_{H_{t,V,A}^{(a)}}$.
Then for every $i$, $W_{t,O}^{(a)}\widehat{T}_{t,V}^{(a)}H_{t,V,i}(A_{t,i}^{(a)})^\top=W_{t,O}^{(a)}T_{t,V}^{(a)}H_{t,V,i}(A_{t,i}^{(a)})^\top$, because $W_{t,O}^{(a)}P_{W_{t,O}^{(a)\top}}=W_{t,O}^{(a)},P_{H_{t,V,A}^{(a)}}H_{t,V,i}(A_{t,i}^{(a)})^\top=H_{t,V,i}(A_{t,i}^{(a)})^\top$.
Hence, the conditional loss is invariant under
\[
T_{t,V}^{(a)}\mapsto P_{W_{t,O}^{(a)\top}}T_{t,V}^{(a)}P_{H_{t,V,A}^{(a)}}.
\]

We have therefore verified the hypotheses of Theorem~\ref{thm:rank-full-selfconsistent} for the flattened block family $\widetilde{\mathcal{S}}$.
Hence, there exists a global minimizer minimizing total rank over the global minimizer set, and for that minimizer, the rank of each adapted block is bounded by the ranks of the corresponding projector pairs.

Evaluating the projector ranks at this rank-minimal global minimizer gives the following intermediate bounds:
\begin{align*}
\rank(T_{t,Q}^{(a)\star})&\le\min\{\rank(K_{t,\mathrm{ctr}}^{(a)\star}),\rank(H_{t,Q}^\star)\} \\  \rank(T_{t,K}^{(a)\star})& \le\min\{\rank(Q_{t,\mathrm{all}}^{(a)\star}),\rank(H_{t,K,\mathrm{ctr}}^\star)\},
\end{align*}
and
\[
\rank(T_{t,V}^{(a)\star})\le\min\{\rank(W_{t,O}^{(a)}),\rank(H_{t,V,A}^{(a)\star})\}.
\]
Here
\[
K_{t,\mathrm{ctr}}^{(a)\star}\coloneqq[\widetilde{K}_{t,1}^{(a)\star}P_{t,1}^\perp,\ldots,\widetilde{K}_{t,N}^{(a)\star}P_{t,N}^\perp],
\quad 
Q_{t,\mathrm{all}}^{(a)\star}\coloneqq[\widetilde{Q}_{t,1}^{(a)\star},\ldots,\widetilde{Q}_{t,N}^{(a)\star}],
\]
and
\[
H_{t,V,A}^{(a)\star}\coloneqq[H_{t,V,1}^\star(A_{t,1}^{(a)\star})^\top,\ldots,H_{t,V,N}^\star(A_{t,N}^{(a)\star})^\top],
\quad 
H_{t,K,\mathrm{ctr}}^\star\coloneqq[H_{t,K,1}^\star P_{t,1}^\perp,\ldots,H_{t,K,N}^\star P_{t,N}^\perp],
\]

We now bound the ranks of these matrices.
Since 
\[
K_{t,\mathrm{ctr}}^{(a)\star}=[\widetilde{K}_{t,1}^{(a)\star},\ldots,\widetilde{K}_{t,N}^{(a)\star}]\operatorname{blkdiag}(P_{t,1}^\perp,\ldots,P_{t,N}^\perp),
\]
we have
\[
\rank(K_{t,\mathrm{ctr}}^{(a)\star})\le\min\left\{\rank([\widetilde{K}_{t,1}^{(a)\star},\ldots,\widetilde{K}_{t,N}^{(a)\star}]),\sum_{i=1}^N\rank(P_{t,i}^\perp)\right\}.
\]
Because $\rank(P_{t,i}^\perp)=\varrho_{t,K,i}-1$ and $[\widetilde{K}_{t,1}^{(a)\star},\ldots,\widetilde{K}_{t,N}^{(a)\star}]=(W_{t,K,0}^{(a)}+T_{t,K}^{(a)\star})H_{t,K}^\star$, it follows that
\[
\rank(K_{t,\mathrm{ctr}}^{(a)\star})\le\min\left\{d_{k,t},\rank(H_{t,K}^\star),\sum_{i=1}^N(\varrho_{t,K,i}-1)\right\}.
\]
Combining this with
\[
\rank(T_{t,Q}^{(a)\star})\le\min\{\rank(K_{t,\mathrm{ctr}}^{(a)\star}),\rank(H_{t,Q}^\star)\}
\]
gives the stated query bound.

For the key bound, first observe that $\rank(Q_{t,\mathrm{all}}^{(a)\star})\le\min\{d_{k,t},\rank(H_{t,Q}^\star)\}$, because $Q_{t,\mathrm{all}}^{(a)\star}=(W_{t,Q,0}^{(a)}+T_{t,Q}^{(a)\star})H_{t,Q}^\star$.
Also, $H_{t,K,\mathrm{ctr}}^\star=H_{t,K}^\star\operatorname{blkdiag}(P_{t,1}^\perp,\ldots,P_{t,N}^\perp)$, so
\[
\rank(H_{t,K,\mathrm{ctr}}^\star)\le\min\left\{\rank(H_{t,K}^\star),\sum_{i=1}^N(\varrho_{t,K,i}-1)\right\}.
\]
Combining these two estimates with
\[
\rank(T_{t,K}^{(a)\star})\le\min\{\rank(Q_{t,\mathrm{all}}^{(a)\star}),\rank(H_{t,K,\mathrm{ctr}}^\star)\}
\]
gives the stated key bound.

For the value bound, we have $H_{t,V,A}^{(a)\star}=H_{t,V}^\star\operatorname{blkdiag}\bigl((A_{t,1}^{(a)\star})^\top,\ldots,(A_{t,N}^{(a)\star})^\top\bigr)$, and hence
\[
\rank(H_{t,V,A}^{(a)\star})\le\min\left\{\rank(H_{t,V}^\star),\sum_{i=1}^N\rank(A_{t,i}^{(a)\star})\right\}.
\]
Since $W_{t,O}^{(a)}$ has at most $d_{v,t}$ columns, $\rank(W_{t,O}^{(a)})\le d_{v,t}$ so combining this with
\[
\rank(T_{t,V}^{(a)\star})\le\min\{\rank(W_{t,O}^{(a)}),\rank(H_{t,V,A}^{(a)\star})\}
\]
gives the stated value bound.
\end{proof}

The adapted-output extension follows from the same blockwise compression, but not from a simultaneous linear observation map for the full $Q/K/V/O$ tuple.

\begin{proof}[Proof of Theorem~\ref{thm:rank-attn-adapted-output}]
Let $\mathcal{I}$ index all selected query, key, value, and full output-projection updates. By the assumed continuity of the forward computation, the empirical loss is lower semicontinuous and bounded below. Since every adapted block has a strictly positive nuclear-norm coefficient, the objective is coercive on the finite-dimensional product space $\prod_{j\in\mathcal{I}}\R^{m_j\times n_j}$ and therefore attains a global minimum.

Since the sum of matrix ranks is integer valued and bounded above by $\sum_{j\in\mathcal{I}}\min\{m_j,n_j\}$, there is a global minimizer $\{T_j^\star\}$ minimizing that sum over all global minimizers. We establish the hypotheses of Theorem~\ref{thm:rank-full-selfconsistent} by fixing all companion updates each time a block is compressed.

For any query update, the input matrices $H_{t,Q,i}$ and adapted keys $\widetilde K_{t,i}^{(a)}$ do not depend on that query update.
Row-wise score centering and the argument in the proof of Theorem~\ref{thm:rank-attn} therefore show that replacing $T_{t,Q}^{(a)}$ by $P_{K_{t,\mathrm{ctr}}^{(a)}}T_{t,Q}^{(a)}P_{H_{t,Q}}$ preserves every corresponding attention matrix.
The adapted output projection is held fixed during this replacement, so it does not change that invariance. Similarly, for a key update, replacing $T_{t,K}^{(a)}$ by
\[
P_{Q_{t,\mathrm{all}}^{(a)}}T_{t,K}^{(a)}P_{H_{t,K,\mathrm{ctr}}}
\]
preserves the centered score products and the attention matrices. Thus the query and key projector bounds from Theorem~\ref{thm:rank-attn} remain valid.

For an adapted value block, all attention matrices and the companion output update $T_{t,O}$ are fixed.
Write the adapted output projection in head blocks as
\[
\widetilde W_{t,O}\coloneqq W_{t,O,0}+T_{t,O}=[\widetilde W_{t,O}^{(a)}]_{a\in\mathcal{A}_t},\qquad
\widetilde W_{t,O}^{(a)}\in\R^{d_{\mathrm{out},t}\times d_{v,t}},
\]
and put
\[
C_{t,i}^{(a)}=H_{t,V,i}(A_{t,i}^{(a)})^\top,
\quad C_t^{(a)}=[C_{t,1}^{(a)},\ldots,C_{t,N}^{(a)}].
\]
Here $C_{t,i}^{(a)}\in\R^{n_{t,V}\times\varrho_{t,Q,i}}$. Set
\[
P_O^{(a)}=P_{(\widetilde W_{t,O}^{(a)})^\top},\qquad
P_C^{(a)}=P_{C_t^{(a)}}.
\]
For any $T\in\R^{d_{v,t}\times n_{t,V}}$ and every sample $i$,
\[
\widetilde W_{t,O}^{(a)}P_O^{(a)}TP_C^{(a)}C_{t,i}^{(a)}
=\widetilde W_{t,O}^{(a)}TC_{t,i}^{(a)}.
\]
Thus the stage output and the downstream loss are unchanged by $T\mapsto P_O^{(a)}TP_C^{(a)}$. The corresponding nuclear norm does not increase by Lemma~\ref{lem:proj-compression}.

For the full output projection, hold all $Q/K/V$ updates and all other blocks fixed. For each training sample $i$, define
\[
B_{t,i}^{(a)}\coloneqq\widetilde V_{t,i}^{(a)}(A_{t,i}^{(a)})^\top
\in\R^{d_{v,t}\times\varrho_{t,Q,i}},\quad
B_{t,i}\coloneqq\begin{bmatrix}B_{t,i}^{(a_1)}\\ \vdots\\ B_{t,i}^{(a_{|\mathcal{A}_t|})}\end{bmatrix},\quad
H_{t,O}\coloneqq[B_{t,1},\ldots,B_{t,N}].
\]
Every $B_{t,i}$ is then fixed, and the varying part of the stage output equals $T_{t,O}B_{t,i}$. Since $P_{H_{t,O}}B_{t,i}=B_{t,i}$,
\[
(T_{t,O}P_{H_{t,O}})B_{t,i}=T_{t,O}B_{t,i}\qquad(i\in[N]).
\]
The replacement $T_{t,O}\mapsto T_{t,O}P_{H_{t,O}}$ therefore preserves the objective's loss term and does not increase the nuclear norm.

All conditional compression projectors depend only on companion updates. Theorem~\ref{thm:rank-full-selfconsistent} applies to the finite block family $\mathcal{I}$ and gives, at the chosen rank-minimal global minimizer,
\begin{equation}\label{eqn:adapted-output-sharp}
\rank(T_{t,V}^{(a)\star})\le\min\{\rank(\widetilde W_{t,O}^{(a)\star}),\rank(C_t^{(a)\star})\},
\quad
\rank(T_{t,O}^\star)\le\min\{d_{\mathrm{out},t},\rank(H_{t,O}^\star)\}.
\end{equation}
It remains to bound the three ranks on the right-hand side of~\eqref{eqn:adapted-output-sharp}.
First, $\widetilde W_{t,O}^{(a)\star}$ has $d_{\mathrm{out},t}$ rows and $d_{v,t}$ columns, so
\[
\rank(\widetilde W_{t,O}^{(a)\star})\le\min\{d_{\mathrm{out},t},d_{v,t}\}\le d_{v,t}.
\]
Second, the identity
\[
C_t^{(a)\star}=H_{t,V}^\star\operatorname{blkdiag}((A_{t,1}^{(a)\star})^\top,\ldots,(A_{t,N}^{(a)\star})^\top)
\]
implies $\rank(C_t^{(a)\star})\le\min\{\rank(H_{t,V}^\star),\sum_{i=1}^N\rank(A_{t,i}^{(a)\star})\}$.
Third, $H_{t,O}^\star\in\R^{(|\mathcal{A}_t|d_{v,t})\times\sum_{i=1}^N\varrho_{t,Q,i}}$ by construction, so $\rank(H_{t,O}^\star)\le\min\{|\mathcal{A}_t|d_{v,t},\sum_{i=1}^N\varrho_{t,Q,i}\}$.
Substituting these three estimates into~\eqref{eqn:adapted-output-sharp} yields the two bounds stated in the theorem and completes the proof.
\end{proof}

\subsection{Proofs for Section~\ref{sec:rankdef}}\label{sec:proof-rankdef}

For the positive rank-deficiency statement, the first step is to write the first-order stationarity equations in a symmetric block form.

\begin{lemma}[Stationarity conditions for the perturbed block LoRA objective]\label{lem:KKT-perturbed-lora}
In the setting of Definition~\ref{def:perturbed-lora-new}, suppose $\mathcal{L}$ is differentiable and $\{(U_S,V_S)\}$ is a first-order stationary point of $\mathcal{L}^{\mathrm{lora}}_{\{\lambda_S\},\{P_S\}}$.
Let $T_S\coloneqq U_SV_S^\top$ and $G_S\coloneqq\nabla_{T_S}\mathcal{L}(\{T_{S'}\})$.
Then for every $S\in\mathcal{S}$,
\[
\bigl(\mathcal{B}_{m_S,n_S}(G_S)+\lambda_S I_{m_S+n_S}+P_S\bigr)R_S=0,R_S=\begin{bmatrix}U_S\\V_S\end{bmatrix}.
\]
\end{lemma}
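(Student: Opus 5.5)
This is a direct gradient computation. I will write the perturbed objective blockwise as
\[
\mathcal{L}(\{U_SV_S^\top\})+\sum_S\frac{\lambda_S}{2}\bigl(\|U_S\|_F^2+\|V_S\|_F^2\bigr)+\sum_S\frac12\langle P_S,R_SR_S^\top\rangle,
\]
with $R_S=\bigl[\begin{smallmatrix}U_S\\V_S\end{smallmatrix}\bigr]$. First-order stationarity means that the partial gradients with respect to every $U_S$ and every $V_S$ vanish. The plan is to compute these partial gradients separately for each of the three terms, and then stack them into a single $(m_S+n_S)\times r_S$ equation.

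\emph{Loss term.} The map $(U_S,V_S)\mapsto U_SV_S^\top$ is smooth, and $\mathcal{L}$ is differentiable, so the chain rule applies. Perturbing $U_S$ by $\Delta$ gives $\langle G_S,\Delta V_S^\top\rangle=\langle G_SV_S,\Delta\rangle$, so the $U_S$-gradient is $G_SV_S$. Symmetrically, the $V_S$-gradient is $G_S^\top U_S$. Stacking these gives
\[
\begin{bmatrix}G_SV_S\\G_S^\top U_S\end{bmatrix}=\begin{bmatrix}0&G_S\\G_S^\top&0\end{bmatrix}R_S=\mathcal{B}_{m_S,n_S}(G_S)R_S.
\]
\emph{Weight decay.} The gradients are $\lambda_SU_S$ and $\lambda_SV_S$, which stack to $\lambda_SR_S$.

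\emph{Perturbation.} Since $P_S$ is symmetric, $\nabla_{R}\tfrac12\langle P_S,RR^\top\rangle=P_SR$. The $U_S$ and $V_S$ gradients are simply the top and bottom row blocks of $P_SR_S$.

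Summing the three stacked contributions and setting the result to zero gives $(\mathcal{B}_{m_S,n_S}(G_S)+\lambda_SI+P_S)R_S=0$ for every $S$.

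The only point needing care is the multi-block dependence. The gradient $G_S$ must be evaluated at the full tuple $\{T_{S'}\}$. Perturbing $(U_S,V_S)$ changes only $T_S$, so the chain rule isolates $\nabla_{T_S}\mathcal{L}$, and Assumption~\ref{ast:stationarity-diff} guarantees that this gradient exists.
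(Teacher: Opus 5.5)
Your proposal is correct and follows essentially the same route as the paper: chain rule for the loss term giving $G_SV_S$ and $G_S^\top U_S$, the weight-decay contributions $\lambda_SU_S$ and $\lambda_SV_S$, the perturbation gradient $P_SR_S$, then stacking into the block equation. Your remark that only $T_S$ moves when $(U_S,V_S)$ is perturbed, so blockwise differentiability suffices, is a small but welcome addition.
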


\begin{proof}[Proof of Lemma~\ref{lem:KKT-perturbed-lora}]
Fix $S$ and differentiate w.r.t.\ $U_S$ and $V_S$ using the chain rule: $\nabla_{U_S}\mathcal{L}(\{U_{S'}V_{S'}^\top\})=G_SV_S$ and $\nabla_{V_S}\mathcal{L}(\{U_{S'}V_{S'}^\top\})=G_S^\top U_S$.
The regularizer contributes $\lambda_SU_S$ and $\lambda_SV_S$.
The perturbation term satisfies $\nabla_{R_S}\frac12\langle P_S,R_SR_S^\top\rangle=P_SR_S$.
Stacking the two stationarity equations yields the displayed KKT condition.
\end{proof}

The genericity argument is a dimension-counting statement.
We will need the following elementary measure-zero lemma:

\begin{lemma}[Measure-zero Minkowski sums with $C^1$ images]\label{lem:measure-zero-sum}
Let $\mathcal{M}$ be an $m$-dimensional embedded $C^1$ submanifold of $\R^d$.
Let $U\subseteq\R^{p}$ be open, and let $f:U\to\R^d$ be $C^1$.
If $m+p<d$, then
\[
\mathcal{M}+f(U)\coloneqq\{x+f(u):x\in\mathcal{M},u\in U\}
\]
has Lebesgue measure zero in $\R^d$.
\end{lemma}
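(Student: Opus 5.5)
The plan is to realize $\mathcal{M}+f(U)$ as the image of a single $C^1$ map from a manifold of dimension $m+p<d$, and then invoke the standard fact that $C^1$ images of lower-dimensional sets are Lebesgue-null (the easy half of Sard's theorem).

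\emph{Local reduction.} Because $\mathcal{M}$ is an embedded $C^1$ submanifold of $\R^d$, it is second countable. It is therefore covered by countably many coordinate charts $\varphi_k:W_k\to\mathcal{M}$, where each $W_k\subseteq\R^m$ is open and each $\varphi_k$ is $C^1$. Define
\[
F_k:W_k\times U\to\R^d,\qquad F_k(w,u)\coloneqq\varphi_k(w)+f(u).
\]
Each $F_k$ is $C^1$ on the open set $W_k\times U\subseteq\R^{m+p}$, and
\[
\mathcal{M}+f(U)=\bigcup_k F_k(W_k\times U).
\]
A countable union of null sets is null, so it suffices to show that each $F_k(W_k\times U)$ has measure zero.

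\emph{Null images.} Extend $F_k$ to $(W_k\times U)\times\R^{d-m-p}$ by $\widetilde F_k(x,y)\coloneqq F_k(x)$. This is a $C^1$ map between open subsets of $\R^d$. The image $F_k(W_k\times U)$ equals $\widetilde F_k\bigl((W_k\times U)\times\{0\}\bigr)$, which is the image of a Lebesgue-null set in $\R^d$. A $C^1$ map between open subsets of $\R^d$ is locally Lipschitz, so it sends null sets to null sets. To see this, exhaust the domain by countably many compact cubes. On each cube the map is Lipschitz with some constant $L$. Cover the null set by small cubes of total volume at most $\varepsilon$; their images lie in cubes of total volume at most $(\sqrt d L)^d\varepsilon$. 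Hence each image is null, and the claim follows.

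\emph{Main obstacle.} Strictly speaking, the delicate points are only bookkeeping:
\begin{itemize}
\item \emph{Countability of the atlas.} This uses second countability of $\mathcal{M}$, which is inherited from $\R^d$.
\item \emph{Lipschitz bound for the null-set mapping argument.} This is handled locally on compact cubes, which avoids any global Lipschitz constant.
\end{itemize}
There is no use of Sard's theorem proper, since the strict inequality $m+p<d$ makes the dimension count trivial.
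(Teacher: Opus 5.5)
Your proof is correct and takes essentially the same route as the paper: a countable atlas, the $C^1$ map $F(v,u)=\varphi(v)+f(u)$ on an open subset of $\R^{m+p}$, and Lipschitz bounds on compact cubes to show its image is Lebesgue-null. The only cosmetic difference is at the last step. You pad the domain to $\R^d$ and use that locally Lipschitz maps send null sets to null sets. The paper instead subdivides each cube into $N^{m+p}$ subcubes and bounds the outer measure of the image directly by $\omega_d C_Q^d N^{m+p-d}\to 0$.
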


\begin{proof}[Proof of Lemma~\ref{lem:measure-zero-sum}]
Since $\mathcal{M}\subseteq\R^d$ is an embedded submanifold, it is second countable and admits a countable $C^1$ atlas.
It is therefore enough to prove the claim on one coordinate patch.

Fix a chart $\varphi:V\to\mathcal{M}$, where $V\subseteq\R^{m}$ is open.
On this patch, $\{\varphi(v)+f(u):v\in V,u\in U\}$ is the image of the $C^1$ map
\[
F:V\times U\to\R^d,\qquad F(v,u)=\varphi(v)+f(u).
\]
The domain $V\times U$ is open in $\R^{m+p}$.
Set $k\coloneqq m+p$.
By assumption, $k<d$.

Cover $V\times U$ by countably many compact cubes $Q_{\ell}$ whose interiors cover $V\times U$ and whose closures are contained in $V\times U$.
Fix one such cube $Q$.
Since $F$ is $C^1$, its derivative is bounded on $Q$.
Hence there exists $L_Q<\infty$ such that
\[
\|F(x)-F(y)\|_2\le L_Q\|x-y\|_2\ \forall x,y\in Q.
\]
Subdivide $Q$ into $N^{k}$ subcubes of side length proportional to $N^{-1}$.
The image of each subcube is contained in a Euclidean ball in $\R^d$ of radius at most $C_QN^{-1}$, where $C_Q$ is independent of $N$.
Therefore the $d$-dimensional Lebesgue outer measure of $F(Q)$ is bounded by
\[
N^{k}\omega_d\left(\frac{C_Q}{N}\right)^d=\omega_dC_Q^dN^{k-d},
\]
where $\omega_d$ is the volume of the unit ball in $\R^d$.
Since $k-d<0$, this upper bound converges to $0$ as $N\to\infty$.
Thus $F(Q)$ has Lebesgue measure zero.

Taking the countable union over all cubes $Q_{\ell}$ and then over the countable atlas of $\mathcal{M}$ shows that $\mathcal{M}+f(U)$ has Lebesgue measure zero in $\R^d$.
\end{proof}

\begin{proof}[Proof of Theorem~\ref{thm:generic-rankdef-master}]
If $r>d$, then every $R\in\R^{d\times r}$ satisfies $\rank(R)\le d<r$, so the conclusion is immediate.
Assume from now on that $r\le d$.

Let
\[
D_{\mathrm{sym}}\coloneqq\dim\mathbb{S}^d=\frac{d(d+1)}{2}.
\]
For $s\in\{r,r+1,\ldots,d\}$, define $\mathcal{A}_s\coloneqq\{M\in\mathbb{S}^d:\dim\mathcal{N}(M)=s\}$.
The set $\mathcal{A}_s$ is an embedded $C^\infty$ submanifold of $\mathbb{S}^d$ with
\[
\dim(\mathcal{A}_s)=D_{\mathrm{sym}}-\frac{s(s+1)}{2}.
\]
Indeed, $\mathcal{A}_s$ is the set of symmetric matrices of rank $d-s$; its codimension in $\mathbb{S}^d$ is the dimension of $\mathbb{S}^s$, namely $s(s+1)/2$.

Suppose that for some $P\in\mathbb{S}^d$ there exist $G\in\mathcal{G}$ and $R\in\R^{d\times r}$ such that $\rank(R)=r$ and
\[
(G+\lambda I_d+P)R=0.
\]
Set $M\coloneqq G+\lambda I_d+P$.
Since the columns of $R$ are linearly independent and lie in $\mathcal{N}(M)$, we have $\dim\mathcal{N}(M)\ge r$.
Therefore $M\in\bigcup_{s=r}^d\mathcal{A}_s$, and hence
\[
P=M-\lambda I_d-G\in\bigcup_{s=r}^d\bigl((\mathcal{A}_s-\lambda I_d)-\mathcal{G}\bigr).
\]

It remains to show that each set $(\mathcal{A}_s-\lambda I_d)-\mathcal{G}$ has Lebesgue measure zero in $\mathbb{S}^d$.
Since $-\mathcal{G}$ is contained in a finite union of $C^1$ images of dimension at most $p$, there exist open sets $U_j\subseteq\R^{p_j}$ with $p_j\le p$ and $C^1$ maps $f_j:U_j\to\mathbb{S}^d,j=1,\ldots,J$, such that
\[
-\mathcal{G}\subseteq\bigcup_{j=1}^J f_j(U_j).
\]
Fix $s\in\{r,\ldots,d\}$ and $j\in[J]$.
The translated manifold $\mathcal{A}_s-\lambda I_d$ is an embedded $C^\infty$ submanifold of $\mathbb{S}^d$ with dimension $D_{\mathrm{sym}}-\frac{s(s+1)}{2}$.
Because $s\ge r$ and $p_j\le p$,
\[
\left(D_{\mathrm{sym}}-\frac{s(s+1)}{2}\right)+p_j\le D_{\mathrm{sym}}-\frac{r(r+1)}{2}+p<
D_{\mathrm{sym}}.
\]
Lemma~\ref{lem:measure-zero-sum}, applied inside the Euclidean space $\mathbb{S}^d$, implies that $(\mathcal{A}_s-\lambda I_d)+f_j(U_j)$ has Lebesgue measure zero in $\mathbb{S}^d$.
Taking the finite union over $j$ gives that $(\mathcal{A}_s-\lambda I_d)-\mathcal{G}$ has Lebesgue measure zero in $\mathbb{S}^d$.

Finally, taking the finite union over $s=r,\ldots,d$ gives a Lebesgue-null exceptional set.
Thus, for every $P$ outside this exceptional set, there do not exist $G\in\mathcal{G}$ and $R\in\R^{d\times r}$ with $\rank(R)=r$ satisfying
\[
(G+\lambda I_d+P)R=0.
\]
Intersecting this full-measure set with $\mathbb{S}_+^d$ proves the conclusion for Lebesgue-a.e. positive semidefinite perturbation.
\end{proof}
The next lemma connects the stagewise observation model to the KKT matrices above.
\begin{lemma}[Stagewise gradient range under $L$-stage linear map dependence]\label{lem:grad-range-L-stage}
Assume $\mathcal{L}$ from Definition~\ref{def:L-stage-block-lin-meas} is differentiable.
Fix a parameter tuple and the corresponding states $z_t$.
Then for every $t\in[L]$ and $S\in\mathcal{S}_t$, we have $\nabla_{T_{t,S}}\mathcal{L}\in\mathcal{R}(\Phi_{t,S}(\cdot,z_{t+1})^\star)$.
\end{lemma}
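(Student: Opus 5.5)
The plan is to view $\mathcal{L}$, with all blocks other than $T_{t,S}$ frozen, as a function of $T_{t,S}$ alone, and to show that this function factors through the linear map $\Phi_{t,S}(\cdot,z_{t+1})$. The range statement then follows from the chain rule and the elementary identity $\mathcal{R}(A^\star)=\mathcal{N}(A)^\perp$.

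\textbf{Step 1 (factorization).} Fix every block except $T_{t,S}$. By Definition~\ref{def:L-stage-block-lin-meas}, the upstream states $z_{L+1},\ldots,z_{t+1}$ are built only from the stages $\tau>t$, so they are unaffected by $T_{t,S}$. In particular $z_{t+1}$ is fixed. The block $T_{t,S}$ enters the recursion only through $u\coloneqq\Phi_{t,S}(T_{t,S},z_{t+1})\in\mathcal{E}_{t,S}$. Stage $t$ then forms $z_t=g_t(\{u,u_{S'}\},z_{t+1})$ with the companion outputs $u_{S'}$ held fixed. The later stages $z_{t-1},\ldots,z_1$ and $g_0$ depend on $T_{t,S}$ only through $z_t$. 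Hence $\mathcal{L}(T_{t,S})=h(\Phi_{t,S}(T_{t,S},z_{t+1}))$ for some function $h:\mathcal{E}_{t,S}\to\R$, and the map $T\mapsto\Phi_{t,S}(T,z_{t+1})$ is linear by item~1 of the definition.

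\textbf{Step 2 (direct route, avoiding differentiability of $h$).} Put $A\coloneqq\Phi_{t,S}(\cdot,z_{t+1})$ and $G\coloneqq\nabla_{T_{t,S}}\mathcal{L}$. Take any $\Delta\in\mathcal{N}(A)$. Then $\mathcal{L}(T_{t,S}+s\Delta)=h(A T_{t,S}+sA\Delta)=h(AT_{t,S})$ for all $s$, so the function $s\mapsto\mathcal{L}(T_{t,S}+s\Delta)$ is constant. Since $\mathcal{L}$ is differentiable, its directional derivative is $\langle G,\Delta\rangle_F=0$. This shows $G\in\mathcal{N}(A)^\perp$.

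\textbf{Step 3 (range of the adjoint).} In finite dimensions $\mathcal{N}(A)^\perp=\mathcal{R}(A^\star)$, so $G\in\mathcal{R}(\Phi_{t,S}(\cdot,z_{t+1})^\star)$, which is the claim.

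The only delicate point is Step~2. A naive chain-rule argument would require $h$ to be differentiable, and the standing assumptions do not provide that. The nullspace argument sidesteps this: it needs only differentiability of $\mathcal{L}$ in $T_{t,S}$ and the exact invariance of $\mathcal{L}$ along $\mathcal{N}(A)$. That invariance follows from the fact that $z_{t+1}$ does not depend on $T_{t,S}$, which is exactly the upstream-only structure of Definition~\ref{def:L-stage-block-lin-meas}.
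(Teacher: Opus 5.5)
Your proof is correct and follows essentially the same route as the paper: freeze all other blocks so $z_{t+1}$ is fixed, use invariance of the conditional loss along $\mathcal{N}(\Phi_{t,S}(\cdot,z_{t+1}))$ to get $\langle\nabla_{T_{t,S}}\mathcal{L},\Delta\rangle_F=0$, then apply $\mathcal{N}^\perp=\mathcal{R}(\Phi^\star)$. Like you, the paper applies the nullspace argument directly and never needs differentiability of the induced function $h$.
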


\begin{proof}[Proof of Lemma~\ref{lem:grad-range-L-stage}]
Fix $t\in[L]$ and $S\in\mathcal{S}_t$, and fix all blocks except $T_{t,S}$.
Because all upstream blocks are fixed, the state $z_{t+1}$ is fixed.
Define the conditional loss
\[
F_{t,S}(T)\coloneqq\mathcal{L}\bigl(\{T_{u,A}\}_{(u,A)\ne(t,S)},T_{t,S}=T\bigr).
\]
By Definition~\ref{def:L-stage-block-lin-meas}, the function $F_{t,S}$ depends on $T$ only through the linear map $T\mapsto\Phi_{t,S}(T,z_{t+1})$.
Therefore, for every $\Delta\in\mathcal{N}(\Phi_{t,S}(\cdot,z_{t+1}))$ and every $T$, $F_{t,S}(T+\Delta)=F_{t,S}(T)$.
Since $F$ is differentiable, this implies
\[
0=DF_{t,S}(T)[\Delta]=\langle\nabla_{T_{t,S}}\mathcal{L},\Delta\rangle_F\forall\Delta\in\mathcal{N}(\Phi_{t,S}(\cdot,z_{t+1})).
\]
Hence $\nabla_{T_{t,S}}\mathcal{L}\in\mathcal{N}(\Phi_{t,S}(\cdot,z_{t+1}))^\perp$.
In finite dimensions, $\mathcal{N}(\Phi_{t,S}(\cdot,z_{t+1}))^\perp=\mathcal{R}(\Phi_{t,S}(\cdot,z_{t+1})^\star)$, so $\nabla_{T_{t,S}}\mathcal{L}\in\mathcal{R}(\Phi_{t,S}(\cdot,z_{t+1})^\star)$.
\end{proof}

To apply Theorem~\ref{thm:generic-rankdef-master}, we must bound the dimension of the union of these moving ranges over all reachable upstream states. The next lemma provides this bound by recursively parameterizing the reachable observed-output sets.

\begin{lemma}[Recursive parameterization of reachable states and moving adjoint ranges]\label{lem:reach-move-param-L}
Assume Assumption~\ref{ast:multistage-c1} holds.
Then the following statements hold:

\begin{enumerate}
\item For each $t\in[L+1]$, the reachable state set $\mathcal{Z}_t^{\mathrm{reach}}$ is $C^1$-parameterizable with dimension at most $\beta_t$.

\item For each $t\in[L]$ and $S\in\mathcal{S}_t$, define
\[
\mathcal{G}_{t,S}\coloneqq\bigcup_{z_{t+1}\in\mathcal{Z}_{t+1}^{\mathrm{reach}}}\left\{\mathcal{B}_{m_{t,S},n_{t,S}}(G):G\in\mathcal{R}\bigl(\Phi_{t,S}(\cdot,z_{t+1})^\star\bigr)\right\}\subseteq\mathbb{S}^{m_{t,S}+n_{t,S}}.
\]
Then $\mathcal{G}_{t,S}$ is $C^1$-parameterizable with dimension at most $\beta_{t+1}+d_{t,S}^{\max}$.
\end{enumerate}
\end{lemma}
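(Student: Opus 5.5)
The plan is a backward induction on $t$, from $t=L+1$ down to $t=1$, that builds explicit finite families of $C^1$ parameterizations. The base case is trivial: $\mathcal{Z}_{L+1}^{\mathrm{reach}}$ is a singleton, so it is the image of a constant map on $\R^0$, and this matches $\beta_{L+1}=0$.

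For the inductive step, assume $\mathcal{Z}_{t+1}^{\mathrm{reach}}\subseteq\bigcup_j f_j(U_j)$ with $U_j\subseteq\R^{p_j}$ open, $p_j\le\beta_{t+1}$, and $f_j$ of class $C^1$. We may take the $U_j$ small enough that $f_j(U_j)$ lies in the open neighborhood where Assumption~\ref{ast:multistage-c1} applies; this is possible after shrinking, since $f_j$ is continuous and only points mapping near the reachable set matter.

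The next step is to parameterize the moving range $\mathcal{R}(\Phi_{t,S}(\cdot,z))$ by a fixed number of coordinates. Fix an orthonormal basis $\{e_\ell\}$ of $\mathcal{E}_{t,S}$. The range of $\Phi_{t,S}(\cdot,z)^\star$ is spanned by the vectors $\Phi_{t,S}(\cdot,z)^\star(e_\ell)$, and each depends $C^1$ on $z$. Likewise, $\mathcal{R}(\Phi_{t,S}(\cdot,z))$ is spanned by $\Phi_{t,S}(B_k,z)$. The dimension of the range may vary with $z$, but it is at most $d_{t,S}^{\max}$. To avoid rank-jump issues, stratify by choice of index subsets. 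For each subset $I$ of basis indices with $|I|\le d_{t,S}^{\max}$, consider the map
\[
(u,c)\mapsto\sum_{\ell\in I}c_\ell\,\Phi_{t,S}(\cdot,f_j(u))^\star(e_\ell),\qquad (u,c)\in U_j\times\R^{|I|}.
\]
This map is $C^1$. For any $z$, the range of $\Phi_{t,S}(\cdot,z)^\star$ has dimension $d\le d_{t,S}^{\max}$, so some $d$ of the spanning vectors form a basis. Hence the union over finitely many $I$ covers every element. There are finitely many pairs $(j,I)$, and each domain has dimension at most $\beta_{t+1}+d_{t,S}^{\max}$.

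Composing with the linear map $\mathcal{B}_{m,n}$ preserves $C^1$ regularity, which proves part 2. The same subset construction, applied to the spanning vectors $\Phi_{t,S}(B_k,z)$, parameterizes the reachable outputs $u_{t,S}$ for each $S\in\mathcal{S}_t$ simultaneously. This gives $C^1$ maps from domains of dimension at most $\beta_{t+1}+\sum_S d_{t,S}^{\max}=\beta_t$. Composing with $g_t$, which is $C^1$ near the reachable arguments, covers $\mathcal{Z}_t^{\mathrm{reach}}$ and completes the induction for part 1.

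The main obstacle is the variable dimension of the range: there is no single smooth basis across $z$. The subset-of-spanning-vectors trick handles this, because Definition~\ref{def:c1-param} only requires the set to be \emph{covered} by finitely many images, not to be a manifold. A second delicate point is domain restriction. Compositions must be evaluated only where $g_t$ and the $\Phi$-maps are $C^1$. I would handle this by restricting each parameter domain to the open preimage of those neighborhoods. These preimages are open and still cover the reachable points, since reachable points lie in the neighborhoods by construction.
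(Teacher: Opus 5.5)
Your proposal is correct and follows essentially the same route as the paper. Both argue by backward induction from the singleton $\mathcal{Z}_{L+1}^{\mathrm{reach}}$, cover each moving range by finitely many maps indexed by subsets of at most $d_{t,S}^{\max}$ spanning vectors ($\Phi_{t,S}(B_k,z)$ for part 1, $\Phi_{t,S}(\cdot,z)^\star(e_\ell)$ for part 2), and restrict to open preimages of the neighborhoods in Assumption~\ref{ast:multistage-c1} so that composing with $g_t$ and $\mathcal{B}_{m,n}$ stays $C^1$. Your explicit shrinking of the $U_j$ is slightly more careful than the paper, which tacitly assumes the covering maps land inside $\mathcal{Z}_{t+1}^{\mathrm{reach}}$; this holds for its own construction but is not guaranteed by the definition of $C^1$-parameterizability alone.
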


\begin{proof}[Proof of Lemma~\ref{lem:reach-move-param-L}]
For each $t\in[L]$, let $\mathcal{O}_t$ be an open neighborhood of the reachable argument set
\[
\mathcal{A}_t^{\mathrm{arg}}\coloneqq\left\{\left(\{u_S\}_{S\in\mathcal{S}_t},z_{t+1}\right):z_{t+1}\in\mathcal{Z}_{t+1}^{\mathrm{reach}},u_S\in\mathcal{R}\bigl(\Phi_{t,S}(\cdot,z_{t+1})\bigr)\right\}
\]
on which $g_t$ is $C^1$.
For each pair $(t,S)$, let $\mathcal{O}_{t,S}^{\Phi}$ be an open neighborhood of $\mathcal{Z}_{t+1}^{\mathrm{reach}}$ on which the coordinate maps
\[
z_{t+1}\mapsto\Phi_{t,S}(B,z_{t+1})\text{ and }z_{t+1}\mapsto\Phi_{t,S}(\cdot,z_{t+1})^\star(e)
\]
are $C^1$ for every basis matrix $B$ of $\R^{m_{t,S}\times n_{t,S}}$ and every basis vector $e$ of $\mathcal{E}_{t,S}$.
These open sets exist by Assumption~\ref{ast:multistage-c1}.

We first prove the reachable-state claim by backward induction on $t$.
For $t=L+1$, the set $\mathcal{Z}_{L+1}^{\mathrm{reach}}$ is a singleton, hence it is $C^1$-parameterizable with dimension $0=\beta_{L+1}$.

Assume that $\mathcal{Z}_{t+1}^{\mathrm{reach}}$ is $C^1$-parameterizable with dimension at most $\beta_{t+1}$.
Choose finitely many $C^1$ maps $z_{t+1}^{(j)}:\Theta_j\to\mathcal{Z}_{t+1}^{\mathrm{reach}},j=1,\ldots,J$, whose images cover $\mathcal{Z}_{t+1}^{\mathrm{reach}}$, where each $\Theta_j$ is open and has dimension at most $\beta_{t+1}$.
For each block $S\in\mathcal{S}_t$, choose a basis $B_{S,1},\ldots,B_{S,p_S}$ of $\R^{m_{t,S}\times n_{t,S}}$, and set $k_S\coloneqq d_{t,S}^{\max}$.
For every $j$ and every tuple of index sets $I=(I_S)_{S\in\mathcal{S}_t},I_S=\{i_{S,1},\ldots,i_{S,q_S}\}\subseteq[p_S],q_S\le k_S$, define the preliminary argument map
\[
Q_{j,I}\left(\theta,\{\alpha_S\}_{S\in\mathcal{S}_t}\right)\coloneqq\left(\left\{\sum_{\ell=1}^{q_S}\alpha_{S,\ell}\Phi_{t,S}\bigl(B_{S,i_{S,\ell}},z_{t+1}^{(j)}(\theta)\bigr)\right\}_{S\in\mathcal{S}_t},z_{t+1}^{(j)}(\theta)\right).
\]
Here $\alpha_S\in\R^{q_S}$.
Restrict the domain to the open set $\Omega_{j,I}\coloneqq Q_{j,I}^{-1}(\mathcal{O}_t)$.
On $\Omega_{j,I}$, define
\[
F_{j,I}\left(\theta,\{\alpha_S\}_{S\in\mathcal{S}_t}\right)\coloneqq g_t\left(\left\{\sum_{\ell=1}^{q_S}\alpha_{S,\ell}\Phi_{t,S}\bigl(B_{S,i_{S,\ell}},z_{t+1}^{(j)}(\theta)\bigr)\right\}_{S\in\mathcal{S}_t},z_{t+1}^{(j)}(\theta)\right).
\]
The restriction to $\Omega_{j,I}$ is the important point: it ensures that the argument of $g_t$ lies in the neighborhood on which $g_t$ is assumed to be $C^1$.
Since each $z_{t+1}^{(j)}(\theta)$ lies in $\mathcal{Z}_{t+1}^{\mathrm{reach}}$, and since the coordinate maps of $\Phi_{t,S}$ are $C^1$ on $\mathcal{O}_{t,S}^{\Phi}$, the map $F_{j,I}$ is $C^1$ on $\Omega_{j,I}$.

The dimension of the domain of $F_{j,I}$ is at most
\[
\beta_{t+1}+\sum_{S\in\mathcal{S}_t}q_S\le\beta_{t+1}+\sum_{S\in\mathcal{S}_t}d_{t,S}^{\max}=\beta_t.
\]
It remains to check that these finitely many maps cover $\mathcal{Z}_t^{\mathrm{reach}}$.
Take any $z_t\in\mathcal{Z}_t^{\mathrm{reach}}$.
By definition, there exist $z_{t+1}\in\mathcal{Z}_{t+1}^{\mathrm{reach}},u_S\in\mathcal{R}\bigl(\Phi_{t,S}(\cdot,z_{t+1})\bigr)$ such that
\[
z_t=g_t(\{u_S\}_{S\in\mathcal{S}_t},z_{t+1}).
\]
Choose $j$ and $\theta\in\Theta_j$ with $z_{t+1}=z_{t+1}^{(j)}(\theta)$.
For each $S$, the range $
\mathcal{R}\bigl(\Phi_{t,S}(\cdot,z_{t+1})\bigr)$ has dimension at most $k_S$ and is spanned by the finite list
\[
\left\{\Phi_{t,S}(B_{S,a},z_{t+1})\right\}_{a=1}^{p_S}.
\]
Therefore, $u_S$ can be expressed as a linear combination of at most $k_S$ vectors from this list.
This gives some tuple $I$ and coefficients $\{\alpha_S\}$ such that $z_t=F_{j,I}\left(\theta,\{\alpha_S\}_{S\in\mathcal{S}_t}\right)$.
Moreover, the corresponding argument belongs to $\mathcal{A}_t^{\mathrm{arg}}\subseteq\mathcal{O}_t$, so the parameter point lies in $\Omega_{j,I}$.
Thus $\mathcal{Z}_t^{\mathrm{reach}}$ is covered by finitely many $C^1$ images of dimension at most $\beta_t$.
This completes the induction and proves the first claim.

We now prove the state-dependent adjoint ranges claim. 
Fix $t\in[L]$ and $S\in\mathcal{S}_t$.
By the first claim, $\mathcal{Z}_{t+1}^{\mathrm{reach}}$ is covered by finitely many $C^1$ parameterizations $z_{t+1}^{(j)}:\Theta_j\to\mathcal{Z}_{t+1}^{\mathrm{reach}}$, with domain dimension at most $\beta_{t+1}$.
Let $e_1,\ldots,e_{q_{t,S}}$ be a basis of $\mathcal{E}_{t,S}$, and set $k\coloneqq d_{t,S}^{\max}$.
For every $j$ and every index set $I=\{i_1,\ldots,i_q\}\subseteq[q_{t,S}],q\le k$, define
\[
M_{j,I}(\theta,\alpha)\coloneqq\sum_{\ell=1}^{q}\alpha_\ell\mathcal{B}_{m_{t,S},n_{t,S}}\left(\Phi_{t,S}(\cdot,z_{t+1}^{(j)}(\theta))^\star(e_{i_\ell})\right),
\]
where $\alpha\in\R^q$.
As before, one may restrict the domain to the open preimage
\[
\Omega_{j,I}^{\Phi}\coloneqq\left\{(\theta,\alpha):z_{t+1}^{(j)}(\theta)\in\mathcal{O}_{t,S}^{\Phi}\right\}.
\]
This restriction contains all parameter points needed to cover the reachable set, because $z_{t+1}^{(j)}(\Theta_j)\subseteq\mathcal{Z}_{t+1}^{\mathrm{reach}}\subseteq\mathcal{O}_{t,S}^{\Phi}$.
By Assumption~\ref{ast:multistage-c1}, each map $M_{j,I}$ is $C^1$ on $\Omega_{j,I}^{\Phi}$.
Its domain dimension is at most $\beta_{t+1}+q\le\beta_{t+1}+d_{t,S}^{\max}$.

It remains to verify coverage.
Take any element of $\mathcal{G}_{t,S}$.
It has the form $\mathcal{B}_{m_{t,S},n_{t,S}}(G),G\in\mathcal{R}\bigl(\Phi_{t,S}(\cdot,z_{t+1})^\star\bigr)$ for some $z_{t+1}\in\mathcal{Z}_{t+1}^{\mathrm{reach}}$.
Choose $j$ and $\theta$ such that $z_{t+1}=z_{t+1}^{(j)}(\theta)$.
The space $\mathcal{R}\bigl(\Phi_{t,S}(\cdot,z_{t+1})^\star\bigr)$ has dimension at most $d_{t,S}^{\max}=k$ and is spanned by
\[
\left\{\Phi_{t,S}(\cdot,z_{t+1})^\star(e_a)\right\}_{a=1}^{q_{t,S}}.
\]
Hence $G$ can be written as a linear combination of at most $k$ vectors from this list.
Therefore $\mathcal{B}_{m_{t,S},n_{t,S}}(G)$ belongs to one of the images of $M_{j,I}$.
Thus $\mathcal{G}_{t,S}$ is $C^1$-parameterizable with dimension at most $\beta_{t+1}+d_{t,S}^{\max}$.
\end{proof}

We now combine the KKT identity, the gradient-range lemma, and the parameterization lemma.
The result is the stagewise version of the master genericity theorem:
\begin{theorem}[Generic rank-deficiency for $L$-stage perturbed LoRA]\label{thm:rankdef-L-stage}
Assume the objective has $L$-stage blockwise linear map dependence, is differentiable, and Assumption~\ref{ast:multistage-c1} holds.
Fix factor ranks $r_{t,S}$ and regularization parameters $\lambda_{t,S}>0$.
If
\[
\frac{r_{t,S}(r_{t,S}+1)}{2}>\beta_{t+1}+d_{t,S}^{\max}\ \forall t\in[L],S\in\mathcal{S}_t,
\]
then for Lebesgue-a.e.\ choice of PSD perturbations $\{P_{t,S}\}$, every first-order stationary point of the perturbed LoRA objective satisfies
\[
\rank(R_{t,S})<r_{t,S}\forall t\in[L],S\in\mathcal{S}_t.
\]
\end{theorem}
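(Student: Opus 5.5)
The plan is to combine Lemma~\ref{lem:KKT-perturbed-lora}, Lemma~\ref{lem:grad-range-L-stage}, Lemma~\ref{lem:reach-move-param-L} and Theorem~\ref{thm:generic-rankdef-master}, one block at a time, and then intersect finitely many full-measure sets.

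Fix a first-order stationary point $\{(U_{t,S},V_{t,S})\}$ of the perturbed objective and set $T_{t,S}=U_{t,S}V_{t,S}^\top$. Let $z_{t+1}$ denote the states produced by the forward recursion of Definition~\ref{def:L-stage-block-lin-meas}.

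\emph{Step 1 (KKT form).} By Lemma~\ref{lem:KKT-perturbed-lora}, applied to the flattened block family, each block satisfies
\[
\bigl(\mathcal{B}_{m_{t,S},n_{t,S}}(G_{t,S})+\lambda_{t,S}I+P_{t,S}\bigr)R_{t,S}=0,
\]
where $G_{t,S}=\nabla_{T_{t,S}}\mathcal{L}$.

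\emph{Step 2 (gradient lies in a moving range).} Lemma~\ref{lem:grad-range-L-stage} gives $G_{t,S}\in\mathcal{R}(\Phi_{t,S}(\cdot,z_{t+1})^\star)$. The state $z_{t+1}$ is produced by actual parameters, so it lies in $\mathcal{Z}_{t+1}^{\mathrm{reach}}$. To see this, induct downward from the singleton $\mathcal{Z}_{L+1}$. At each stage the arguments $u_{\tau,S}=\Phi_{\tau,S}(T_{\tau,S},z_{\tau+1})$ belong to $\mathcal{R}(\Phi_{\tau,S}(\cdot,z_{\tau+1}))$, which matches Definition~\ref{def:reach-z}. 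It follows that $\mathcal{B}(G_{t,S})\in\mathcal{G}_{t,S}$, the set defined in Lemma~\ref{lem:reach-move-param-L}.

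\emph{Step 3 (dimension count and genericity).} By Lemma~\ref{lem:reach-move-param-L}(2), $\mathcal{G}_{t,S}$ is $C^1$-parameterizable with dimension at most $p_{t,S}:=\beta_{t+1}+d_{t,S}^{\max}$. Since $r_{t,S}(r_{t,S}+1)/2>p_{t,S}$ by hypothesis, Theorem~\ref{thm:generic-rankdef-master} applies with $d=m_{t,S}+n_{t,S}$, $\mathcal{G}=\mathcal{G}_{t,S}$ and $\lambda=\lambda_{t,S}$. It yields a Lebesgue-null set $\mathcal{N}_{t,S}\subseteq\mathbb{S}_+^{m_{t,S}+n_{t,S}}$. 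For every $P_{t,S}\notin\mathcal{N}_{t,S}$, any $R$ with $(G+\lambda_{t,S}I+P_{t,S})R=0$ and $G\in\mathcal{G}_{t,S}$ satisfies $\rank(R)<r_{t,S}$.

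The set $\mathcal{N}_{t,S}$ depends only on the block and not on the stationary point. This is the key point: $\mathcal{G}_{t,S}$ is a fixed family that does not involve the perturbations $P$. The reachable sets are defined without reference to $P$, since $P$ enters only the regularizer and not the forward map.

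\emph{Step 4 (assembling the blocks).} The exceptional set for the tuple $\{P_{t,S}\}$ is $\bigcup_{t,S}\bigl(\mathcal{N}_{t,S}\times\prod_{(t',S')\neq(t,S)}\mathbb{S}_+\bigr)$. This is a finite union of product-null sets, so it is null in the product space. Outside it, Steps 1--3 apply simultaneously to every block of every stationary point, which gives $\rank(R_{t,S})<r_{t,S}$ for all $(t,S)$.

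\emph{Main obstacle.} The delicate step is Step 3, and specifically checking that the exceptional set is chosen uniformly over all stationary points. The coupling between blocks through $z_{t+1}$ and through the gradient must not make $\mathcal{G}$ depend on $P$. This works because $\mathcal{G}_{t,S}$ is built from all reachable states and the full adjoint range, which over-covers every possible gradient. Differentiability of $\mathcal{L}$, needed in Lemmas~\ref{lem:KKT-perturbed-lora} and~\ref{lem:grad-range-L-stage}, comes from the standing hypothesis. Assumption~\ref{ast:multistage-c1} is used only through Lemma~\ref{lem:reach-move-param-L}.
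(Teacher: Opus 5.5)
Your proposal is correct and follows the paper's proof essentially step for step: the block KKT equation from Lemma~\ref{lem:KKT-perturbed-lora}, the gradient lying in the moving adjoint range by Lemma~\ref{lem:grad-range-L-stage}, the dimension bound from Lemma~\ref{lem:reach-move-param-L}, a blockwise application of Theorem~\ref{thm:generic-rankdef-master}, and a final intersection of finitely many full-measure sets. Your induction showing that the induced state $z_{t+1}$ is reachable, and your remark that each exceptional set $\mathcal{N}_{t,S}$ depends neither on $P$ nor on the stationary point, make explicit two facts the paper uses only implicitly.
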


\begin{proof}[Proof of Theorem~\ref{thm:rankdef-L-stage}]
Fix a stage/block $(t,S)$.
At any first-order stationary point, Lemma~\ref{lem:KKT-perturbed-lora} gives
\[
\bigl(\mathcal{B}_{m_{t,S},n_{t,S}}(G_{t,S})+\lambda_{t,S}I+P_{t,S}\bigr)R_{t,S}=0,
\]
where $G_{t,S}=\nabla_{T_{t,S}}\mathcal{L}$.
By Lemma~\ref{lem:grad-range-L-stage}, $G_{t,S}\in\mathcal{R}(\Phi_{t,S}(\cdot,z_{t+1})^\star)$ for the reachable state $z_{t+1}$ induced by the stationary point.
Hence $\mathcal{B}_{m_{t,S},n_{t,S}}(G_{t,S})\in\mathcal{G}_{t,S}$.
By Lemma~\ref{lem:reach-move-param-L}, the family $\mathcal{G}_{t,S}$ is $C^1$-parameterizable with dimension at most $\beta_{t+1}+d_{t,S}^{\max}$.
For the block $(t,S)$, define $d_{t,S}^{\mathrm{blk}}\coloneqq m_{t,S}+n_{t,S}$.
By Lemma~\ref{lem:reach-move-param-L}, the family $\mathcal{G}_{t,S}$ is contained in a finite union of $C^1$ images of dimension at most $\beta_{t+1}+d_{t,S}^{\max}$.
Since
\[
\frac{r_{t,S}(r_{t,S}+1)}{2}>\beta_{t+1}+d_{t,S}^{\max},
\]
Theorem~\ref{thm:generic-rankdef-master}, applied with $d=d_{t,S}^{\mathrm{blk}}$, $r=r_{t,S}$, $\lambda=\lambda_{t,S}$, and $\mathcal{G}=\mathcal{G}_{t,S}$, gives a full-measure set $\Omega_{t,S}\subseteq\mathbb{S}_+^{m_{t,S}+n_{t,S}}$ such that, whenever $P_{t,S}\in\Omega_{t,S}$, no matrix $R_{t,S}\in\R^{(m_{t,S}+n_{t,S})\times r_{t,S}}$ with $\rank(R_{t,S})=r_{t,S}$ can satisfy the block KKT equation
\[
\bigl(\mathcal{B}_{m_{t,S},n_{t,S}}(G)+\lambda_{t,S}I+P_{t,S}\bigr)R_{t,S}=0
\]
for any $\mathcal{B}_{m_{t,S},n_{t,S}}(G)\in\mathcal{G}_{t,S}$.

Taking the product of the full-measure sets $\Omega_{t,S}$ over the finite block family preserves full measure.
For any perturbation tuple in this product set, every first-order stationary point satisfies the displayed KKT equation for each block by Lemma~\ref{lem:KKT-perturbed-lora}, with its gradient block belonging to $\mathcal{G}_{t,S}$ by Lemma~\ref{lem:grad-range-L-stage}.
Therefore every first-order stationary point satisfies $\rank(R_{t,S})<r_{t,S}$ for every $t\in[L]$ and $S\in\mathcal{S}_t$.
\end{proof}
The attention corollary is obtained by evaluating the bottleneck dimension for the concrete map $T\mapsto TH_{t,S}(z_{t+1})$.
\begin{proof}[Proof of Corollary~\ref{cor:rankdef-L-attn}]
For an adapted attention projection block $\alpha=(a,S)$, the stagewise linear map has the form $\Phi_{t,\alpha}(T,z_{t+1})=TH_{t,S}(z_{t+1})$.
Therefore
\[
d_{t,\alpha}(z_{t+1})=\dim\mathcal{R}(\Phi_{t,\alpha}(\cdot,z_{t+1}))=m_{t,\alpha}\rank(H_{t,S}(z_{t+1})).
\]
Taking the supremum over $z_{t+1}\in\mathcal{Z}_{t+1}^{\mathrm{reach}}$ gives $d_{t,\alpha}^{\max}=m_{t,\alpha}r_{H,t,\alpha}^{\max}$.
Substituting this identity into Theorem~\ref{thm:rankdef-L-stage} gives the stated result.
\end{proof}

Now we prove the limitation result from Section~\ref{sec:rankdef}.

\begin{proof}[Proof of Proposition~\ref{prop:nogo-new}]
Choose any constant $a>\lambda+\varepsilon$ and set
\[
g(t)\coloneqq a(t^2-1)^2.
\]
Then $g$ is smooth and nonconvex, $g(0)=a$, $g(1)=0$, and $g'(0)=0$. 
Write $R=\begin{bmatrix}U\\V\end{bmatrix}$ with $U=(u_1,\ldots,u_r)$ and $V=(v_1,\ldots,v_r)$.
Let
\[
t(R)\coloneqq UV^\top=\sum_{k=1}^ru_kv_k.
\]
Under the column-wise vectorization $r'=\mathrm{vec}(R)\in\R^{2r}$, the chain rule gives
\[
\nabla_{r'}^2(g\circ t)(0)=g''(t(0))\nabla t(0)\nabla t(0)^\top+g'(t(0))\nabla^2t(0)=0,
\]
because $t(0)=0,\nabla t(0)=0$, and $g'(0)=0$.
Moreover,
\[
\nabla_{r'}^2\left(\frac{\lambda}{2}\|R\|_F^2\right)=\lambda I_{2r},\nabla_{r'}^2\left(\frac12\langle P,RR^\top\rangle\right)=I_r\otimes P.
\]
The perturbed objective is
\[
F_P(R)\coloneqq g(t(R))+\frac{\lambda}{2}\|R\|_F^2+\frac12\langle P,RR^\top\rangle.
\]
Therefore $\nabla_{r'}^2F_P(0)=\lambda I_{2r}+I_r\otimes P\succeq\lambda I_{2r}\succ0$, so $R=0$ is a strict local minimum.
Now set
\[
R_1\coloneqq
\begin{bmatrix}
1&0&\cdots&0\\
1&0&\cdots&0
\end{bmatrix}.
\]
Then
\[
t(R_1)=1,\|R_1\|_F^2=2,R_1R_1^\top=\begin{bmatrix}1&1\\1&1\end{bmatrix},\|R_1R_1^\top\|_F=2.
\]
Hence
\begin{align*}
F_P(R_1)&=g(1)+\frac{\lambda}{2}\|R_1\|_F^2+\frac12\langle P,R_1R_1^\top\rangle\\
&=\lambda+\frac12\langle P,R_1R_1^\top\rangle\\
&\le\lambda+\frac12\|P\|_F\|R_1R_1^\top\|_F\\
&=\lambda+\|P\|_F\\
&\le\lambda+\varepsilon<a=F_P(0).
\end{align*}
Thus, $R=0$ is not global.
\end{proof}

\subsection{Proofs for Section~\ref{sec:transfer-gen}}\label{sec:proof-transfer-gen}

\begin{lemma}[{\citep[Lemma~5.1]{recht2010guaranteed}}]\label{lem:regular}
For any $T\in\R^{m\times n}$ and any integer $r\ge\rank(T)$,
\[
\|T\|_\star=\frac{1}{2}\min_{\substack{U\in\R^{m\times r},V\in\R^{n\times r}\\UV^\top=T}}\left\{\|U\|_F^2+\|V\|_F^2\right\}.
\]
Equivalently,
\[
\|T\|_\star=\frac{1}{2}\min_{\substack{U\in\R^{m\times r},V\in\R^{n\times r}\\r\in\N,UV^\top=T}}\left\{\|U\|_F^2+\|V\|_F^2\right\}.
\]
\end{lemma}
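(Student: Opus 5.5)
The first equality is the classical variational formula from the SVD, and the second follows from the first. I will prove the inequality $\|UV^\top\|_\star\le\frac12(\|U\|_F^2+\|V\|_F^2)$ for every factorization, and then exhibit a factorization that attains equality using exactly $r$ columns.

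\textbf{Lower bound.} Fix any $U\in\R^{m\times r}$ and $V\in\R^{n\times r}$ with $UV^\top=T$. Use the duality $\|T\|_\star=\sup_{\|M\|_2\le1}\langle M,T\rangle_F$; alternatively, take $M=P_UQ_V^\top$ built from a compact SVD $T=P_U\Sigma Q_V^\top$, which gives $\|T\|_\star=\tr(P_U^\top U V^\top Q_V)$. For such $M$,
\[
\langle M,UV^\top\rangle_F=\tr\bigl((P_U^\top U)(Q_V^\top V)^\top\bigr)\le\|P_U^\top U\|_F\|Q_V^\top V\|_F\le\|U\|_F\|V\|_F,
\]
using Cauchy--Schwarz and the fact that $P_U,Q_V$ have orthonormal columns. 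AM--GM then gives $\|T\|_\star\le\frac12(\|U\|_F^2+\|V\|_F^2)$. This is exactly the inequality already used in the proof of Lemma~\ref{lem:lifting-equivalence}, so I may simply cite it.

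\textbf{Attainment.} Let $k=\rank(T)\le r$ and write $T=U_T\Sigma V_T^\top$ for a compact SVD. Set $U=[U_T\Sigma^{1/2},\,0]$ and $V=[V_T\Sigma^{1/2},\,0]$, padding with $r-k$ zero columns. Then $UV^\top=T$ and $\|U\|_F^2=\|V\|_F^2=\tr\Sigma=\|T\|_\star$, which is the balanced factorization of Lemma~\ref{lem:lifting-equivalence} padded with zeros. Hence the infimum equals $\|T\|_\star$ and is a minimum. The feasible set is closed and the objective is coercive, but existence already follows from this explicit minimizer, so no compactness argument is needed.

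\textbf{Second form.} When $r$ is also free, every $r\in\N$ with a feasible factorization has $r\ge\rank(T)$. This holds because $\rank(UV^\top)\le r$, so the lower bound above applies for each such $r$, and the value $\|T\|_\star$ is attained at $r=\rank(T)$. Nothing here is a real obstacle. The only point needing care is the zero-padding, which ensures that the fixed-$r$ statement holds for every $r\ge\rank(T)$ and not just $r=\rank(T)$.
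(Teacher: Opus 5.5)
Your proof is correct and is essentially the argument the paper relies on. The paper only cites \citet{recht2010guaranteed}, but it uses your two ingredients elsewhere: the inequality $\|UV^\top\|_\star\le\frac12(\|U\|_F^2+\|V\|_F^2)$ in the proof of Lemma~\ref{lem:lifting-equivalence}, and the zero-padded balanced SVD factorization in the proof of Theorem~\ref{thm:transfer}. Do not ``simply cite'' Lemma~\ref{lem:lifting-equivalence} for the inequality, though: the paper asserts it there without proof, and it is essentially the present lemma, so your Cauchy--Schwarz plus AM--GM argument is what makes the proof self-contained.
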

We finally prove that small PSD perturbations do not substantially change the trace-norm objective value attained by a global factorized solution.
\begin{proof}[Proof of Theorem~\ref{thm:transfer}]
The perturbed LoRA objective is lower semicontinuous because $\mathcal{L}$ is lower semicontinuous and the maps $(U_j,V_j)\mapsto U_jV_j^\top$ are continuous. It is coercive because
\[
\sum_{j\in\mathcal{I}}\frac{\lambda_j}{2}\bigl(\|U_j\|_F^2+\|V_j\|_F^2\bigr)
\]
is coercive, and each perturbation term is nonnegative. Since $\mathcal{L}$ is bounded below, the perturbed LoRA objective attains a global minimum.

For each $S$, write
\[
\widehat{R}_j\coloneqq\begin{bmatrix}\widehat{U}_j\\\widehat{V}_j\end{bmatrix}.
\]
By Lemma~\ref{lem:regular}, $\|\widehat{T}_j\|_\star\le\frac12\bigl(\|\widehat{U}_j\|_F^2+\|\widehat{V}_j\|_F^2\bigr)$.
Since $P_j\succeq0$, $\langle P_j,\widehat{R}_j\widehat{R}_j^\top\rangle\ge0$.
Therefore
\[
\mathcal{L}(\{\widehat{T}_j\})+\sum_{j\in\mathcal{I}}\lambda_j\|\widehat{T}_j\|_\star\le\mathcal{L}^{\mathrm{lora}}_{\{\lambda_j\},\{P_j\}}(\{\widehat{U}_j,\widehat{V}_j\}).
\]
For each $S$, since $\rank(T_j^\star)\le r_j$, choose a balanced SVD factorization padded with zero columns if necessary, so that
\[
T_j^\star=U_j^\star V_j^{\star\top}\text{ and }\|T_j^\star\|_\star=\frac12\bigl(\|U_j^\star\|_F^2+\|V_j^\star\|_F^2\bigr).
\]
By global optimality of $\{(\widehat{U}_j,\widehat{V}_j)\}$ for the perturbed factorized objective,
\[
\mathcal{L}^{\mathrm{lora}}_{\{\lambda_j\},\{P_j\}}(\{\widehat{U}_j,\widehat{V}_j\})\le\mathcal{L}^{\mathrm{lora}}_{\{\lambda_j\},\{P_j\}}(\{U_j^\star,V_j^\star\}).
\]
The right-hand side equals
\[
\mathcal{L}(\{T_j^\star\})+\sum_{j\in\mathcal{I}}\lambda_j\|T_j^\star\|_\star+\frac12\sum_{j\in\mathcal{I}}\langle P_j,R_j^\star R_j^{\star\top}\rangle.
\]
Since $\{T_j^\star\}$ is a global minimizer of the nuclear-norm-regularized problem, combining the preceding displays proves the first inequality.

For the final bound, use $P_j\succeq0$, $\|P_j\|_2\le\varepsilon_j$, and
\[
\tr(R_j^\star R_j^{\star\top})=\|U_j^\star\|_F^2+\|V_j^\star\|_F^2=2\|T_j^\star\|_\star.
\]
Then
\[
\frac12\langle P_j,R_j^\star R_j^{\star\top}\rangle\le\frac12\|P_j\|_2\tr(R_j^\star R_j^{\star\top})\le\varepsilon_j\|T_j^\star\|_\star.
\]
Summing over $j\in\mathcal{I}$ gives the claimed perturbation bound.
\end{proof}

\section{Experiment details}\label{sec:experiment-details}

\begin{table}[htbp]
\centering
\caption{Hyperparameters for LoRA fine-tuning and full fine-tuning on NLP tasks.}
\label{tab:nlp_hyperparameters}

\begin{tabular}{llccc}
\toprule
\multicolumn{5}{l}{\textbf{LoRA fine-tuning}} \\
\midrule
Task/Dataset & & MNLI & RTE & SST-2 \\
\midrule
Batch size & & 8 & 8 & 8 \\
\multirow{2}{*}{Learning rate} 
& Head & 5e-4 & 1e-4 & 1e-4 \\
& Attention & 8e-5 & 1e-3 & 5e-4 \\
LoRA regularization $\lambda$ & & 1e-4 & 1e-4 & 1e-4 \\
\bottomrule
\end{tabular}

\vspace{0.6cm}

\begin{tabular}{lccc}
\toprule
\multicolumn{4}{l}{\textbf{Full fine-tuning}} \\
\midrule
Task/Dataset & MNLI & RTE & SST-2 \\
\midrule
Batch size & 8 & 8 & 8 \\
Learning rate & 2.32e-5 & 3.375e-4 & 2e-4 \\
Trace regularization $\lambda$ & 1e-4 & 1e-4 & 1e-4 \\
\bottomrule
\end{tabular}
\end{table}

\paragraph{Proxy estimation.}
The proxy rank is chosen large enough to exactly represent the selected head block; for Qwen2.5-1.5B, this is 128, matching the query-head dimension $d_k$.

The proxy is trained with the same regularized objective as the LoRA sweep, using $\lambda=10^{-4}$, AdamW learning rate $5\times 10^{-4}$, and best-checkpoint selection by the lowest training $\mathcal{L}^{\mathrm{lora}}_\lambda$.
After training, the LoRA update is merged into the base linear weights, and the model is run in evaluation mode over the training split.
We then estimate the downstream activation rank using $\rank(H)=\rank(HH^\top)$ to lower the computation by considering the rank of a smaller matrix.
The numerical rank is computed by the number of eigenvalues of $HH^\top$ that are larger than the eigenvalue threshold $10^{-4}$.

Let $\hat{r}_{\mathrm{up},S}$ denote the numerical rank of the upstream frozen-feature subspace for block $S$, and let $\hat{r}_{\mathrm{down},S}$ denote the numerical rank of the downstream observable subspace estimated after the trace-norm proxy optimization.
The final predicted threshold is
\[
r_{\mathrm{threshold}}=\max_S\min\{\hat{r}_{\mathrm{up},S},\hat{r}_{\mathrm{down},S}\}.
\]
This proxy approximates the optimizer-dependent quantity appearing in Theorem~\ref{thm:rank-attn}.

Before the LoRA sweep, the classification head is warmed up for 10 epochs, then frozen.
For the high-rank bound case, we use 100 samples, and for the low-rank bound case, we use 4 samples for training.
\newpage

\section*{NeurIPS Paper Checklist}

The checklist is designed to encourage best practices for responsible machine learning research, addressing issues of reproducibility, transparency, research ethics, and societal impact. Do not remove the checklist: {\bf The papers not including the checklist will be desk rejected.} The checklist should follow the references and follow the (optional) supplemental material.  The checklist does NOT count towards the page
limit. 

Please read the checklist guidelines carefully for information on how to answer these questions. For each question in the checklist:
\begin{itemize}
    \item You should answer \answerYes{}, \answerNo{}, or \answerNA{}.
    \item \answerNA{} means either that the question is Not Applicable for that particular paper or the relevant information is Not Available.
    \item Please provide a short (1--2 sentence) justification right after your answer (even for \answerNA). 
\end{itemize}

{\bf The checklist answers are an integral part of your paper submission.} They are visible to the reviewers, area chairs, senior area chairs, and ethics reviewers. You will also be asked to include it (after eventual revisions) with the final version of your paper, and its final version will be published with the paper.

The reviewers of your paper will be asked to use the checklist as one of the factors in their evaluation. While \answerYes{} is generally preferable to \answerNo{}, it is perfectly acceptable to answer \answerNo{} provided a proper justification is given (e.g., error bars are not reported because it would be too computationally expensive'' or ``we were unable to find the license for the dataset we used''). In general, answering \answerNo{} or \answerNA{} is not grounds for rejection. While the questions are phrased in a binary way, we acknowledge that the true answer is often more nuanced, so please just use your best judgment and write a justification to elaborate. All supporting evidence can appear either in the main paper or the supplemental material, provided in appendix. If you answer \answerYes{} to a question, in the justification please point to the section(s) where related material for the question can be found.

IMPORTANT, please:
\begin{itemize}
    \item {\bf Delete this instruction block, but keep the section heading ``NeurIPS Paper Checklist"},
    \item  {\bf Keep the checklist subsection headings, questions/answers and guidelines below.}
    \item {\bf Do not modify the questions and only use the provided macros for your answers}.
\end{itemize}


\begin{enumerate}

\item {\bf Claims}
    \item[] Question: Do the main claims made in the abstract and introduction accurately reflect the paper's contributions and scope?
    \item[] Answer: \answerYes{} 
    \item[] Justification: We provided technical proof and experimental evidence to support our claims.
    \item[] Guidelines:
    \begin{itemize}
        \item The answer \answerNA{} means that the abstract and introduction do not include the claims made in the paper.
        \item The abstract and/or introduction should clearly state the claims made, including the contributions made in the paper and important assumptions and limitations. A \answerNo{} or \answerNA{} answer to this question will not be perceived well by the reviewers. 
        \item The claims made should match theoretical and experimental results, and reflect how much the results can be expected to generalize to other settings. 
        \item It is fine to include aspirational goals as motivation, as long as it is clear that these goals are not attained by the paper. 
    \end{itemize}

\item {\bf Limitations}
    \item[] Question: Does the paper discuss the limitations of the work performed by the authors?
    \item[] Answer: \answerYes{} 
    \item[] Justification: We discuss the limitations in Section~\ref{sec:conclusion}.
    \item[] Guidelines:
    \begin{itemize}
        \item The answer \answerNA{} means that the paper has no limitation, while the answer \answerNo{} means that the paper has limitations, but those are not discussed in the paper. 
        \item The authors are encouraged to create a separate ``Limitations'' section in their paper.
        \item The paper should point out any strong assumptions and how robust the results are to violations of these assumptions (e.g., independence assumptions, noiseless settings, model well-specification, asymptotic approximations only holding locally). The authors should reflect on how these assumptions might be violated in practice and what the implications would be.
        \item The authors should reflect on the scope of the claims made, e.g., if the approach was only tested on a few datasets or with a few runs. In general, empirical results often depend on implicit assumptions, which should be articulated.
        \item The authors should reflect on the factors that influence the performance of the approach. For example, a facial recognition algorithm may perform poorly when image resolution is low or images are taken in low lighting. Or a speech-to-text system might not be used reliably to provide closed captions for online lectures because it fails to handle technical jargon.
        \item The authors should discuss the computational efficiency of the proposed algorithms and how they scale with dataset size.
        \item If applicable, the authors should discuss possible limitations of their approach to address problems of privacy and fairness.
        \item While the authors might fear that complete honesty about limitations might be used by reviewers as grounds for rejection, a worse outcome might be that reviewers discover limitations that aren't acknowledged in the paper. The authors should use their best judgment and recognize that individual actions in favor of transparency play an important role in developing norms that preserve the integrity of the community. Reviewers will be specifically instructed to not penalize honesty concerning limitations.
    \end{itemize}

\item {\bf Theory assumptions and proofs}
    \item[] Question: For each theoretical result, does the paper provide the full set of assumptions and a complete (and correct) proof?
    \item[] Answer: \answerYes{} 
    \item[] Justification: The paper provided the full set of assumptions and complete proofs in Section~\ref{sec:setup}, Section~\ref{sec:results-new}, and Appendix.
    \item[] Guidelines:
    \begin{itemize}
        \item The answer \answerNA{} means that the paper does not include theoretical results. 
        \item All the theorems, formulas, and proofs in the paper should be numbered and cross-referenced.
        \item All assumptions should be clearly stated or referenced in the statement of any theorems.
        \item The proofs can either appear in the main paper or the supplemental material, but if they appear in the supplemental material, the authors are encouraged to provide a short proof sketch to provide intuition. 
        \item Inversely, any informal proof provided in the core of the paper should be complemented by formal proofs provided in appendix or supplemental material.
        \item Theorems and Lemmas that the proof relies upon should be properly referenced. 
    \end{itemize}

    \item {\bf Experimental result reproducibility}
    \item[] Question: Does the paper fully disclose all the information needed to reproduce the main experimental results of the paper to the extent that it affects the main claims and/or conclusions of the paper (regardless of whether the code and data are provided or not)?
    \item[] Answer: \answerYes{} 
    \item[] Justification: We provided experiment descriptions in the main paper and appendix.
    \item[] Guidelines:
    \begin{itemize}
        \item The answer \answerNA{} means that the paper does not include experiments.
        \item If the paper includes experiments, a \answerNo{} answer to this question will not be perceived well by the reviewers: Making the paper reproducible is important, regardless of whether the code and data are provided or not.
        \item If the contribution is a dataset and\slash or model, the authors should describe the steps taken to make their results reproducible or verifiable. 
        \item Depending on the contribution, reproducibility can be accomplished in various ways. For example, if the contribution is a novel architecture, describing the architecture fully might suffice, or if the contribution is a specific model and empirical evaluation, it may be necessary to either make it possible for others to replicate the model with the same dataset, or provide access to the model. In general. releasing code and data is often one good way to accomplish this, but reproducibility can also be provided via detailed instructions for how to replicate the results, access to a hosted model (e.g., in the case of a large language model), releasing of a model checkpoint, or other means that are appropriate to the research performed.
        \item While NeurIPS does not require releasing code, the conference does require all submissions to provide some reasonable avenue for reproducibility, which may depend on the nature of the contribution. For example
        \begin{enumerate}
            \item If the contribution is primarily a new algorithm, the paper should make it clear how to reproduce that algorithm.
            \item If the contribution is primarily a new model architecture, the paper should describe the architecture clearly and fully.
            \item If the contribution is a new model (e.g., a large language model), then there should either be a way to access this model for reproducing the results or a way to reproduce the model (e.g., with an open-source dataset or instructions for how to construct the dataset).
            \item We recognize that reproducibility may be tricky in some cases, in which case authors are welcome to describe the particular way they provide for reproducibility. In the case of closed-source models, it may be that access to the model is limited in some way (e.g., to registered users), but it should be possible for other researchers to have some path to reproducing or verifying the results.
        \end{enumerate}
    \end{itemize}

\item {\bf Open access to data and code}
    \item[] Question: Does the paper provide open access to the data and code, with sufficient instructions to faithfully reproduce the main experimental results, as described in supplemental material?
    \item[] Answer: \answerNo{} 
    \item[] Justification: The experiments use GLUE tasks and an open-source Qwen2.5 checkpoint, but the current paper does not yet include released code.
    \item[] Guidelines:
    \begin{itemize}
        \item The answer \answerNA{} means that paper does not include experiments requiring code.
        \item Please see the NeurIPS code and data submission guidelines (\url{https://neurips.cc/public/guides/CodeSubmissionPolicy}) for more details.
        \item While we encourage the release of code and data, we understand that this might not be possible, so \answerNo{} is an acceptable answer. Papers cannot be rejected simply for not including code, unless this is central to the contribution (e.g., for a new open-source benchmark).
        \item The instructions should contain the exact command and environment needed to run to reproduce the results. See the NeurIPS code and data submission guidelines (\url{https://neurips.cc/public/guides/CodeSubmissionPolicy}) for more details.
        \item The authors should provide instructions on data access and preparation, including how to access the raw data, preprocessed data, intermediate data, and generated data, etc.
        \item The authors should provide scripts to reproduce all experimental results for the new proposed method and baselines. If only a subset of experiments are reproducible, they should state which ones are omitted from the script and why.
        \item At submission time, to preserve anonymity, the authors should release anonymized versions (if applicable).
        \item Providing as much information as possible in supplemental material (appended to the paper) is recommended, but including URLs to data and code is permitted.
    \end{itemize}

\item {\bf Experimental setting/details}
    \item[] Question: Does the paper specify all the training and test details (e.g., data splits, hyperparameters, how they were chosen, type of optimizer) necessary to understand the results?
    \item[] Answer: \answerYes{} 
    \item[] Justification: We provided details for experiments in the paper.
    \item[] Guidelines:
    \begin{itemize}
        \item The answer \answerNA{} means that the paper does not include experiments.
        \item The experimental setting should be presented in the core of the paper to a level of detail that is necessary to appreciate the results and make sense of them.
        \item The full details can be provided either with the code, in appendix, or as supplemental material.
    \end{itemize}

\item {\bf Experiment statistical significance}
    \item[] Question: Does the paper report error bars suitably and correctly defined or other appropriate information about the statistical significance of the experiments?
    \item[] Answer: \answerNo{} 
    \item[] Justification: The current figures show single optimization trajectories without error bars.
    \item[] Guidelines:
    \begin{itemize}
        \item The answer \answerNA{} means that the paper does not include experiments.
        \item The authors should answer \answerYes{} if the results are accompanied by error bars, confidence intervals, or statistical significance tests, at least for the experiments that support the main claims of the paper.
        \item The factors of variability that the error bars are capturing should be clearly stated (for example, train/test split, initialization, random drawing of some parameter, or overall run with given experimental conditions).
        \item The method for calculating the error bars should be explained (closed form formula, call to a library function, bootstrap, etc.)
        \item The assumptions made should be given (e.g., Normally distributed errors).
        \item It should be clear whether the error bar is the standard deviation or the standard error of the mean.
        \item It is OK to report 1-sigma error bars, but one should state it. The authors should preferably report a 2-sigma error bar than state that they have a 96\% CI, if the hypothesis of Normality of errors is not verified.
        \item For asymmetric distributions, the authors should be careful not to show in tables or figures symmetric error bars that would yield results that are out of range (e.g., negative error rates).
        \item If error bars are reported in tables or plots, the authors should explain in the text how they were calculated and reference the corresponding figures or tables in the text.
    \end{itemize}

\item {\bf Experiments compute resources}
    \item[] Question: For each experiment, does the paper provide sufficient information on the computer resources (type of compute workers, memory, time of execution) needed to reproduce the experiments?
    \item[] Answer: \answerNo{} 
    \item[] Justification: The draft does not report GPU type, memory, runtime per run, and total compute.
    \item[] Guidelines:
    \begin{itemize}
        \item The answer \answerNA{} means that the paper does not include experiments.
        \item The paper should indicate the type of compute workers CPU or GPU, internal cluster, or cloud provider, including relevant memory and storage.
        \item The paper should provide the amount of compute required for each of the individual experimental runs as well as estimate the total compute. 
        \item The paper should disclose whether the full research project required more compute than the experiments reported in the paper (e.g., preliminary or failed experiments that didn't make it into the paper). 
    \end{itemize}
    
\item {\bf Code of ethics}
    \item[] Question: Does the research conducted in the paper conform, in every respect, with the NeurIPS Code of Ethics \url{https://neurips.cc/public/EthicsGuidelines}?
    \item[] Answer: \answerYes{} 
    \item[] Justification: Our paper follows the NeurIPS Code of Ethics.
    \item[] Guidelines:
    \begin{itemize}
        \item The answer \answerNA{} means that the authors have not reviewed the NeurIPS Code of Ethics.
        \item If the authors answer \answerNo, they should explain the special circumstances that require a deviation from the Code of Ethics.
        \item The authors should make sure to preserve anonymity (e.g., if there is a special consideration due to laws or regulations in their jurisdiction).
    \end{itemize}

\item {\bf Broader impacts}
    \item[] Question: Does the paper discuss both potential positive societal impacts and negative societal impacts of the work performed?
    \item[] Answer: \answerYes{} 
    \item[] Justification: This paper identifies several phenomena in general parameter-efficient fine-tuning methods that deepen our understanding of LoRA and may inform the development of more capable small models. Advances in these methods could have both positive and negative societal impacts.
    \item[] Guidelines:
    \begin{itemize}
        \item The answer \answerNA{} means that there is no societal impact of the work performed.
        \item If the authors answer \answerNA{} or \answerNo, they should explain why their work has no societal impact or why the paper does not address societal impact.
        \item Examples of negative societal impacts include potential malicious or unintended uses (e.g., disinformation, generating fake profiles, surveillance), fairness considerations (e.g., deployment of technologies that could make decisions that unfairly impact specific groups), privacy considerations, and security considerations.
        \item The conference expects that many papers will be foundational research and not tied to particular applications, let alone deployments. However, if there is a direct path to any negative applications, the authors should point it out. For example, it is legitimate to point out that an improvement in the quality of generative models could be used to generate Deepfakes for disinformation. On the other hand, it is not needed to point out that a generic algorithm for optimizing neural networks could enable people to train models that generate Deepfakes faster.
        \item The authors should consider possible harms that could arise when the technology is being used as intended and functioning correctly, harms that could arise when the technology is being used as intended but gives incorrect results, and harms following from (intentional or unintentional) misuse of the technology.
        \item If there are negative societal impacts, the authors could also discuss possible mitigation strategies (e.g., gated release of models, providing defenses in addition to attacks, mechanisms for monitoring misuse, mechanisms to monitor how a system learns from feedback over time, improving the efficiency and accessibility of ML).
    \end{itemize}
    
\item {\bf Safeguards}
    \item[] Question: Does the paper describe safeguards that have been put in place for responsible release of data or models that have a high risk for misuse (e.g., pre-trained language models, image generators, or scraped datasets)?
    \item[] Answer: \answerNA{} 
    \item[] Justification: Our work does not pose such risks.
    \item[] Guidelines:
    \begin{itemize}
        \item The answer \answerNA{} means that the paper poses no such risks.
        \item Released models that have a high risk for misuse or dual-use should be released with necessary safeguards to allow for controlled use of the model, for example by requiring that users adhere to usage guidelines or restrictions to access the model or implementing safety filters. 
        \item Datasets that have been scraped from the Internet could pose safety risks. The authors should describe how they avoided releasing unsafe images.
        \item We recognize that providing effective safeguards is challenging, and many papers do not require this, but we encourage authors to take this into account and make a best faith effort.
    \end{itemize}

\item {\bf Licenses for existing assets}
    \item[] Question: Are the creators or original owners of assets (e.g., code, data, models), used in the paper, properly credited and are the license and terms of use explicitly mentioned and properly respected?
    \item[] Answer: \answerYes{} 
    \item[] Justification: We have properly cited papers and resources used in our experiment.
    \item[] Guidelines:
    \begin{itemize}
        \item The answer \answerNA{} means that the paper does not use existing assets.
        \item The authors should cite the original paper that produced the code package or dataset.
        \item The authors should state which version of the asset is used and, if possible, include a URL.
        \item The name of the license (e.g., CC-BY 4.0) should be included for each asset.
        \item For scraped data from a particular source (e.g., website), the copyright and terms of service of that source should be provided.
        \item If assets are released, the license, copyright information, and terms of use in the package should be provided. For popular datasets, \url{paperswithcode.com/datasets} has curated licenses for some datasets. Their licensing guide can help determine the license of a dataset.
        \item For existing datasets that are re-packaged, both the original license and the license of the derived asset (if it has changed) should be provided.
        \item If this information is not available online, the authors are encouraged to reach out to the asset's creators.
    \end{itemize}

\item {\bf New assets}
    \item[] Question: Are new assets introduced in the paper well documented and is the documentation provided alongside the assets?
    \item[] Answer: \answerNA{} 
    \item[] Justification: Our paper does not release new assets.
    \item[] Guidelines:
    \begin{itemize}
        \item The answer \answerNA{} means that the paper does not release new assets.
        \item Researchers should communicate the details of the dataset\slash code\slash model as part of their submissions via structured templates. This includes details about training, license, limitations, etc. 
        \item The paper should discuss whether and how consent was obtained from people whose asset is used.
        \item At submission time, remember to anonymize your assets (if applicable). You can either create an anonymized URL or include an anonymized zip file.
    \end{itemize}

\item {\bf Crowdsourcing and research with human subjects}
    \item[] Question: For crowdsourcing experiments and research with human subjects, does the paper include the full text of instructions given to participants and screenshots, if applicable, as well as details about compensation (if any)? 
    \item[] Answer: \answerNA{} 
    \item[] Justification: We do not have experiments involving crowdsourcing or research with human subjects.
    \item[] Guidelines:
    \begin{itemize}
        \item The answer \answerNA{} means that the paper does not involve crowdsourcing nor research with human subjects.
        \item Including this information in the supplemental material is fine, but if the main contribution of the paper involves human subjects, then as much detail as possible should be included in the main paper. 
        \item According to the NeurIPS Code of Ethics, workers involved in data collection, curation, or other labor should be paid at least the minimum wage in the country of the data collector. 
    \end{itemize}

\item {\bf Institutional review board (IRB) approvals or equivalent for research with human subjects}
    \item[] Question: Does the paper describe potential risks incurred by study participants, whether such risks were disclosed to the subjects, and whether Institutional Review Board (IRB) approvals (or an equivalent approval/review based on the requirements of your country or institution) were obtained?
    \item[] Answer: \answerNA{} 
    \item[] Justification: We do not involve crowdsourcing or research with human subjects.
    \item[] Guidelines:
    \begin{itemize}
        \item The answer \answerNA{} means that the paper does not involve crowdsourcing nor research with human subjects.
        \item Depending on the country in which research is conducted, IRB approval (or equivalent) may be required for any human subjects research. If you obtained IRB approval, you should clearly state this in the paper. 
        \item We recognize that the procedures for this may vary significantly between institutions and locations, and we expect authors to adhere to the NeurIPS Code of Ethics and the guidelines for their institution. 
        \item For initial submissions, do not include any information that would break anonymity (if applicable), such as the institution conducting the review.
    \end{itemize}

\item {\bf Declaration of LLM usage}
    \item[] Question: Does the paper describe the usage of LLMs if it is an important, original, or non-standard component of the core methods in this research? Note that if the LLM is used only for writing, editing, or formatting purposes and does \emph{not} impact the core methodology, scientific rigor, or originality of the research, declaration is not required.
    \item[] Answer: \answerNA{} 
    \item[] Justification: LLMs have been employed solely for writing, editing, and formatting purposes. Their use does not affect the core methodology, scientific rigor, or originality of the research.
    \item[] Guidelines:
    \begin{itemize}
        \item The answer \answerNA{} means that the core method development in this research does not involve LLMs as any important, original, or non-standard components.
        \item Please refer to our LLM policy in the NeurIPS handbook for what should or should not be described.
    \end{itemize}

\end{enumerate}

\end{document}